%% file: main.tex
\documentclass[accepted]{uai2026} % after acceptance, for a revised version; 
\input{files/aux_}

\newcommand{\papertitle}{Nonlinear Axiomatic Attribution for Cooperative Games}

\title{\papertitle}

\author[1]{Weida~Li}
\author[1]{Zhuanghua~Liu}
\author[2,3]{Yaoliang~Yu}
\author[1]{Bryan~Kian~Hsiang~Low}
\affil[1]{
	Department of Computer Science\\
	National University of Singapore\\
	Republic of Singapore
}
\affil[2]{
	School of Computer Science\\
	University of Waterloo\\
	Canada
}
\affil[3]{%
	Vector Institute\\
	Canada
}

\begin{document}
	\maketitle
	
	\begin{abstract}
		The Shapley value is a widely used concept in attribution problems, as it uniquely satisfies the axioms of linearity, consistency, equal treatment, and efficiency. Often, the inclusion AUC metric is used to evaluate the quality of player rankings, in order to identify positively participating players.
		However, it can be established that the Shapley value is not always reliable for this purpose. The core issue lies in its linearity: the Shapley value acts as a linear operator with an excessively large null space, which is likely to contain non-negligible perturbations that remain indistinguishable to the operator. To address this limitation, we explore the design of nonlinear axiomatic attribution methods. Inspired by the least core, which is a popular nonlinear substitute for the Shapley value, we introduce a class of nonlinear attribution methods that retain the remaining necessary axioms. Each method yields a contribution vector that is the unique optimal solution to a minimization problem, which aims to approximate utility functions as faithfully as possible. In terms of the inclusion AUC metric, our experiments demonstrate the potential effectiveness of these methods compared to Shapley value variants that relax only the efficiency axiom. Our code is available at \url{https://github.com/watml/nonlinear-axiom}.
	\end{abstract}

	\input{files/intro}
	\input{files/background}
	\input{files/motivation}
	\input{files/results}

\input{files/experiments}

	\input{files/conclusion}

	\clearpage
	\begin{acknowledgements} % will be removed in pdf for initial submission,
		% (without ‘accepted’ option in \documentclass)
		% so you can already fill it to test with the
		% ‘accepted’ class option
		This research is supported by the National Research Foundation Singapore and the Singapore Ministry of Digital Development and Innovation, National AI Group under the AI Visiting Professorship Programme (award number AIVP-$2024$-$001$). 
		YY gratefully acknowledges NSERC and CIFAR for funding support.
	\end{acknowledgements}

	\bibliography{files/references}
	\bibliographystyle{plainnat}

	\newpage
	\onecolumn
	
	\title{\papertitle \\(Supplementary Material)}
	\maketitle
	
	\appendix
	\input{files/appendix}
	
\end{document}

%% file: files/aux_.tex
\usepackage[american]{babel}

\usepackage[utf8]{inputenc} % allow utf-8 input
\usepackage[T1]{fontenc}    % use 8-bit T1 fonts
\usepackage{url}            % simple URL typesetting
\usepackage{booktabs}       % professional-quality tables
\usepackage{amsfonts}       % blackboard math symbols
\usepackage{nicefrac}       % compact symbols for 1/2, etc.
\usepackage{microtype}      % microtypography
\usepackage{xcolor}         % colors

\usepackage[breaklinks,colorlinks=true,linkcolor=blue,citecolor=blue,urlcolor=blue]{hyperref}
\usepackage{dsfont}
\usepackage{amsmath}
\usepackage{amssymb}
\usepackage{wrapfig}
\usepackage{natbib}
\usepackage{mathtools}
\usepackage{bm}
\DeclareMathOperator*{\minimize}{minimize}
\DeclareMathOperator*{\maximize}{maximize}
\DeclareMathOperator*{\argmax}{arg\,max}
\DeclareMathOperator*{\argmin}{arg\,min}
\mathtoolsset{showonlyrefs}
\usepackage{flexisym}
\usepackage{dsfont}
\usepackage{amsthm}
\usepackage{thmtools}
\usepackage{graphicx}
\graphicspath{{figs/}}
\usepackage{enumitem}
\usepackage{algorithm}
\usepackage[linesnumbered,ruled,vlined,algo2e]{algorithm2e}
\usepackage{thm-restate}
\usepackage{stmaryrd}

\theoremstyle{plain}
\newtheorem{theorem}{Theorem}[section]

\newtheorem{lemma}[theorem]{Lemma}
\newtheorem{corollary}[theorem]{Corollary}
\theoremstyle{definition}

\theoremstyle{remark}

%% file: files/intro.tex
\section{Introduction}
\label{sec:intro}
Originally, the concept of the Shapley value was introduced by \citet{shapley1953value} to define a fair allocation of contributions among $n$ players who participate in a cooperative game, represented by a utility function $U \colon 2^{[n]} \to \mathbb{R}$, where $[n] \coloneq \{1, 2, \dots, n\}$. It is considered fair because it uniquely satisfies the axioms of linearity, consistency, equal treatment, and efficiency. The effectiveness of the Shapley value and its variants obtained by relaxing the efficiency axiom has been demonstrated by \citet{jia2019towards,kwon2022beta,li2023robust,wang2023data} in data attribution, where data are treated as players and $U(S)$ is defined as the performance of a model trained on the subset $S$ of available training data. One such variant, namely the Banzhaf value \citep{banzhaf1965weighted}, was used in context attribution by \citet{cohen2024contextcite}, where $U(S)$ represents the probability of a statement generated by a large language model when only the parts of the context text prescribed by $S$ are used. Its popularity has also been prominently observed in feature attribution, where $U(S)$ is the predicted value of the model when features outside the subset $S$ are considered missing \citep[e.g.,][]{lundberg2017unified,kwon2022weightedshap}.

However, \citet{kumar2020problems,yan2021if} have raised concerns about the imposed linearity axiom, as its necessity remains unclear.
Meanwhile, \citet{bilodeau2024impossibility,wang2024rethinking} have theoretically established that the use of the Shapley value can be unreliable by exploiting linearity.
Accordingly, a nonlinear alternative, the least core \citep{shapley1971cores}, has attracted attention \citep{yan2021if,gemp2024approximating}. In general, the least core may contain infinitely many contribution allocations, so we instead consider the egalitarian least core \citep{arin2008axiomatic,yan2021if}.\footnote{The least core is a convex subset of $\mathbb{R}^{n}$, and the egalitarian least core is defined as the element in the least core with the smallest $\ell_2$ norm.}
Still, the egalitarian least core is constrained by the efficiency axiom, which has been empirically shown to be unnecessary, and even unfavorable, by \citet{kwon2022beta,kwon2022weightedshap}.
By relaxing only the efficiency axiom, one arrives at the family of semi-values \citep{dubey1981value}, which includes Beta Shapley values \citep{kwon2022beta} and weighted Banzhaf values \citep{li2023robust}, uniquely characterized by the axioms of linearity, consistency, equal treatment, and monotonicity.\footnote{The Shapley value also satisfies the monotonicity axiom.} In this work, we theoretically explore the design of attribution methods by relaxing only the axioms of linearity and efficiency.

\paragraph{Our contributions.} Often, the quality of player rankings is evaluated using the inclusion Area Under the Curve (AUC) metric, when the goal is to identify positively contributing players \citep{petsiuk2018rise,kwon2022beta,covert2023learning}. 
Under this metric, it can be demonstrated (Lemmas~\ref{lem:two-sided least core} and~\ref{lem:liearized AUC}) that applying the egalitarian least core to rank players is equivalent to approximately maximizing the inclusion AUC by substituting an additive approximation of the utility function $U$. This perspective follows from the imposed efficiency axiom. Likewise, the use of semi-values in player ranking can be interpreted in a similar way (Corollary~\ref{cor:how semi-values work}).
\begin{itemize}
	\item We then present a theoretical result demonstrating that the Shapley value is not reliable for maximizing the inclusion AUC, due to a vulnerability introduced by the linearity axiom (Theorem~\ref{thm:unreliable}).
	
	\item In Section~\ref{subset:pathological}, based on the Shapley value, we show via a pathological example  that using the sum of contribution vectors of different utility functions as the contribution vector of their sum can lead to undesirable behavior w.r.t. maximizing the inclusion AUC.
\end{itemize}
These observations motivate us to explore nonlinear attribution methods that do not impose the linearity axiom, yet still satisfy the remaining desired axioms. In particular, the interpretation of applying the egalitarian least core as well as semi-values to rank players points us toward starting from a faithful additive approximation of utility functions.
Consequently,
\begin{itemize}
	\item We introduce a mathematically grounded class of attribution methods that relaxes only the linearity and efficiency axioms (Theorem~\ref{thm:p norm is good});\footnote{As an unexpected finding, we also concurrently developed another class of such nonlinear axiomatic attribution methods via optimizing the inclusion AUC metric \citep{li2026treegrad}.}
	
	\item We also lay out a theoretical ground for approximately computing the egalitarian least core and our introduced nonlinear attribution method $\phi^{\infty}$ (Theorems~\ref{thm:sovleinfinity} and~\ref{thm:solveELC});
\end{itemize}
Finally, we conduct experiments to verify the practical effectiveness of the proposed methods in improving player ranking quality under the inclusion AUC metric.

%% file: files/background.tex
\section{Background}
\label{sec:bg}
\subsection{Notation}
Let $[n] \coloneq \{1, 2, \dots, n\}$ denote a set of players, where $n \geq 2$. For each $n$, $\mathcal{G}_{n} \coloneq \{U_{n} \colon 2^{[n]} \to \mathbb{R}\}$ refers to the set of utility functions over $n$ players. Throughout the paper, the subscript $n$ indicates the number of players. We then define the space of all such games as $\mathcal{G} \coloneq \bigcup_{n \geq 2} \mathcal{G}_{n}$.
An attribution method $\phi$ maps each utility function $U_{n} \in \mathcal{G}$ to a contribution vector $\phi(U_{n}) \in \mathbb{R}^{n}$, where $\phi_{i}(U_{n})$ measures the contribution of the $i$-th player in the cooperative game defined by $U_{n}$.
The all-one vector in $\mathbb{R}^{m}$ is denoted by $\mathbf{1}_{m}$.
Given $\mathbf{x} \in \mathbb{R}^{n}$ and $b \in \mathbb{R}$, we define an additive utility function $A_{n}(\cdot; \mathbf{x}, b) \in \mathcal{G}_{n}$ by letting
\begin{equation}
	\begin{gathered}
		A_{n}(S; \mathbf{x},b) \coloneq b + \sum_{i\in S}x_{i} \ \text{ for every }\ S \subseteq  [n],
	\end{gathered}
\end{equation}
where by default an empty sum is taken as 0. Plus, $\left\llbracket \cdot \right\rrbracket$ denotes the indicator function.

\subsection{Axioms in Cooperative Game Theory}
In what follows, we present a list of axioms commonly used in the literature \citep[e.g., ][]{shapley1953value,dubey1981value,weber1988probabilistic,li2023robust}.
\begin{itemize}
	\item \textbf{Linearity}: $\phi(c\cdot U_{n} + U_{n}') = c\cdot \phi(U_{n}) + \phi(U_{n}') $ for every $U_{n}, U_{n}' \in \mathcal{G}$ and $c \in \mathbb{R}$.
	
	\item \textbf{Consistency}: Suppose there exists $c \in \mathbb{R}$ such that $U_{n+1}(S) = U_{n}(S\cap [n]) + c\cdot\left\llbracket n+1 \in S\right\rrbracket$ for every $S \subseteq [n+1]$. Then, (i) $\phi_{n+1}(U_{n+1}) = c$, and (ii) $\phi_{i}(U_{n}) = \phi_{i}(U_{n+1})$ for every $i \in [n]$.
	
	\item  \textbf{Equal Treatment}: For some $i,j \in [n]$, if $U_{n}(S\cup \{i\}) = U_{n}(S\cup \{j\})$ for every $S \subseteq [n]\setminus \{i,j\}$, then $\phi_{i}(U_{n}) = \phi_{j}(U_{n})$.
	
	\item \textbf{Monotonicity}: For some $i \in [n]$, (i) if $U_{n}(S\cup \{i\})\geq U_{n}(S)$ for every $S \subseteq [n]\setminus\{ i\}$, then $\phi_{i}(U_{n})\geq 0$; (ii) if $U_{n}(S\cup \{i\})\leq U_{n}(S)$ for every $S \subseteq [n]\setminus \{i\}$, then $\phi_{i}(U_{n})\leq 0$.
	
	\item \textbf{Efficiency}: $\sum_{i\in [n]}\phi_{i}(U_{n}) = U_{n}([n]) - U_{n}(\emptyset)$ for every $U_{n} \in \mathcal{G}$.
	
	\item \textbf{Translation Invariance}: $\phi(U_{n}+C_{n}) = \phi(U_{n})$ whenever $C_{n} \in \mathcal{G}$ is a constant utility function.
\end{itemize}
We note that the consistency axiom implies both the dummy axiom used by \citet{weber1988probabilistic} and the projection axiom introduced by \citet{dubey1981value}. Besides, translation invariance is often implied by linearity and other axioms. In other words, translation invariance may be viewed as a weak version of linearity.

\subsection{Semi-Values}
The family of semi-values is uniquely characterized by all the above axioms except efficiency \citep{dubey1981value}. Specifically, each semi-value corresponds to a Borel probability measure $\mu$ over the interval $[0, 1]$. Accordingly, we use $\phi^{\mu}$ to denote the semi-value parameterized by $\mu$. The contribution of the $i$-th player in a game $U_n$ is given by
\begin{equation}\label{eq:semi-values}
	\begin{gathered}
		\phi^{\mu}_{i}(U_{n}) = \sum_{S\subseteq [n]\setminus i} p_{n,|S|+1}^{\mu} [U_{n}(S\cup \{i\}) - U_{n}(S)]\\
		\text{where }\ p_{n,k}^{\mu} = \int_{0}^{1} t^{k-1}(1-t)^{n-k} \mathrm{d}\mu(t) .
	\end{gathered}
\end{equation}
When the efficiency axiom is additionally imposed, $\mu$ is uniquely determined as the uniform distribution on $[0, 1]$, resulting in the well-known Shapley value. For the weighted Banzhaf value WB$(\tau)$, where $\tau \in (0, 1)$, the corresponding measure $\mu$ is the Dirac delta distribution $\delta_{\tau}$. For the Beta Shapley value $\text{Beta}(\alpha, \beta)$, where $\alpha, \beta \geq 1$, the measure $\mu$ is defined by
\begin{equation}
	\begin{gathered}
		\mu(S) \propto \int_{S} t^{1-\beta}(1-t)^{1-\alpha}\mathrm{d}t
	\end{gathered}
\end{equation}
for every measurable subset $S$ of $[0,1]$.
In particular, Beta$(1,1)$ corresponds to the Shapley value.
Empirically, \citet{kwon2022beta,kwon2022weightedshap,li2023robust} demonstrate that Beta Shapley values and weighted Banzhaf values often perform better in downstream tasks, challenging the necessity of the efficiency axiom. Heuristically, when only player rankings are of interest, the efficiency axiom, as a normalization step, is often regarded as redundant.

%% file: files/motivation.tex
\section{Motivations} \label{sec:motivations}
\subsection{The Egalitarian Least Core} 
Following \citet{yan2021if}, the least core $\mathrm{LC}(U_n)$ of a utility function $U_n$ is a subset of $\mathbb{R}^n$ containing all the optimal solutions of $\mathbf{x}$ to the minimization problem
\begin{equation} \label{op:one-sided least core}
	\begin{gathered}
		\minimize_{\mathbf{x}\in \mathbb{R}^{n}, e\in \mathbb{R}}\ e\\
		\text{s.t. }\ U_{n}(S) - A_{n}(S; \mathbf{x}, U_{n}(\emptyset)) \leq e, \ \forall\ S \subsetneq [n] \\ 
		\text{and }\ A_{n}([n]; \mathbf{x}, U_{n}(\emptyset)) = U_{n}([n]). 
	\end{gathered}
\end{equation}
According to \citet{blum1972direct}, it is clear that the least core $\mathrm{LC}(U_{n})$ is always non-empty.
The egalitarian least core $\phi^{\mathrm{ELC}}$ is then defined as the least-norm element in $\mathrm{LC}(U_n)$. Precisely,
\begin{equation}
	\begin{gathered}
		\phi^{\mathrm{ELC}}(U_{n}) \coloneq \argmin_{\mathbf{x}\in \mathrm{LC}(U_{n})} \|\mathbf{x}\|_{2} .
	\end{gathered}
\end{equation}
The least norm solution is critical for ensuring that $\phi^{\mathrm{ELC}}$ satisfies the axiom of equal treatment \citep{yan2021if}.
To the best of our knowledge, no existing work has established whether $\phi^{\mathrm{ELC}}$ satisfies the other axioms listed above. Therefore, for completeness, we analyze the axiomatic properties of the egalitarian least core.
\begin{restatable}{theorem}{AxiomOfEgalitarian} \label{thm:egalitarian}
	The egalitarian least core $\phi^{\mathrm{ELC}}$ is nonlinear and satisfies the axioms of consistency, equal treatment, monotonicity, efficiency, and translation invariance.
\end{restatable}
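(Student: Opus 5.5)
The plan is to prove each property directly from the definition of $\mathrm{LC}(U_n)$ as the set of optimal $\mathbf{x}$ in the linear program, and then to use that $\phi^{\mathrm{ELC}}$ is the unique minimum-norm point of this nonempty closed convex polytope. Uniqueness holds because $\|\cdot\|_2$ is strictly convex on a convex set. Throughout, write $e^*(U_n)$ for the optimal value.

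\emph{Efficiency and translation invariance.} Efficiency is immediate: every $\mathbf{x}\in\mathrm{LC}(U_n)$ satisfies the equality constraint $\sum_i x_i = U_n([n])-U_n(\emptyset)$. For translation invariance, adding a constant $c$ to $U_n$ adds $c$ both to $U_n(S)$ and to $A_n(S;\mathbf{x},U_n(\emptyset))$. Hence the feasible set is unchanged, so is $\mathrm{LC}$, and so is its least-norm element.

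\emph{Nonlinearity.} I will give a small counterexample with $n=2$. For $n=2$ the constraints are $U(\{1\})-U(\emptyset)-x_1\le e$, $U(\{2\})-U(\emptyset)-x_2\le e$, and $x_1+x_2=U([2])-U(\emptyset)$. Minimizing $e$ forces equal excess, so the core is a singleton: the "standard solution" $x_i = v_i + (v_{12}-v_1-v_2)/2$. That is linear, so I need $n=3$. Take $U,U'\in\mathcal{G}_3$ with $U'(\emptyset)=0$.
\begin{itemize}
\item $U$ is the unanimity game on $\{1,2\}$. Its least core is $\{(1/2,1/2,0)\}$ with $e^*=-1/2$; this should be checked via the constraints on singletons and on $\{3\}$.
\item $U'$ is the unanimity game on $\{2,3\}$, giving $(0,1/2,1/2)$.
\end{itemize}
The sum $U+U'$ has $U(\{1,2\})=1$, $U(\{2,3\})=1$, $U([3])=2$. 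Linearity would predict $(1/2,1,1/2)$. To refute it, I will compute the least core of the sum via symmetry and LP duality. If that is awkward, I can instead take $U$ to be the indicator that $|S|\ge 2$ and compare it with $2U$, or use the gap between the value and the core's nonlinearity. The cleanest route is a case where the least core, after minimizing $e$, is forced by a single binding constraint family that changes with the game. I expect this explicit verification to be the fiddliest computation, though not conceptually hard.

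\emph{Equal treatment.} If $i,j$ are symmetric, swapping coordinates $i,j$ maps the feasible set of the LP to itself. So $\mathrm{LC}$ is invariant under the swap, and uniqueness of the least-norm point forces $x_i=x_j$.

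\emph{Consistency.} Let $U_{n+1}(S)=U_n(S\cap[n])+c\llbracket n+1\in S\rrbracket$. For $\mathbf{x}\in\mathbb{R}^n$, set $\mathbf{y}=(\mathbf{x},c)$. Then the excess of $S\cup\{n+1\}$ under $\mathbf{y}$ equals the excess of $S$ under $\mathbf{x}$, and the efficiency constraints correspond. The key step is to show $y_{n+1}=c$ for every least-core point of $U_{n+1}$; this is the main obstacle. Suppose $y_{n+1}=c+\delta$. The constraints for $S=[n]$ and $S=\{n+1\}$ give excesses $\delta$ and $-\delta$ plus related terms; more generally, $S$ and $S\cup\{n+1\}$ have excesses differing by $\delta$ in the two directions, and $[n]$ versus the grand coalition forces the issue. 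So I would argue:
\begin{itemize}
\item $e^*(U_{n+1})\ge \max(e^*(U_n),\ \cdot)$, with equality attained by $\delta=0$.
\item Any $\delta\neq0$ strictly raises the maximal excess, given the constraint at $S=[n]$, whose excess is $-(y_{n+1}-c)$, when $e^*(U_n)$ is small.
\end{itemize}
This is delicate, because when $e^*(U_n)$ is large positive a small $\delta$ might not hurt. So I will check carefully whether the least core equals $\mathrm{LC}(U_n)\times\{c\}$ or only contains it. If it only contains it, I would fall back to the least-norm selection: minimizing $\|\mathbf{x}\|^2+y_{n+1}^2$ over the set, and using that the projection structure makes $\delta=0$ optimal, possibly by a perturbation/symmetrization argument that shifts mass between $y_{n+1}$ and the $x_i$.

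\emph{Monotonicity.} If $i$ has nonnegative marginals, suppose $x_i<0$ at a least-core point. Consider $S\not\ni i$ with $S\cup\{i\}\neq[n]$: the excess of $S\cup\{i\}$ is at least the excess of $S$ minus $x_i$, which is strictly larger than the excess of $S$. Using the constraint for $[n]\setminus\{i\}$ against the efficiency equality, I get that the excess of $[n]\setminus\{i\}$ is at most $x_i<0$. I would combine this with a chain argument to derive a contradiction with optimality, or else move weight to $i$ to get a feasible point with smaller norm. The case of nonpositive marginals is symmetric.
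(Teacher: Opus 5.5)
Your arguments for efficiency, translation invariance and equal treatment are fine. Swap-invariance of $\mathrm{LC}(U_n)$ plus uniqueness of the least-norm point is an equivalent, slightly cleaner form of the paper's averaging argument. The nonlinearity part, however, rests on a misreading of the linear program. Its constraints run over all $S\subsetneq[n]$, including $S=\emptyset$, which contributes $0\le e$. Hence $e^*\ge 0$, and $\mathrm{LC}(U_n)$ is the whole core whenever the core is nonempty. So your claim that for $n=2$ the least core is the singleton ``standard solution'' is false, and $n=2$ is exactly where the paper finds its counterexample. Take $U_2(\emptyset)=U_2(\{1,2\})=0$, $U_2(\{1\})=-1$, $U_2(\{2\})=-2$. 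Then $\mathrm{LC}(U_2)=\{(t,-t): t\in[-1,2]\}$ and $\phi^{\mathrm{ELC}}(U_2)=(0,0)$, whereas the Shapley value is $(\tfrac12,-\tfrac12)$. A linear $\phi^{\mathrm{ELC}}$ would, by the other axioms, have to be the Shapley value, so it is nonlinear. Your value $e^*=-\tfrac12$ with a singleton least core for the unanimity game on $\{1,2\}$ is wrong even without the $\emptyset$ constraint. The excesses of $\{1,2\}$ and $\{3\}$ are $x_3$ and $-x_3$, which forces $e^*\ge 0$, and the least core is $\{(t,1-t,0): t\in[0,1]\}$. More importantly, under the reading you were using, the least core of $U+U'$ is the singleton $\{(\tfrac12,1,\tfrac12)\}$, exactly the linear prediction, so your example would not refute linearity. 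Comparing $U$ with $2U$ cannot work either, because $\phi^{\mathrm{ELC}}$ is positively homogeneous. Under the paper's definition your example does succeed: $U+U'$ has a nonempty core whose least-norm point is $(\tfrac23,\tfrac23,\tfrac23)\neq(\tfrac12,1,\tfrac12)$. The proposal never carries out this computation.

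For consistency and monotonicity you isolate the right claims: every point of $\mathrm{LC}(U_{n+1})$ has $y_{n+1}=c$, and every least-core point has $x_i\ge 0$. You do not prove them, and your worry about large $e^*(U_n)$ shows the mechanism is missing. The missing idea is a transfer argument that contradicts the \emph{minimality of $e$}, not the norm.
\begin{itemize}
\item Suppose $y_{n+1}=c-\delta$ with $\delta>0$. For every $S\subseteq[n]$, the excess of $S\cup\{n+1\}$ exceeds that of $S$ by $\delta$. Meanwhile $[n]$ has excess $-\delta$, $\{n+1\}$ has excess $\delta\le e^*$, and $\emptyset$ has excess $0$. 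So every proper coalition attaining $e^*$ contains $n+1$.
\item Give $\epsilon\in(0,\delta)$ to player $n+1$ and take $\epsilon/n$ from each player in $[n]$. This preserves efficiency and strictly lowers the excess of every proper coalition containing $n+1$. Every other excess stays at most $\max(e^*-\delta+\epsilon,\,0)<e^*$, contradicting optimality.
\item The case $\delta<0$ is symmetric, now using that $[n]$ has positive excess $-\delta$.
\end{itemize}
Hence $\mathrm{LC}(U_{n+1})=\mathrm{LC}(U_n)\times\{c\}$ regardless of $e^*$, and the least-norm selection is inherited. Monotonicity follows from the same argument with a sufficiently small transfer toward $i$. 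If $x_i<0$, then $\{i\}$ has positive excess, $[n]\setminus\{i\}$ has negative excess, and each $S\not\ni i$ has strictly smaller excess than $S\cup\{i\}$, so all maximizers contain $i$. This shows every least-core point has $x_i\ge 0$, so your fallback of comparing norms is unnecessary.
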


This result indicates that the egalitarian least core relaxes only the linearity axiom.
Interestingly, we observe that the efficiency axiom effectively transforms the one-sided error bounds in the problem~\eqref{op:one-sided least core} into two-sided ones. The following lemma makes this precise.

\begin{lemma} \label{lem:two-sided least core}
	Given a utility function $U_{n} \in \mathcal{G}$, a vector $\mathbf{x}' \in \mathrm{LC}(U_{n})$ if and only if $\mathbf{x}'$ is an optimal solution to the problem
	\begin{equation} \label{op:two-sided least core}
		\begin{gathered}
			\minimize_{\mathbf{x} \in \mathbb{R}^{n}, e \in \mathbb{R}}\ e\\
			\text{s.t. }\ -e \leq A_{n}(S; \mathbf{x}, U_{n}(\emptyset)) - U_{n}(S), \ \forall\ S \subsetneq [n]\\
			A_{n}(S; \mathbf{x}, U_{n}(\emptyset)) - U_{n}(S)\leq e + B(S), \ \forall\ S \subsetneq [n]\\
			\text{and }\ \sum_{i\in [n]} x_{i} = U_{n}([n]) - U_{n}(\emptyset)
		\end{gathered}
	\end{equation}
	where $B(S) \coloneq U_{n}([n]) + U_{n}(\emptyset) - U_{n}(S) - U_{n}([n]\setminus S)$.
\end{lemma}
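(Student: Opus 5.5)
The plan is to show that, under the efficiency constraint $\sum_{i} x_i = U_n([n]) - U_n(\emptyset)$, the two feasible sets coincide for every fixed $(\mathbf{x}, e)$. Concretely, $(\mathbf{x}, e)$ is feasible for problem~\eqref{op:one-sided least core} if and only if it is feasible for problem~\eqref{op:two-sided least core}. Both problems minimize the same objective $e$, so their optimal values and sets of optimal $\mathbf{x}$ will then agree, which is exactly the claim $\mathbf{x}' \in \mathrm{LC}(U_n)$ iff $\mathbf{x}'$ is optimal for~\eqref{op:two-sided least core}. (The optimum in each problem is attained, as noted via \citet{blum1972direct}; this is also immediate since the problems are linear programs bounded below.)

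First, rewrite the equality constraint of~\eqref{op:one-sided least core}. The condition $A_n([n]; \mathbf{x}, U_n(\emptyset)) = U_n([n])$ reads $U_n(\emptyset) + \sum_i x_i = U_n([n])$, which is the efficiency constraint in~\eqref{op:two-sided least core}. The first family of constraints in~\eqref{op:two-sided least core}, namely $-e \le A_n(S) - U_n(S)$, is literally $U_n(S) - A_n(S) \le e$, so it coincides with the constraints of~\eqref{op:one-sided least core}.

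The key step is to show that, given efficiency, the second family of constraints is equivalent to the first family under the involution $S \mapsto T \coloneq [n]\setminus S$. This map is a bijection on proper subsets $S \subsetneq [n]$ once we note that $S=\emptyset$ maps to $[n]$. I would therefore first check the boundary case $S=\emptyset$ separately, or treat it by noting that both sides become identities.

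For $S \subsetneq [n]$, efficiency gives
\[
A_n(S;\mathbf{x},U_n(\emptyset)) = U_n(\emptyset) + \sum_{i\in S}x_i = U_n([n]) - \sum_{i\in T}x_i = U_n([n]) + U_n(\emptyset) - A_n(T;\mathbf{x},U_n(\emptyset)).
\]
Substituting this, the constraint $A_n(S) - U_n(S) \le e + B(S)$ becomes
\[
U_n([n]) + U_n(\emptyset) - A_n(T) - U_n(S) \le e + U_n([n]) + U_n(\emptyset) - U_n(S) - U_n(T),
\]
that is, $U_n(T) - A_n(T) \le e$. This is the one-sided constraint for $T$.

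The main obstacle is the index bookkeeping at the boundary. When $S=\emptyset$, we get $T=[n]$, which is not in the constraint index set. In that case the derived inequality is $U_n([n]) - A_n([n]) \le e$, i.e. $0 \le e$ under efficiency. I will need to verify that this is harmless: the one-sided constraint at $S=\emptyset$ already gives $U_n(\emptyset) - A_n(\emptyset) = 0 \le e$. So the set $\{T : S\subsetneq[n]\}$ yields exactly the one-sided constraints over $T \ne \emptyset$, together with the constraint $0 \le e$, which is also implied.

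Conversely, every one-sided constraint indexed by a nonempty $T \subsetneq [n]$ arises from some $S = [n]\setminus T \subsetneq [n]$. Hence, under efficiency, the second family of constraints is equivalent to the one-sided family. The feasible sets of~\eqref{op:one-sided least core} and~\eqref{op:two-sided least core} are therefore identical, and the lemma follows.
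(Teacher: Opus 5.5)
Your proposal is correct and follows the paper's proof. Both use the efficiency constraint to rewrite $A_n(S;\mathbf{x},U_n(\emptyset))$ in terms of the complement $[n]\setminus S$, which turns the upper constraint involving $B(S)$ into the one-sided constraint for $[n]\setminus S$. Your separate treatment of $S=\emptyset$ also fills in a boundary case the paper leaves implicit: there the complement is $[n]$, and the constraint reduces to $0\le e$, which the one-sided constraint at $\emptyset$ already implies.
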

\begin{proof}
	Suffice it to demonstrate that $A_{n}(S; \mathbf{x}, U_{n}(\emptyset)) \leq e + B(S)$ is equivalent to
	\begin{equation}
		\begin{gathered}
			U_{n}([n]\setminus S) - A_{n}([n]\setminus S; \mathbf{x}, U_{n}(\emptyset)) \leq e.
		\end{gathered}
	\end{equation}
	Using the efficiency constraint,
	\begin{equation}
		\begin{gathered}
			A_{n}([n]\setminus S; \mathbf{x}, U_{n}(\emptyset)) + A_{n}(S; \mathbf{x}, U_{n}(\emptyset)) \\
			= U_{n}([n]) + U_{n}(\emptyset),
		\end{gathered}
	\end{equation}
	which leads to
	\begin{equation}
		\begin{gathered}
			U_{n}([n]\setminus S) - A_{n}([n]\setminus S; \mathbf{x},U_{n}(\emptyset)) \\
			= A_{n}(S; \mathbf{x},U_{n}(\emptyset)) - U_{n}(S) - B(S) .
		\end{gathered}
	\end{equation}
\end{proof}
In summary, the egalitarian least core produces an additive utility function $A_n(\cdot; \phi^{\mathrm{ELC}}(U_n), U_n(\emptyset))$ that seeks to approximate $U_n$ under the problem~\eqref{op:two-sided least core}.

\subsection{Evaluating Player Rankings}
In data attribution \citep{wang2024rethinking}, the utility function $U_n(S)$ is typically defined as the performance of a model trained on the subset $S$ of available training data. Then, the grand set of players $[n]$ corresponds to the set of training data, and $\phi_i(U_n)$ quantifies the contribution of the $i$-th data point to the performance of trained models. In this context, the goal is to identify positively contributing data points, which is closely related to maximizing the inclusion AUC, defined as
\begin{equation} \label{op:auc}
	\begin{gathered}
		\maximize_{\pi \in \Pi_{n}}\ \mathrm{AUC}(\pi; U_{n}) \coloneq \sum_{k=1}^{n} U_{n}(S_{k}(\pi)) \\ 
		\text{where }\ S_{k}(\pi) \coloneq \{\pi(1),\pi(2),\dots,\pi(k)\} .
	\end{gathered}
\end{equation}
Here, $\Pi_n$ denotes the set of all permutations of $[n]$. We note that maximizing the inclusion AUC also applies to context attribution \citep{cohen2024contextcite}, where the aim is to identify the influential parts of the context that generate a statement.
In general, solving the problem~\eqref{op:auc} is computationally challenging due to the factorial number of possible permutations. Nevertheless, the problem becomes tractable when the utility function is additive.

\begin{lemma} \label{lem:liearized AUC}
	Let $\mathbf{x} \in \mathbb{R}^n$, and define $\pi_{\mathbf{x}}$ as the permutation satisfying $x_{\pi_{\mathbf{x}}(j)} \geq x_{\pi_{\mathbf{x}}(j+1)}$ for every $1 \leq j < n$.
	Then, $\pi_{\mathbf{x}}$ is optimal to the problem
	\begin{equation}
		\begin{gathered}
			\maximize_{\pi \in \Pi_{n}}\ \mathrm{AUC}(\pi; A_{n}(\cdot; \mathbf{x}, b))
		\end{gathered}
	\end{equation}
	where $b\in\mathbb{R}$ is arbitrary.
\end{lemma}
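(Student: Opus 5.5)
The plan is to write the AUC of an additive utility in closed form, reduce the maximization to a weighted sum of the entries of $\mathbf{x}$, and then invoke the rearrangement inequality.

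\textbf{Step 1: closed form.} Fix $\pi \in \Pi_n$ and expand
\begin{equation}
	\mathrm{AUC}(\pi; A_{n}(\cdot; \mathbf{x}, b)) = \sum_{k=1}^{n}\Bigl(b + \sum_{j=1}^{k} x_{\pi(j)}\Bigr) = nb + \sum_{j=1}^{n} (n-j+1)\, x_{\pi(j)},
\end{equation}
after swapping the order of summation. The element placed at position $j$ belongs to $S_k(\pi)$ for exactly $k = j, \dots, n$, so it is counted $n-j+1$ times.

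\textbf{Step 2: discard the constant.} The term $nb$ does not depend on $\pi$, so $b$ plays no role in the optimization. It therefore suffices to maximize $\sum_{j=1}^{n} w_j x_{\pi(j)}$, where the weights $w_j = n-j+1$ are strictly decreasing in $j$.

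\textbf{Step 3: rearrangement.} By the rearrangement inequality, this sum is maximized when the largest entries of $\mathbf{x}$ are paired with the largest weights. That is, the sequence $(x_{\pi(j)})_j$ should be non-increasing, which is exactly the defining property of $\pi_{\mathbf{x}}$.

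To keep the argument self-contained, I would prove Step 3 by a swap argument. Suppose $\pi$ has an adjacent inversion, i.e., $x_{\pi(j)} < x_{\pi(j+1)}$ for some $j$. Transposing these two positions changes the objective by
\begin{equation}
	(w_j - w_{j+1})\bigl(x_{\pi(j+1)} - x_{\pi(j)}\bigr) = x_{\pi(j+1)} - x_{\pi(j)} > 0.
\end{equation}
Each such swap strictly reduces the number of inversions, so repeating it turns any $\pi$ into a non-increasing ordering without ever decreasing the objective. Hence every $\pi$ is weakly beaten by some non-increasing ordering.

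\textbf{Ties.} Any two non-increasing orderings differ only in how they order tied entries of $\mathbf{x}$. Permuting equal values leaves the objective unchanged, so all non-increasing orderings attain the same value, and $\pi_{\mathbf{x}}$ is optimal.

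I expect no real obstacle. The only points needing care are the counting step in Step 1 and the handling of ties, which is needed because $\pi_{\mathbf{x}}$ need not be unique.
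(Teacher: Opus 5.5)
Your proposal is correct and follows the same route as the paper: write the AUC of an additive utility as the weighted sum $\sum_{k}(n+1-k)\,x_{\pi(k)}$ (up to a $\pi$-independent constant), then apply the rearrangement inequality. Your version is slightly more careful than the paper's, since you keep the constant $nb$ that the paper's displayed identity omits, and you make the rearrangement step and the tie-handling explicit via the adjacent-swap argument.
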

\begin{proof}
	This result follows immediately from the observation
	\begin{equation}
		\begin{gathered}
			\mathrm{AUC}(\pi; A_{n}(\cdot; \mathbf{x}, b))  = \sum_{k=1}^{n}(n+1-k)\cdot x_{\pi(k)}
		\end{gathered}
	\end{equation}
	and the rearrangement inequality.
\end{proof}

Combining Lemmas~\ref{lem:two-sided least core} and~\ref{lem:liearized AUC}, we interpret the mechanism of the egalitarian least core in feature ranking as follows:
\begin{itemize}
	\item Obtain the best additive approximation of $U_{n}$ by solving the minimization problem~\eqref{op:two-sided least core}, yielding the additive utility function $A_n(\cdot; \phi^{\mathrm{ELC}}(U_n), U_n(\emptyset))$.
	
	\item Substitute the obtained $A_{n}(\cdot; \phi^{\mathrm{ELC}}(U_{n}), U_{n}(\emptyset))$ into the maximization problem~\eqref{op:auc} to derive an approximate optimal ranking.
\end{itemize}

Meanwhile, semi-values admit a similar interpretation. Specifically, \citet[Theorem 2]{li2024one} proved that there exists a function $T^{\mu} \colon \mathcal{G} \to \mathbb{R}$ such that
\begin{equation}
	\begin{gathered}
		\phi^{\mu}(U_{n}) = \mathbf{x}^{*}_{\mu}(U_{n}) + T^{\mu}(U_{n})\cdot \mathbf{1}_{n}
	\end{gathered}
\end{equation}
where $(\mathbf{x}^{*}_{\mu}(U_{n}), b^{*}_{\mu}(U_{n}))$ is the unique optimal solution to the least square problem
\begin{equation}
	\begin{gathered}
		\minimize_{\mathbf{x}\in \mathbb{R}^{n}, b\in R}\  \sum_{\emptyset \subsetneq S \subsetneq [n]} p_{n-1,|S|}^{\mu} [U_{n}(S) - A_{n}(S; \mathbf{x}, b)]^{2} .
	\end{gathered}
\end{equation}
This immediately leads to the following corollary.
\begin{corollary} \label{cor:how semi-values work}
	Given a utility function $U_{n} \in \mathcal{G}$, let $\pi_{\mu}$ be a ranking of $[n]$ such that $\phi^{\mu}_{\pi_\mu(j)}(U_{n}) \geq \phi^{\mu}_{\pi_\mu(j+1)}(U_{n})$ for every $1\leq j < n$. Then, $\pi_{\mu}$ is an optimal solution to
	\begin{equation}
		\begin{gathered}
			\maximize_{\pi \in \Pi_{n}}\ \mathrm{AUC}[\pi; A_{n}(\cdot; \mathbf{x}_{\mu}^{*}(U_{n}), b_{\mu}^{*}(U_{n}))] .
		\end{gathered}
	\end{equation}
\end{corollary}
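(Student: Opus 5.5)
Since $\phi^{\mu}(U_{n})$ differs from $\mathbf{x}^{*}_{\mu}(U_{n})$ only by a multiple of $\mathbf{1}_{n}$, the two vectors induce the same ordering of players. The corollary should therefore follow by combining the decomposition of \citet[Theorem 2]{li2024one} with Lemma~\ref{lem:liearized AUC}.

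First, by \citet[Theorem 2]{li2024one},
\begin{equation}
	\mathbf{x}^{*}_{\mu}(U_{n}) = \phi^{\mu}(U_{n}) - T^{\mu}(U_{n})\cdot \mathbf{1}_{n}.
\end{equation}
For every $1 \leq j < n$,
\begin{equation}
	x^{*}_{\mu,\pi_\mu(j)}(U_{n}) - x^{*}_{\mu,\pi_\mu(j+1)}(U_{n}) = \phi^{\mu}_{\pi_\mu(j)}(U_{n}) - \phi^{\mu}_{\pi_\mu(j+1)}(U_{n}) \geq 0,
\end{equation}
because the common shift $T^{\mu}(U_{n})$ cancels. Hence $\pi_\mu$ sorts the entries of $\mathbf{x}^{*}_{\mu}(U_{n})$ in non-increasing order. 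In other words, $\pi_\mu$ is a valid choice of $\pi_{\mathbf{x}}$ in Lemma~\ref{lem:liearized AUC} with $\mathbf{x} = \mathbf{x}^{*}_{\mu}(U_{n})$.

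Second, we apply Lemma~\ref{lem:liearized AUC} with $\mathbf{x} = \mathbf{x}^{*}_{\mu}(U_{n})$ and $b = b^{*}_{\mu}(U_{n})$; the lemma allows $b$ to be arbitrary. It yields that $\pi_\mu$ maximizes $\mathrm{AUC}[\pi; A_{n}(\cdot; \mathbf{x}^{*}_{\mu}(U_{n}), b^{*}_{\mu}(U_{n}))]$ over $\Pi_{n}$.

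The only subtle point is ties. Lemma~\ref{lem:liearized AUC} is phrased for \emph{the} permutation $\pi_{\mathbf{x}}$, but when entries of $\mathbf{x}$ coincide several sorting permutations exist. The rearrangement argument behind the lemma shows that all of them attain the same AUC value $\sum_{k}(n+1-k)x_{\pi(k)}$, so each one is optimal. The lemma therefore applies to any sorting permutation, including $\pi_\mu$, and no further obstacle arises.
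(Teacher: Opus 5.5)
Your proposal is correct and follows the paper's argument. The paper also observes that $\phi^{\mu}(U_{n})$ and $\mathbf{x}^{*}_{\mu}(U_{n})$ differ only by the shift $T^{\mu}(U_{n})\cdot\mathbf{1}_{n}$, so they induce the same ranking, and then applies Lemma~\ref{lem:liearized AUC}; your explicit handling of ties (any two non-increasing sortings $\pi,\pi'$ satisfy $x_{\pi(k)} = x_{\pi'(k)}$ for every $k$, hence give the same AUC) just spells out what the paper leaves implicit.
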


This corollary is immediate by observing that the induced player rankings of $\phi^{\mu}(U_{n})$ and $\mathbf{x}_{\mu}^{*}$ are the same.

%\begin{remark}
In summary, many popular axiomatic attribution methods for ranking players (e.g., Beta Shapley values, weighted Banzhaf values, and the egalitarian least core) can be viewed as implicitly learning an additive approximation to the true utility function $U_n$, which is then used to derive an approximate optimal ranking. This motivates us to next explore nonlinear, non-efficient attribution methods through the same lens of approximation.
%\end{remark}

%% file: files/results.tex
\section{Main Results} 
\label{sec:main results}
\subsection{How Unreliable Is the Shapley Value}
Before proceeding, we demonstrate that the Shapley value may fail to distinguish between utility functions in terms of approximately maximizing the inclusion AUC.

\begin{restatable}{theorem}{UnreliableShapley} \label{thm:unreliable}
	Let $\Pi_{n}$ be the set of all permutations of $[n]$ with $n\geq 3$, and let $\phi^{\mathrm{Shap}}$ denote the Shapley value. Then, $\Pi_{n}$ can be partitioned into $\{ \Pi_{n,j} \}_{j\in\mathcal{J}}$ with $|\mathcal{J}|\geq 2$ such that, for every $\Pi_{n,j}$ and every $U_{n} \in \mathcal{G}_{n}$, there exists a perturbation $U_{n}' \in \mathcal{G}_{n}$ satisfying
	\begin{equation}
		\begin{gathered}
			\phi^{\mathrm{Shap}}(U_{n}) = \phi^{\mathrm{Shap}}(U_{n} + U_{n}'),\\
			\text{yet }\ 
			\argmax_{\pi \in \Pi_{n}} \mathrm{AUC}(\pi; U_{n}+U_{n}') \subseteq  \Pi_{n, j}.
		\end{gathered}
	\end{equation}
\end{restatable}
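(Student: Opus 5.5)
Since the Shapley value is linear, the condition $\phi^{\mathrm{Shap}}(U_n)=\phi^{\mathrm{Shap}}(U_n+U_n')$ is equivalent to $U_n'\in\ker\phi^{\mathrm{Shap}}$. This kernel is a subspace of $\mathcal{G}_n\cong\mathbb{R}^{2^n}$ of dimension $2^n-n$. The plan is to choose, within this kernel, perturbations that are very large only on carefully selected coalitions. These perturbations will swamp $U_n$ in the inclusion AUC and force the maximizers into a prescribed block of permutations.

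Concretely, I would partition $\Pi_n$ according to the first-chosen player, $\Pi_{n,j}\coloneq\{\pi\in\Pi_n:\pi(1)=j\}$ for $j\in[n]$, so $|\mathcal{J}|=n\geq 3$. For each $j$, I look for a fixed $V_j\in\ker\phi^{\mathrm{Shap}}$ satisfying
\[
\min_{\pi\in\Pi_{n,j}}\mathrm{AUC}(\pi;V_j)>\max_{\pi\notin\Pi_{n,j}}\mathrm{AUC}(\pi;V_j).
\]
Then I set $U_n'\coloneq c\,V_j$ with $c>0$ large. Since $|\mathrm{AUC}(\pi;U_n)|\leq n\max_S|U_n(S)|$, taking $c$ larger than twice this bound divided by the gap makes every maximizer of $\mathrm{AUC}(\cdot;U_n+cV_j)$ lie in $\Pi_{n,j}$. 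Linearity gives $\phi^{\mathrm{Shap}}(cV_j)=0$, so the Shapley value is unchanged.

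To build $V_j$, I would start from $W(S)\coloneq\llbracket S=\{j\}\rrbracket$. This game rewards exactly the permutations in $\Pi_{n,j}$ by $1$ at step $k=1$, and gives $0$ to all others. Its Shapley value is nonzero, however, so I correct it by adding a game $Z$ supported on coalitions of size between $2$ and $n-1$. The coalitions $\emptyset$ and $[n]$ should be avoided: $S_n(\pi)=[n]$ for every $\pi$, so it does not separate permutations, and $\emptyset$ never appears in the AUC at all. I need $\phi^{\mathrm{Shap}}(Z)=-\phi^{\mathrm{Shap}}(W)$ together with $|Z(S)|$ small, so that the contribution of $Z$ to any AUC is below $1/2$.

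This is the main obstacle, since the Shapley weights on middle-size coalitions are small and the correction may need large values. I would instead exploit scaling freedom with a size-dependent symmetric correction. Using the semi-value formula~\eqref{eq:semi-values}, $\phi^{\mathrm{Shap}}$ of a game depending on $S$ only through $|S|$ and the membership of $j$ can be computed explicitly. I would try $Z(S)=\alpha\llbracket |S|=n-1, j\in S\rrbracket+\beta\llbracket |S|=n-1, j\notin S\rrbracket$, and solve two linear equations for $\alpha,\beta$ by symmetry among the players other than $j$. I then need to check that every permutation passes through exactly one size-$(n-1)$ coalition, and that this coalition contains $j$ for all $\pi\in\Pi_{n,j}$. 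Consequently $\mathrm{AUC}(\pi;V_j)=1+\alpha$ on $\Pi_{n,j}$, while off $\Pi_{n,j}$ it equals $\alpha$ or $\beta$ depending on whether $\pi(n)\neq j$.

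The gap condition then reduces to $1+\alpha>\max(\alpha,\beta)$, i.e.\ $\beta<1+\alpha$, which I would verify from the explicit solution. The hypothesis $n\geq 3$ is needed so that the sizes $1$ and $n-1$ are distinct and the two corrections decouple. If this fails, I would fall back on a general argument: the linear map $\mathbf{v}\mapsto(\mathrm{AUC}(\pi;\mathbf{v}))_\pi$ restricted to the kernel has a range rich enough to contain a vector separating $\Pi_{n,j}$. This could be established by a dimension count showing that the kernel is not contained in the span of the additive games.
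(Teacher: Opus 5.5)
Your reduction (find one $V_j\in\ker\phi^{\mathrm{Shap}}$ with a strict gap, then scale by a large $c$) is sound. The gap is that, for the first-player partition, no such $V_j$ exists.

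Your specific ansatz already fails. Write $Z_1$ and $Z_2$ for the indicators of $\{|S|=n-1,\ j\in S\}$ and $\{|S|=n-1,\ j\notin S\}$. A direct computation shows that $W$ and $Z_1$ have the same Shapley value: $j$-coordinate $\frac{1}{n}$ and all other coordinates $-\frac{1}{n(n-1)}$. Moreover $\phi^{\mathrm{Shap}}(Z_2)=-\phi^{\mathrm{Shap}}(W)$. Your two equations therefore collapse to the single condition $\beta=1+\alpha$. Then every $\pi$ with $\pi(n)=j$ receives $\mathrm{AUC}(\pi;V_j)=\beta=1+\alpha$, which ties with $\Pi_{n,j}$, so the gap condition fails.

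This is forced, not an accident of the ansatz. For an ordering $\sigma$ of $[n]\setminus\{j\}$, let $(j,\sigma)$ and $(\sigma,j)$ be the permutations placing $j$ first and last, respectively. A short computation with the permutation formula for the Shapley value gives, for every $V\in\mathcal{G}_n$,
\[
\frac{1}{(n-1)!}\sum_{\sigma}\bigl[\mathrm{AUC}((j,\sigma);V)-\mathrm{AUC}((\sigma,j);V)\bigr]=n\,\phi^{\mathrm{Shap}}_{j}(V)-V([n])+V(\emptyset).
\]
The right-hand side vanishes on $\ker\phi^{\mathrm{Shap}}$ because the Shapley value is efficient. Hence $\min_{\pi\in\Pi_{n,j}}\mathrm{AUC}(\pi;V)\le\max_{\pi(n)=j}\mathrm{AUC}(\pi;V)$ for every kernel element $V$. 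Your strict inequality is therefore unattainable, and no dimension count can overcome a linear identity that holds on the whole kernel.

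For $n=3$ the first-player partition is incompatible even with the theorem's weaker argmax requirement. One checks $\mathrm{AUC}((1,2,3);V)-\mathrm{AUC}((3,2,1);V)=2\bigl(\phi^{\mathrm{Shap}}_{1}(V)-\phi^{\mathrm{Shap}}_{3}(V)\bigr)$ for all $V$, so on the kernel each permutation ties with its reversal. Taking $U_3=0$, the argmax set is closed under reversal. Reversal maps $\Pi_{3,j}$ onto the permutations ending in $j$, so the argmax set is never contained in $\Pi_{3,j}$.

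The missing idea is to let the kernel dictate the partition. Blocks must be unions of classes of permutations whose AUC functionals agree on $\ker\phi^{\mathrm{Shap}}$, and the perturbation should be the representer of such a functional. This is the paper's route:
\begin{itemize}
\item Use the basis $\{U_T\}_{|T|\neq 1}$ of the kernel, with basis matrix $\mathbf{A}$. The functional of $\pi$ is encoded by $\mathbf{d}_\pi^{\top}=\mathbf{p}_\pi^{\top}\mathbf{A}$, whose $T$-entry counts the chain sets $S_k(\pi)$, $0\le k\le n$, that meet $T$ in exactly one player.
\item Partition $\Pi_n$ according to the value of $\mathbf{d}_\pi$, and take $U_n'=c\,\mathbf{A}\mathbf{d}_\pi$.
\item Relabelling players only permutes coordinates, so all $\mathbf{d}_\pi$ have the same norm. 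By Cauchy--Schwarz, $\mathrm{AUC}(\pi';\mathbf{A}\mathbf{d}_\pi)=\mathbf{d}_{\pi'}^{\top}\mathbf{d}_\pi-(n+1)$ is maximal exactly when $\mathbf{d}_{\pi'}=\mathbf{d}_\pi$.
\end{itemize}
Your large-$c$ step then finishes the proof unchanged. For $n\ge 4$ these classes are singletons; for $n=3$ they are exactly the pairs $\{\pi,\text{reversal of }\pi\}$ identified above.
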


Theorem~\ref{thm:unreliable} states that, preserving the same Shapley value, the AUC-optimal permutation can be manipulated to lie in any $\Pi_{n,j}$.
Specifically, the partition in Theorem~\ref{thm:unreliable} is constructed by exploiting the null space of the Shapley value. In other words, the linearity axiom induces an excessively large null space that can be detrimental. We note that this is not the only negative perspective established so far. For example, \citet[Theorem 2]{wang2024rethinking} showed that the Shapley value is not reliable for maximizing the utility $U(S)$. 
On the other hand, \citet[Theorem B.3]{bilodeau2024impossibility} demonstrated that the Shapley value fails to distinguish local behaviors of different functions. In particular, all these negative results rely on exploiting the null space induced by the linearity axiom.

\subsection{A Pathological Example Induced by Linearity} \label{subset:pathological}
The inarguable merit of linearity is that it divides the calculation of $\phi(U + U')$ into $\phi(U) + \phi(U')$. However, we demonstrate that such a convenience comes with pathological examples in maximizing the problem~\eqref{op:auc}. Let $2^{[4]}$ be ordered as
\begin{equation}
	\begin{gathered}
		\emptyset, \{1\}, \{2\}, \{1, 2\}, \{3\}, \{1, 3\}, \{2,3\}, \{1,2,3\}, \{4\}, \{1,4\}, \\
		\{2,4\}, \{1,2,4\}, \{3,4\}, \{1,3,4\}, \{2,3,4\}, \{1,2,3,4\},
	\end{gathered}
\end{equation}
which is the binary ordering used in \citet{grabisch2000equivalent}. Then, every utility function in $\mathcal{G}_{4}$ can be expressed as a vector. Let
\begin{equation}
	\begin{aligned}
		U_{4} = (&4, -3, -4, -8, 2, 6, 3, -8,\\
		&4, -10, -3, 8, 5, 1, 4, -3),\\
		U'_{4}= (&-5, -2, -7, 7, -10, 4, -4, 9,\\
		& -1, -6, -1, 1, -3, -2, -10, -5).
	\end{aligned}
\end{equation}
Particularly, they are constructed such that $\pi^{\mathrm{Shap}}(U_{4})$ and $\pi^{\mathrm{Shap}}(U'_{4})$ each exactly maximize the problem~\eqref{op:auc}, where $\pi^{\mathrm{Shap}}$ denotes the player ranking induced by the Shapley value. However, one can verify that $\pi^{\mathrm{Shap}}(U_{4}+U'_{4})$ instead \emph{minimizes} the inclusion AUC. This pathological example clearly demonstrates the unexpected behavior that arises from simply adding the Shapley values of $U_{4}$ and $U'_{4}$ when maximizing the inclusion AUC. Moreover, there exist many such examples, as the one presented here was instantly found via random search.

%\begin{remark}
The above example raises a concern about the common practice of using the sum of contribution vectors of different utility functions as the contribution vector of their sum. This phenomenon also serves as our motivation to explore what nonlinear axiomatic attribution methods would look like.
%\end{remark}

\subsection{Nonlinear Attribution Methods}
Looking at the problem~\eqref{op:two-sided least core} of the least core, the term $B$ appears to hinder faithful approximation of $U_{n}$. Recall that this term arises from enforcing the efficiency axiom. To relax this axiom, it is natural to allow two-sided approximation errors for each subset $S$, i.e., bounding $A_n(S; \mathbf{x}, b) - U_{n}(S)$ from both sides. Moreover, to better fit $U_{n}$, we treat the bias term $b$ as a variable to be optimized, rather than fixing it straightforwardly to $U_{n}(\emptyset)$.
All in all, we consider the minimization problem below that attempts to faithfully approximate the utility function using an additive one:
\begin{equation} \label{op:p norm}
	\begin{gathered}
		\minimize_{\mathbf{x}\in \mathbb{R}^{n}, b \in \mathbb{R}}\  \|\mathbf{A}\mathbf{x} + b\cdot\mathbf{1}_{2^{n}} - \mathbf{V}(U_{n})\|_{p}
	\end{gathered}
\end{equation}
where $\mathbf{A} \in \mathbb{R}^{2^{n} \times n}$, and $\mathbf{V}(U_{n}) \in \mathbb{R}^{2^{n}}$ contains $U_{n}(S)$ for all subsets $S \subseteq [n]$. Each row $A_{S}$ of $\mathbf{A}$ is the indicator vector of the subset $S$, i.e., $A_{S,i} = 1$ if $i \in S$, and $0$ otherwise.
Note that when $p = \infty$, the problem~\eqref{op:p norm} coincides with the problem~\eqref{op:two-sided least core} with the efficiency axiom removed and $B(S) \equiv 0$. We note that \citet{CharnesGKR88} considered a similar approximation with $p\in\{1,2,\infty\}$.

Let $\mathrm{OS}_{p}(U_{n})$ denote the set of all optimal solutions of $\mathbf{x}$ to the minimization problem~\eqref{op:p norm}. Then, We define the $p$-norm attribution method $\phi^{p}(U_{n})$ by selecting the minimum $\ell_{2}$-norm solution among $\mathrm{OS}_{p}(U_{n})$:

\begin{equation}
	\begin{gathered}
		\phi^{p}(U_{n}) \coloneq \argmin_{\mathbf{x}\in \mathrm{OS}_{p}(U_{n})} \|\mathbf{x}\|_{2} .
	\end{gathered}
\end{equation}

For $p \in (1, \infty)$, the problem admits a unique solution since the objective is strictly convex. However, for $p \in \{1, \infty\}$, multiple optimal solutions may exist. As an example for $p = \infty$, consider $n = 3$ and define $U_{3}$ such that
\begin{equation}
	\begin{gathered}
		U(S) \coloneq \begin{cases}
			0, & S = \emptyset,\\
			0.5, & S \in \{\{2\}, \{3\}\},\\
			1, &\text{otherwise.}
		\end{cases}
	\end{gathered}
\end{equation}
Then, one can verify that the solution of $b = 0.25$ and $\mathbf{x} = (0.5, 0.25 + t, 0.25 - t)$ is optimal for every $t \in [-0.25, 0.25]$.

\begin{algorithm}[t]
	\DontPrintSemicolon
	\caption{Approximation of $\phi^{p}(U_{n})$}
	\label{alg:appr}
	\KwIn{Utility function $U_{n}$, regularization parameter $\eta = 10^{-12}$, $p \in (1, \infty) \cup \{\infty, \mathrm{ELS}\}$, and sample budget $B$}
	\KwOut{Approximation $\hat{\mathbf{x}}^{*} \in \mathbb{R}^{n}$ of $\phi^{p}(U_{n})$}
	
	$\mathcal{S} \gets [\ ]$
	
	\For{$b = 1,2,\dots, B$}{
		Uniformly sample a subset $S_{b} \subseteq [n]$ not already in $\mathcal{S}$;
		
		\If{$p = \mathrm{ELS}$}{
			Resample $S_{b}$ if $S_{b} \in \{\emptyset, [n]\}$
		}
		
		$\mathcal{S}.\mathrm{append}(S_{b})$
	}
	\If{$p \in (1,\infty) \cup \{\infty\}$}{

		Construct $\mathbf{A} \in \mathbb{R}^{B\times n}$ and $\mathbf{V} \in \mathbb{R}^{B}$ using $\mathcal{S}$ and $U_{n}$
		
		$\eta \gets 0$ if $p \in (1, \infty)$
		
		$\hat{\mathbf{x}}^{*}\gets\minimize_{\mathbf{x}\in \mathbb{R}^{n}, b\in \mathbb{R}}\ \|\mathbf{Ax} + b\cdot\mathbf{1}_{B} - \mathbf{V}\|_{p} + \eta\cdot \|\mathbf{x}\|_{2}^{2}$
	}\ElseIf{$p = \mathrm{ELC}$}{
		$\hat{\mathbf{x}}^{*} \gets \minimize_{\mathbf{x} \in \mathbb{R}^{n},e\in \mathbb{R}}\ e + \eta\cdot \|\mathbf{x}\|_{2}^{2}\,$ subject to $\,U_{n}(S) - U_{n}(\emptyset) - \sum_{i\in S} x_{i} \leq e$ for every $S \in \mathcal{S}$, $0\leq e$, and $\,\sum_{i\in[n]}x_{i} = U_{n}([n]) - U_{n}(\emptyset)$
	}
	
	\Return{$\hat{\mathbf{x}}^{*}$}
\end{algorithm}

\begin{figure*}[t]
	\centering
	\begin{tabular}{ccc}
		\includegraphics[width=0.3\linewidth]{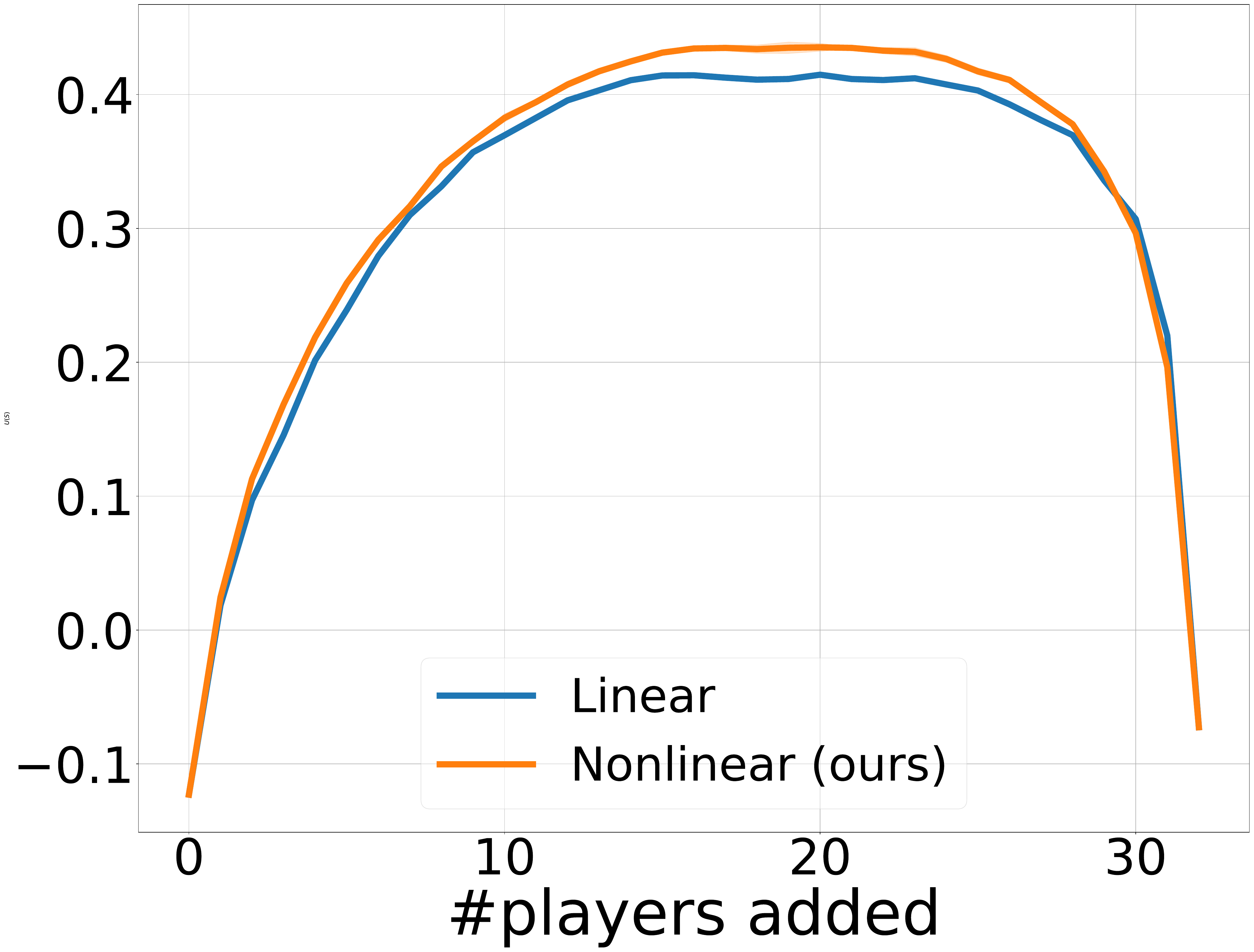} & \includegraphics[width=0.3\linewidth]{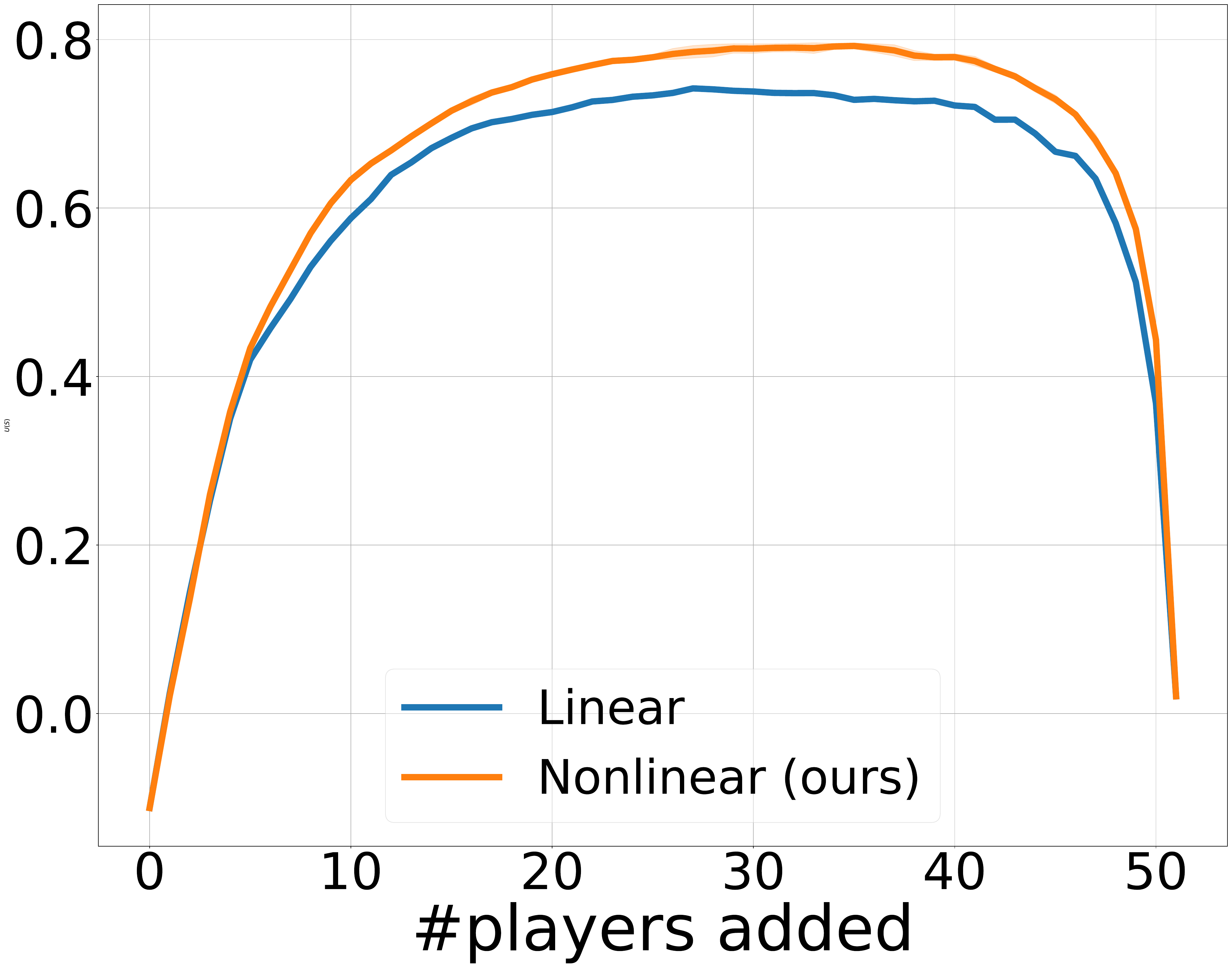} &
		\includegraphics[width=0.3\linewidth]{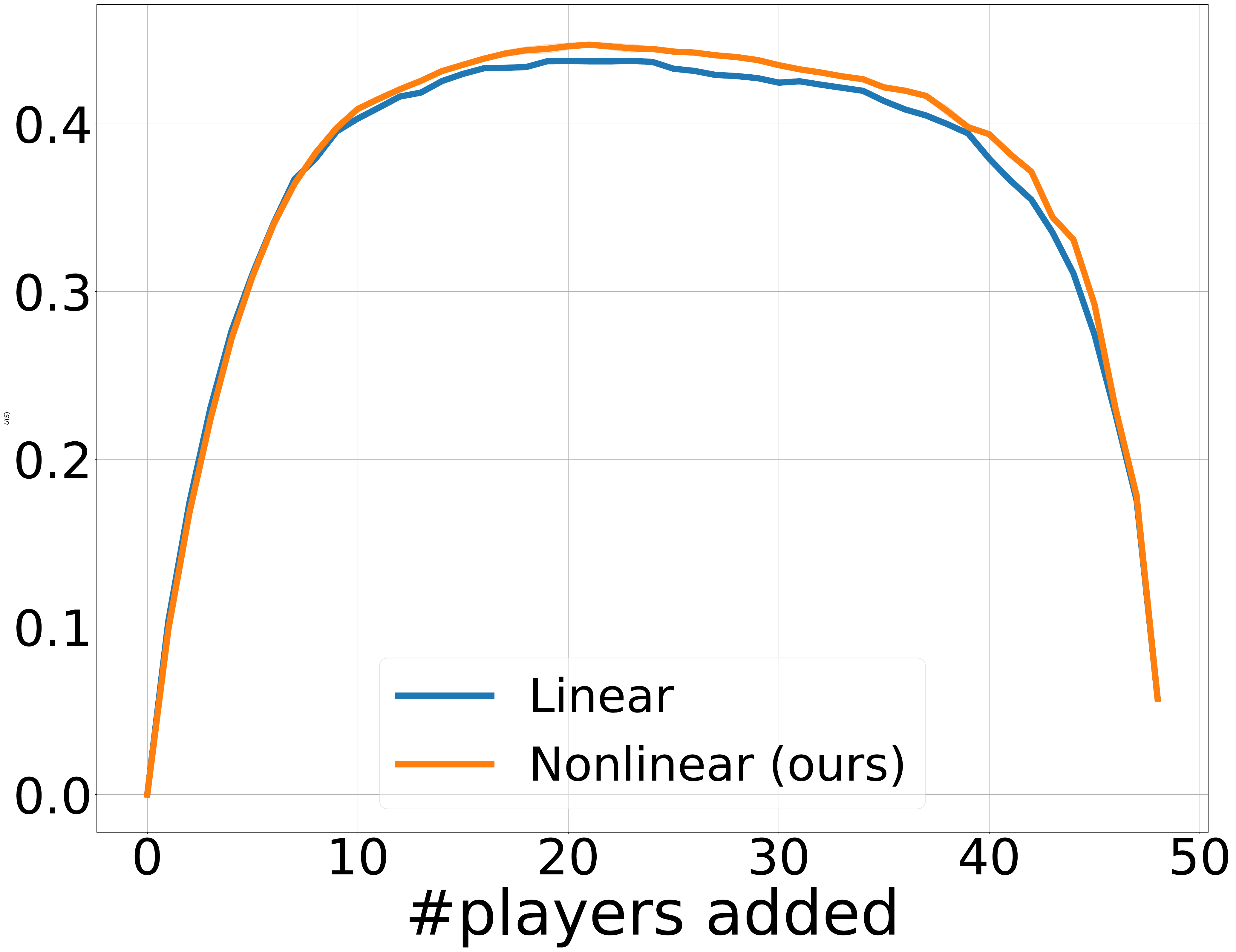} \\
		\phantom{aaaaa}GPSP ($n=32$) & \phantom{aaaaa}FOTP ($n=51$) & \phantom{aaaaaa}wave\_energy ($n=48$)\\
		\includegraphics[width=0.3\linewidth]{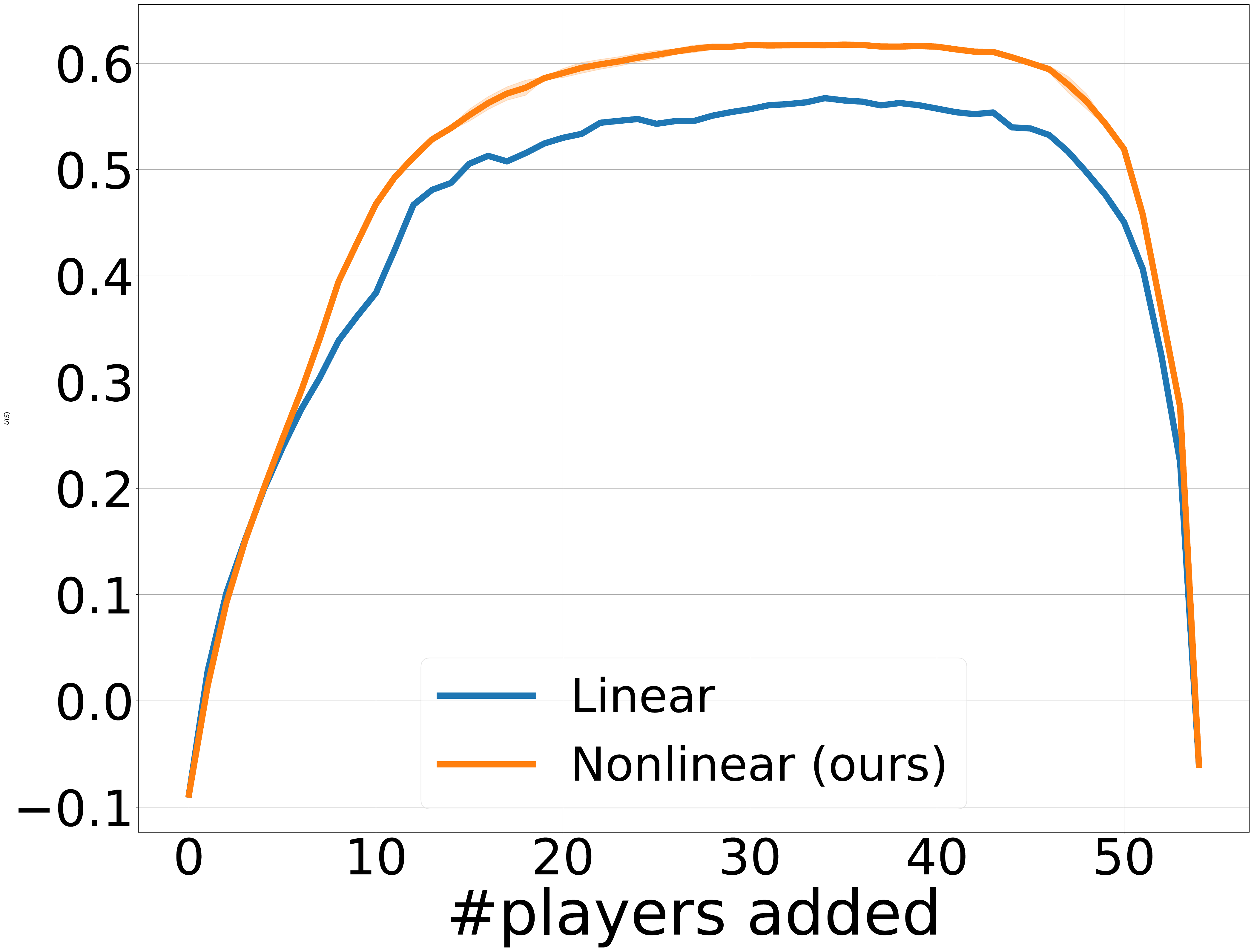} & \includegraphics[width=0.3\linewidth]{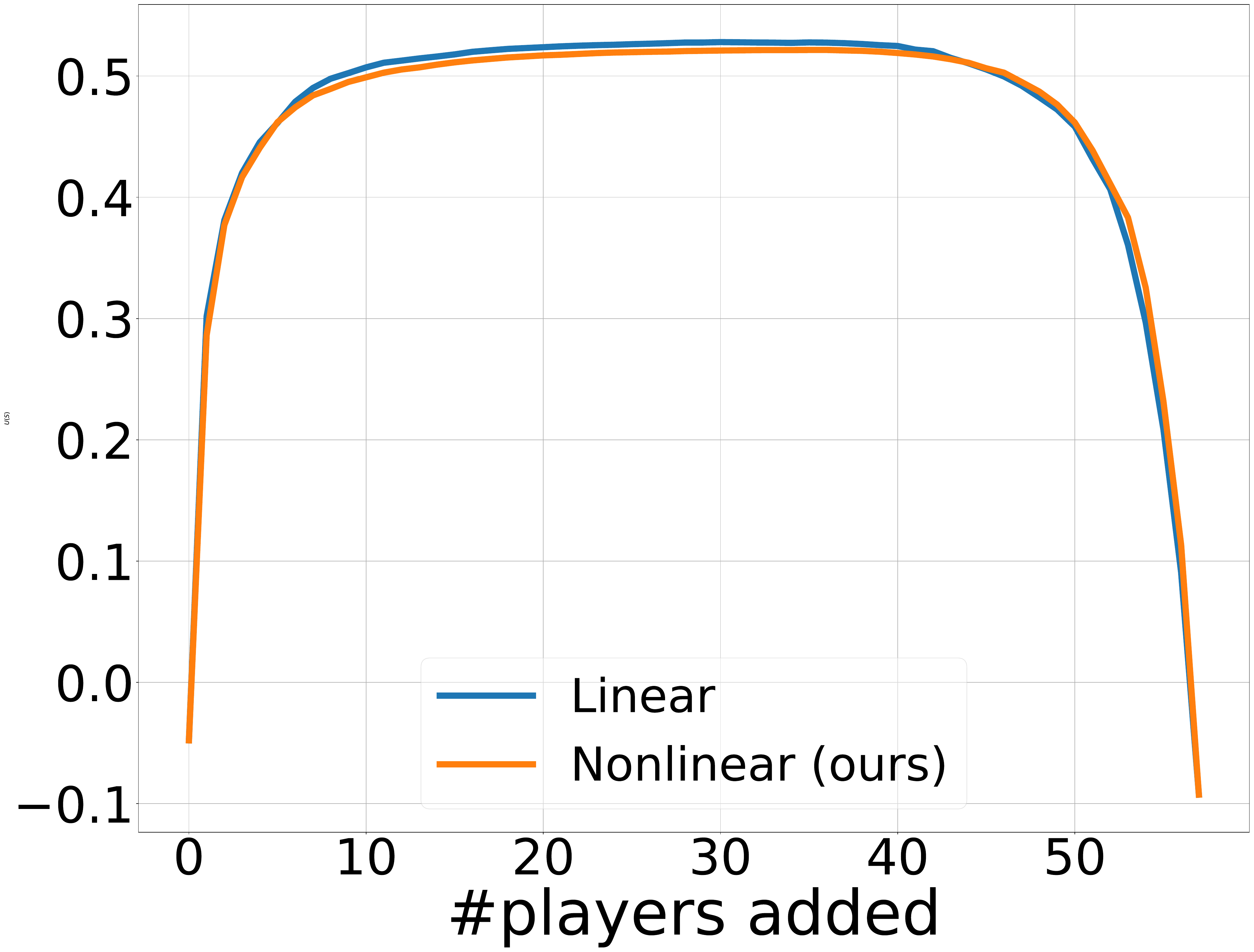} &
		\includegraphics[width=0.3\linewidth]{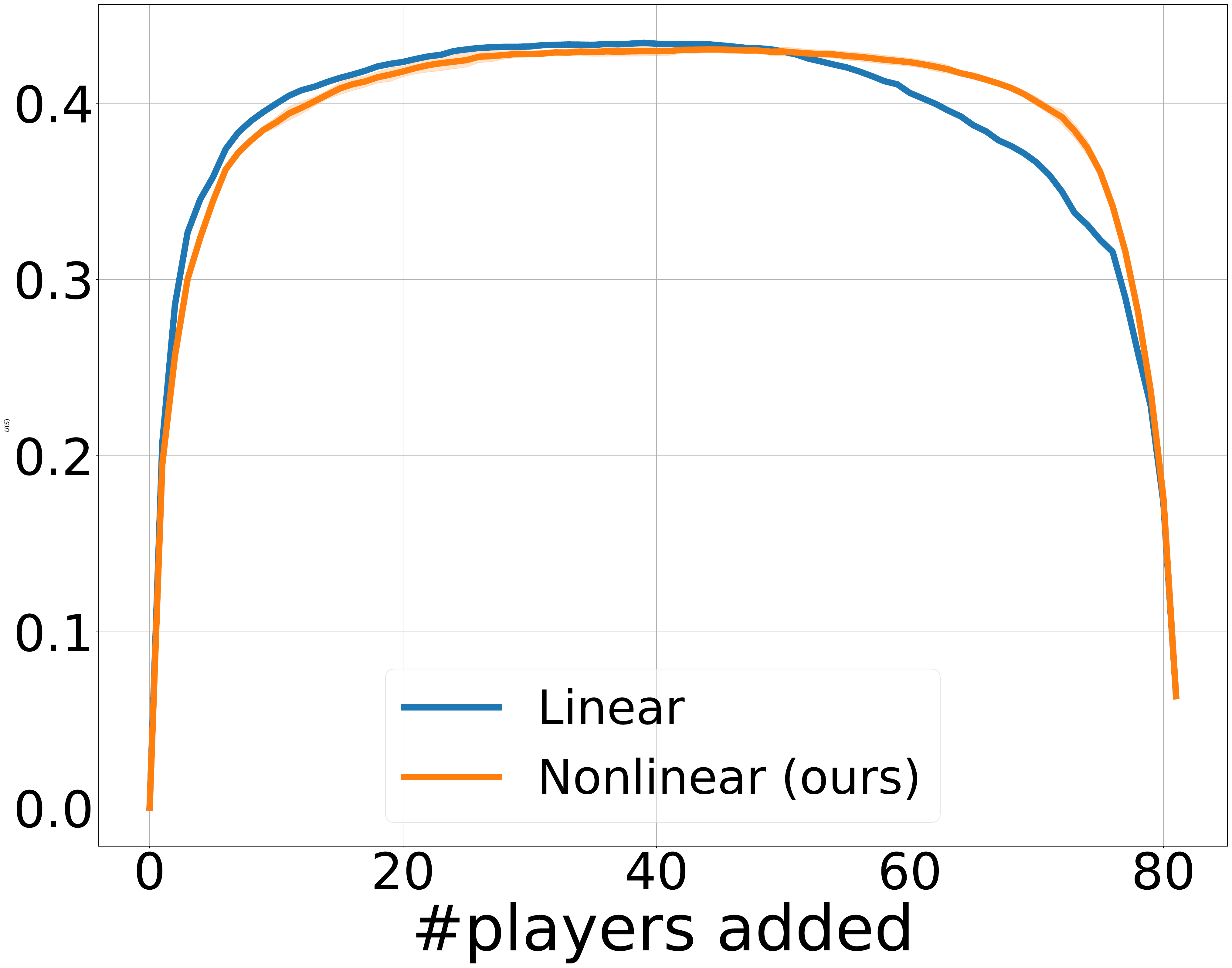} \\
		\phantom{aaaaa}jannis ($n=54$) & \phantom{aaaaa}spambase ($n=57$) & \phantom{aaaaaa}superconduct ($n=81$)
	\end{tabular}
	\caption{Comparison of attribution methods on six datasets for player ranking. A larger area under the curve indicates better performance. Since all utility functions contain more than $30$ players, nonlinear attribution methods are approximated by sampling $\#\mathrm{players}\times 1,000$ subsets, with mean and standard deviation reported over five random seeds. Linear methods are computed exactly. The linear and nonlinear curves shown are the best among $11$ and $6$ candidates respectively under the inclusion AUC metric.}
	\label{fig:appr}
\end{figure*}

\begin{restatable}{proposition}{OneNorm} \label{prop:p=1 is not good}
	The $1$-norm attribution method $\phi^{1}$ does not satisfy the consistency axiom.
\end{restatable}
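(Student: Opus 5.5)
The plan is to build an explicit counterexample, and to locate the failure in a non-uniqueness of the bias term $b$. That non-uniqueness is specific to $p=1$ (for $p\in(1,\infty)$ the problem~\eqref{op:p norm} has a unique solution). Fix $U_{n}$ and $c\in\mathbb{R}$, and let $U_{n+1}(S)=U_{n}(S\cap[n])+c\cdot\llbracket n+1\in S\rrbracket$. For a candidate $(\mathbf{x},x_{n+1},b)$ write $r_{T}(\mathbf{x},b)\coloneq A_{n}(T;\mathbf{x},b)-U_{n}(T)$ for $T\subseteq[n]$, and put $d\coloneq x_{n+1}-c$. Each $S\subseteq[n+1]$ is either $T$ or $T\cup\{n+1\}$, so the $1$-norm objective for $U_{n+1}$ splits as
\begin{equation}
	\sum_{T\subseteq[n]}|r_{T}(\mathbf{x},b)| + \sum_{T\subseteq[n]}|r_{T}(\mathbf{x},b+d)|.
\end{equation}
Each of the two sums is the objective of the $U_{n}$ problem, evaluated at $(\mathbf{x},b)$ and at $(\mathbf{x},b+d)$.

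From this decomposition I would characterize the optimal set. Let $m$ be the optimal value for $U_{n}$. The optimal value for $U_{n+1}$ is $2m$, and it is attained at $x_{n+1}=c$ by taking both points equal to any optimum of $U_{n}$. Hence $(\mathbf{x},c+d)\in\mathrm{OS}_{1}(U_{n+1})$ if and only if there is some $b$ for which both $(\mathbf{x},b)$ and $(\mathbf{x},b+d)$ are optimal for $U_{n}$. Suppose the optimal $b$ for $U_{n}$ is non-unique at the relevant $\mathbf{x}$, say it ranges over an interval of length $L>0$. Then every $x_{n+1}\in[c-L,c+L]$ is optimal. The minimum $\ell_{2}$-norm selection then sets $x_{n+1}$ to the point of that interval closest to $0$. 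For $c\neq 0$ and $L>0$ this differs from $c$, which violates part (i) of consistency.

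It remains to exhibit a concrete game with this property, and I expect this to be the main obstacle. Because $b$ enters as a common shift of all $2^{n}$ residuals, the optimal $b$ at fixed $\mathbf{x}$ is any median of the values $U_{n}(T)-\sum_{i\in T}x_{i}$. With $2^{n}$ (an even number of) values, the medians form a nondegenerate interval whenever the two middle values differ. The complication is that $\mathbf{x}$ and $x_{n+1}$ are selected jointly under the $\ell_{2}$ criterion, so I need to control which $\mathbf{x}$ is chosen.

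I would first try $n=2$ with $U_{2}=(0,0,0,1)$ in the binary ordering (the AND game) and $c=1$. I would solve the small linear program for $\mathrm{OS}_{1}(U_{2})$ by hand, or verify it by LP, find the minimum-norm $\mathbf{x}$, and check that the median interval for $b$ there has positive length. If the joint selection happens to pull $\mathbf{x}$ to a point where $b$ is unique, I would switch to a game that makes $\mathbf{x}=\mathbf{0}$ clearly optimal with a flat median. An example is $U_{2}=(0,1,1,0)$: its four values $\{0,1,1,0\}$ give a median interval $[0,1]$ for $b$. I would confirm that $\mathbf{x}=\mathbf{0}$ is the minimum-norm optimum. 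Then $\phi^{1}_{3}(U_{3})$ is the minimizer of $|x_{3}|$ over $[c-1,c+1]$, which is $0$ for $c=1$, not $c$. This contradicts consistency.
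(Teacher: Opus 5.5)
Your proposal is correct and follows essentially the paper's route. Both take an explicit two-player game at which $\mathbf{0}$ is $\ell_1$-optimal with a non-degenerate median interval for $b$: the paper uses $U_2=(0,3,2,1)$ with $b\in[1,2]$, and you use $(0,1,1,0)$ with $b\in[0,1]$. Both then add a player with $c=1$, so that $\mathbf{0}\in\mathrm{OS}_{1}(U_3)$ and hence $\phi^{1}_{3}(U_3)=0\neq 1$.

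Your splitting of the $U_{n+1}$ objective into the $U_n$ objective at $(\mathbf{x},b)$ plus the same at $(\mathbf{x},b+d)$ makes explicit the doubling of the optimal value, which the paper only asserts. It also shows that your joint-selection worry is moot for disproving consistency. Whenever $L>0$ and $c\neq 0$, the vector $(\phi^{1}(U_n),c)$ lies in $\mathrm{OS}_{1}(U_{n+1})$ but is not its least-norm element, so $\phi^{1}(U_{n+1})\neq(\phi^{1}(U_n),c)$.
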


Such a failure of the $1$-norm attribution method $\phi^{1}$ arises because $\minimize_{b \in \mathbb{R}} \sum_{j=1}^{m} |b - y_{j}|$ does not necessarily yield a unique solution. We provide a concrete counterexample to prove Proposition~\ref{prop:p=1 is not good}.

\begin{proof}
	Define $U_{2} \in \mathcal{G}_{2}$ by letting
	\begin{equation}
		\begin{gathered}
			U_{2}(\emptyset) = 0,\ U_{2}(\{1\}) = 3,\ U_{2}(\{2\}) = 2,\ U_{2}(\{1,2\}) = 1.
		\end{gathered}
	\end{equation}
	Then, $\phi^{1}(U_{2}) = (0, 0)$ as the solution of $b=1.5$ and $\mathbf{x}=(0,0)$ is optimal to the problem
	\begin{equation}
		\begin{gathered}
			\minimize_{\mathbf{x}\in \mathbb{R}^{2}, b\in \mathbb{R}} \sum_{S\subseteq [2]} |U_{2}(S)-b-\sum_{i\in S}x_{i}| .
		\end{gathered}
	\end{equation}
	The optimality is due to that the problem is convex and $\mathbf{0}$ is a subgradient at that point. In particular, $(b, 0, 0)$ with $b \in [1, 2]$ is optimal to the problem.
	Then, we construct a utility function $U_{3} \in \mathcal{G}_{3}$ by letting
	\begin{equation}
		\begin{gathered}
			U_{3}(S) \coloneq \begin{cases}
				U_{2}(S), & 3 \not\in S,\\
				U_{2}(S) + 1, & \text{otherwise.}
			\end{cases}
		\end{gathered}
	\end{equation}
	Note that
	\begin{equation}
		\begin{gathered}
			\min_{\mathbf{x} \in \mathbb{R}^{3}, b\in \mathbb{R}} \sum_{S\subseteq [3]}|U_{3}(S) - b - \sum_{i\in S}x_{i}|\ = 2e .
		\end{gathered}
	\end{equation}
	Then, one can verify that the solution of $b=2$ and $\mathbf{x}=(0,0,0)$ is optimal, which suggests that $\phi^{1}(U_{3}) = (0, 0, 0)$. With this example, it is clear that $\phi^{1}$ violates the axiom of consistency.
\end{proof}

Nevertheless, for the remaining values of $p$, the resulting attribution methods are axiomatically sound.

\begin{restatable}{theorem}{pNorm} \label{thm:p norm is good}
	For $p\in (1, \infty)\cup \{\infty\}$, the $p$-norm attribution method $\phi^{p}$ satisfies the axioms of consistency, equal treatment, monotonicity and translation invariance. When $n>2$, $\phi^{p}$ is linear if and only if $p=2$.
\end{restatable}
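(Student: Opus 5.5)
The plan is to work with the objective $F_p(\mathbf{x},b;U_n)\coloneq\|\mathbf{A}\mathbf{x}+b\mathbf{1}_{2^n}-\mathbf{V}(U_n)\|_p$ and exploit its symmetries and its structure under the consistency extension. For $p\in(1,\infty)$, $\mathrm{OS}_p(U_n)$ is a singleton, because $\mathbf{z}\mapsto\|\mathbf{z}\|_p^p$ is strictly convex and $[\mathbf{A}\ \mathbf{1}]$ has full column rank. For $p=\infty$, $\mathrm{OS}_\infty(U_n)$ is a nonempty closed convex polyhedron, so the least-$\ell_2$-norm element is unique.

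\textbf{Translation invariance.} Adding a constant $c$ to $U_n$ is absorbed by replacing $b$ with $b+c$. Hence $\mathrm{OS}_p$ is unchanged, and so is $\phi^p$.

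\textbf{Equal treatment.} Let $\sigma$ swap $i$ and $j$. The hypothesis gives $U_n(\sigma S)=U_n(S)$ for all $S$, so $F_p(\mathbf{x},b)=F_p(\sigma\mathbf{x},b)$ by permuting the rows. Therefore $\mathrm{OS}_p$ is $\sigma$-invariant. Since $\|\cdot\|_2$ is also $\sigma$-invariant, uniqueness of the minimizer forces $\phi^p=\sigma\phi^p$, that is, $\phi^p_i=\phi^p_j$.

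\textbf{Consistency.} Let $U_{n+1}(S)=U_n(S\cap[n])+c\llbracket n+1\in S\rrbracket$. Fix $(\mathbf{x},x_{n+1},b)$ and set $d=x_{n+1}-c$. The residual vector splits into two blocks, $\mathbf{r}$ (sets without $n+1$) and $\mathbf{r}+d\mathbf{1}$ (sets with $n+1$), where $\mathbf{r}=\mathbf{A}\mathbf{x}+b\mathbf{1}-\mathbf{V}(U_n)$. Reparametrize with $b'=b+d/2$, so the blocks become $\mathbf{s}-\tfrac d2\mathbf{1}$ and $\mathbf{s}+\tfrac d2\mathbf{1}$, with $\mathbf{s}$ the residual of $(\mathbf{x},b')$ for $U_n$. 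The objective is then $\|\mathbf{s}-\tfrac d2\mathbf{1}\|_p^p+\|\mathbf{s}+\tfrac d2\mathbf{1}\|_p^p$, or the max of the two sup-norms when $p=\infty$. This expression is even in $d$ and convex in $d$, so it is minimized at $d=0$, where it equals $2\|\mathbf{s}\|_p^p$ (resp.\ $\|\mathbf{s}\|_\infty$). Hence the optimal value for $U_{n+1}$ equals the $U_n$ value (times 2 for finite $p$).

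It follows that $\{(\mathbf{x},c,b):(\mathbf{x},b)\in\text{opt}_n\}\subseteq\text{opt}_{n+1}$. For $p<\infty$, uniqueness on both sides shows that the unique optimizer has $d=0$, and we are done. For $p=\infty$, other optima with $d\neq0$ may exist, and this is the main obstacle. At such a point we still have $\|\mathbf{s}\|_\infty\le\max_\pm\|\mathbf{s}\pm\tfrac d2\mathbf{1}\|_\infty=$ optimal value, so $(\mathbf{x},b')$ is optimal for $U_n$. Thus the $\mathbf{x}$-projection of $\mathrm{OS}_\infty(U_{n+1})$ is exactly $\mathrm{OS}_\infty(U_n)$, and $x_{n+1}=c+d$ for admissible $d$. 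The set of admissible $d$ given $(\mathbf{x},b')$ is the interval where $\max_\pm\|\mathbf{s}\pm\tfrac d2\mathbf{1}\|_\infty$ stays at the optimum. This interval is symmetric about $0$, so the overall $d$-range is symmetric.

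Minimizing $\|\mathbf{x}\|_2^2+(c+d)^2$ then decouples as follows. The $\mathbf{x}$ part is minimized at $\phi^\infty(U_n)$. For any fixed $\mathbf{x}$ with its own symmetric $d$-interval, I pick the best $d$ in that interval. The best $d$ is $0$ only if $c=0$, so in general $d=0$ is not forced, and consistency (i) could genuinely fail. To resolve this I will verify that the admissible $d$-set is actually $\{0\}$. The idea: at an optimum $\mathbf{s}$ of $U_n$ with value $e^*>0$, both $+e^*$ and $-e^*$ residuals are attained, since otherwise shifting $b'$ would decrease the value. Then any $d\neq0$ pushes $\max\|\mathbf{s}\pm\tfrac d2\mathbf{1}\|_\infty$ above $e^*$. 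If $e^*=0$, $U_n$ is additive and the claim is direct. This gives $d=0$ and $x_{n+1}=c$, after which the $\mathbf{x}$-block minimization yields $\phi^\infty(U_n)$.

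\textbf{Monotonicity.} Suppose $\Delta_S\coloneq U_n(S\cup i)-U_n(S)\ge0$ for all $S\not\ni i$ but $\phi^p_i=x_i<0$. Pair each $S\not\ni i$ with $S\cup i$. The residual pair is $(r_S,\;r_S+x_i-\Delta_S)$, where $x_i-\Delta_S<0$. Replace $x_i$ by $0$ and $b$ by $b+x_i/2$. Each pair's difference moves from $x_i-\Delta_S$ to $-\Delta_S$, which is closer to $0$, and the pair's midpoint is preserved up to a common shift. A convexity argument on the pair, using that $|u|^p+|v|^p$ does not increase when $(u,v)$ is moved toward $(m,m)$ with the same midpoint, and the analogue for the max when $p=\infty$, shows that the objective does not increase and $\|\mathbf{x}\|_2$ strictly decreases. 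This contradicts the definition of $\phi^p$. Part (ii) is symmetric.

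\textbf{Linearity iff $p=2$.} For $p=2$, the minimizer is a least-squares solution, i.e.\ a fixed linear map of $\mathbf{V}(U_n)$, so $\phi^2$ is linear. For $p\ne2$ and $n\ge3$, I would exhibit an explicit pair $U,U'$, e.g.\ based on the $n=3$ example above, such that $\phi^p(U+U')\neq\phi^p(U)+\phi^p(U')$. A convenient check is homogeneity under $c=-1$ combined with a non-symmetric residual distribution, or a numeric counterexample verified through the first-order conditions $\mathbf{A}^\top\mathrm{sgn}(\mathbf{r})|\mathbf{r}|^{p-1}=0$.
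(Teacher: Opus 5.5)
Your proofs of translation invariance, equal treatment (swap symmetry plus uniqueness of the min-norm point), consistency (the reparametrization $b'=b+d/2$), monotonicity (the midpoint-preserving move $x_i\mapsto 0$, $b\mapsto b+x_i/2$) and linearity of $\phi^{2}$ are sound. In places they are cleaner and more uniform in $p$ than the paper's first-order-condition arguments. For $p=\infty$ you can skip the consistency detour: $\max_{\pm}\|\mathbf{s}\pm t\mathbf{1}\|_\infty=\|\mathbf{s}\|_\infty+|t|$ forces $d=0$ at once.

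The genuine gap is the ``only if'' half of the last sentence. You announce a counterexample but never produce one. The suggested ``homogeneity under $c=-1$'' check cannot work. For every $p\in(1,\infty]$ and every $c\in\mathbb{R}$, the map $(\mathbf{x},b)\mapsto(c\mathbf{x},cb)$ sends optimal solutions for $U_n$ to optimal solutions for $cU_n$ and scales $\|\mathbf{x}\|_2$ by $|c|$. Hence $\phi^p(cU_n)=c\,\phi^p(U_n)$ always, and nonlinearity can only appear as a failure of additivity. A numerical counterexample certifies nonlinearity only for the single $p$ tested. The claim, however, covers every $p\in(1,\infty)\setminus\{2\}$ and also $p=\infty$, where the min-norm selection must be handled. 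You also need to pass from $n=3$ to all $n\geq 3$. This follows from consistency with null players, since $U_3\mapsto U_3(\cdot\cap[3])$ is linear.

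What is missing is an argument uniform in $p$. The paper obtains one by contradiction. If $\phi^p$ were linear, then together with consistency, equal treatment and monotonicity (which you have established), Weber's characterization forces $\phi^p$ to be a semivalue with weights $\gamma^{[n]}$. Small games whose first-order conditions in $\psi_p(t)=\mathrm{sign}(t)|t|^{p-1}$ can be solved in closed form for every $p$ then pin the weights down:
\begin{itemize}
\item a symmetric two-player game gives $\gamma^{[2]}=(\tfrac12,\tfrac12)$;
\item consistency then gives $\gamma^{[3]}_1+\gamma^{[3]}_2=\gamma^{[3]}_2+\gamma^{[3]}_3=\tfrac12$;
\item a further three-player game gives $\gamma^{[3]}=(\tfrac14,\tfrac14,\tfrac14)$;
\item a last game reduces a first-order condition to $3\psi_p(1)=\psi_p(3)$, i.e.\ $3^{p-2}=1$, so $p=2$.
\end{itemize}
For $p=\infty$ the paper uses the family $U_3^t$, $t\in[3,5]$. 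A direct LP argument gives $\phi^\infty(U_3^t)=(2,1,1)$ independently of $t$. Linearity would then force the semivalue weights to be $(1,0,0)$, which yields $(4,3,3)$ instead. Your plan needs some closed-form family of this kind before the linearity claim is proved.
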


\paragraph{Possible Weighted Extensions.}
For $p \in (1, \infty)$, we can define weighted versions of the $p$-norm objective while retaining the axiomatic guarantees of Theorem~\ref{thm:p norm is good}. 
Given $\omega \in (0, 1)$ and $p \in (1, \infty)$, define $\phi^{p, \omega}(U_{n})$ as the unique optimal solution of $\mathbf{x}$ to the problem
\begin{equation}
	\begin{gathered}
		\minimize_{\mathbf{x}\in \mathbb{R}^{n}, b\in \mathbb{R}}\sum_{S \subseteq  [n]} \omega^{|S|}(1-\omega)^{n-|S|}\left|A_{n}(S; \mathbf{x}, b) - U_{n}(S)\right|^{p} .
	\end{gathered}
\end{equation} 
Then, $\phi^{2,\omega}$ is the weighted Banzhaf value parameterized by $\omega$ \citep{marichal2011weighted,li2023robust}. Though we did not find a rigorous
way to prove that $\phi^{p,\omega}$ is linear if and only if $p=2$, we conjecture that this claim holds true.

\begin{figure*}[t]
	\centering
	\begin{tabular}{ccc}
		\includegraphics[width=0.3\linewidth]{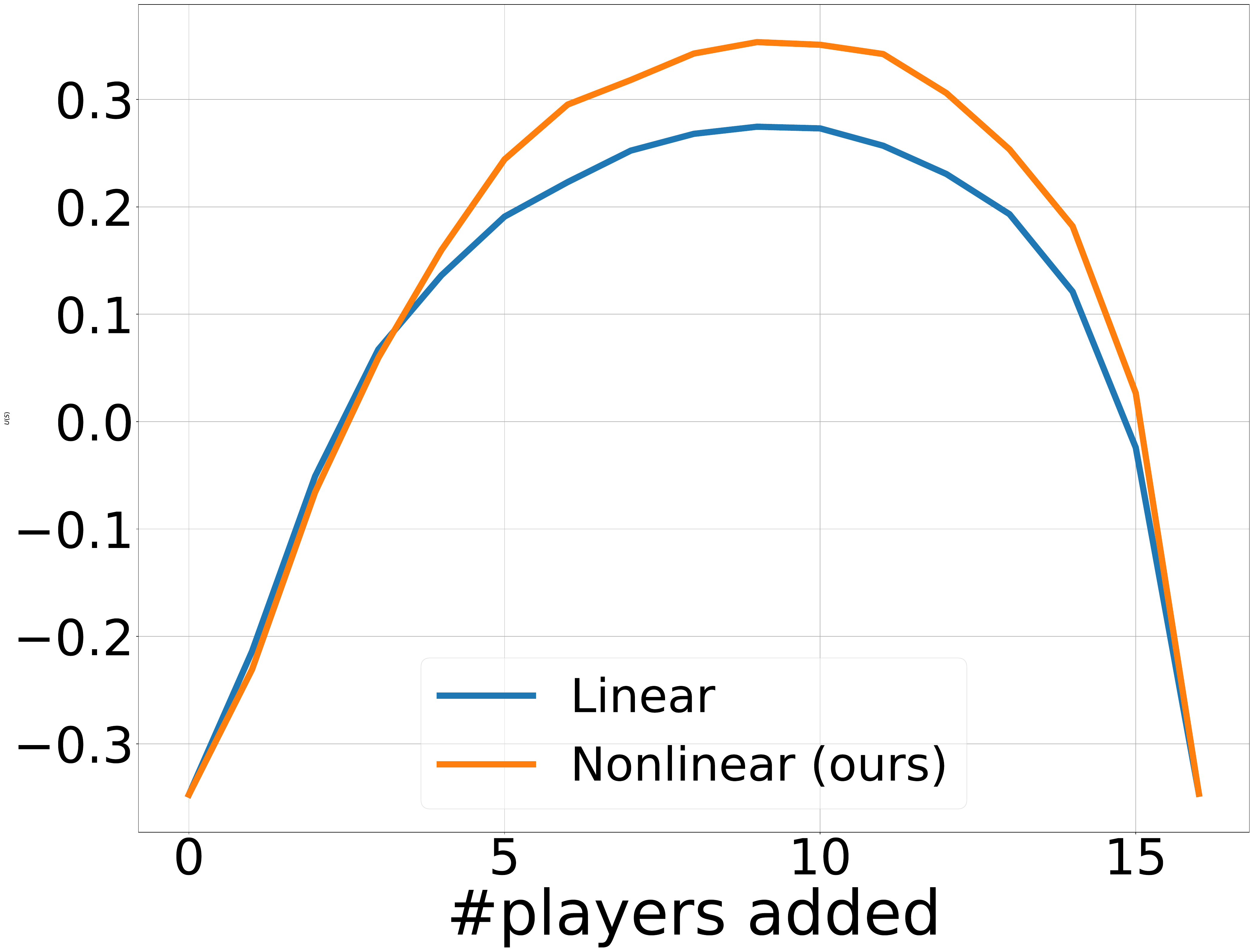} & \includegraphics[width=0.3\linewidth]{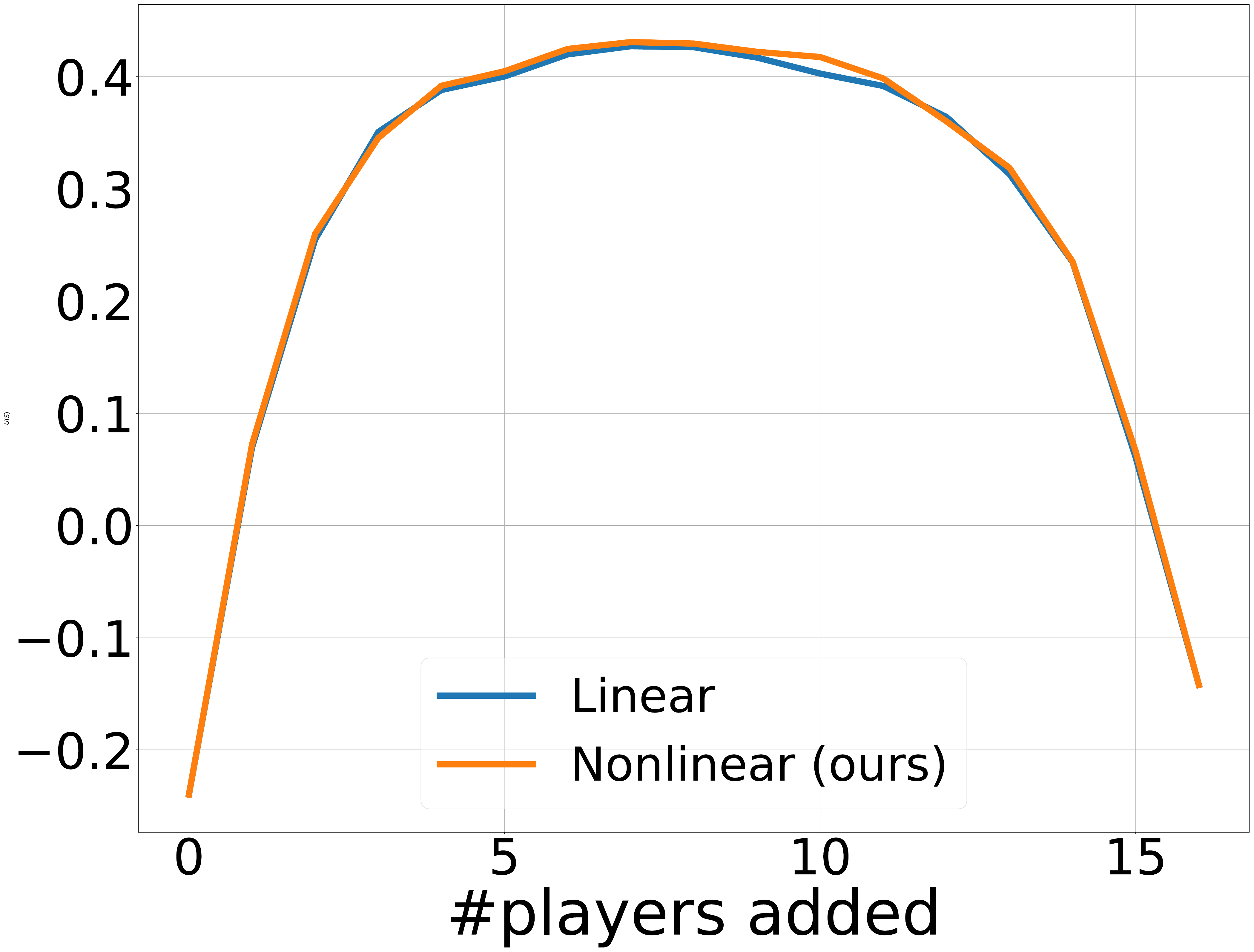} &
		\includegraphics[width=0.3\linewidth]{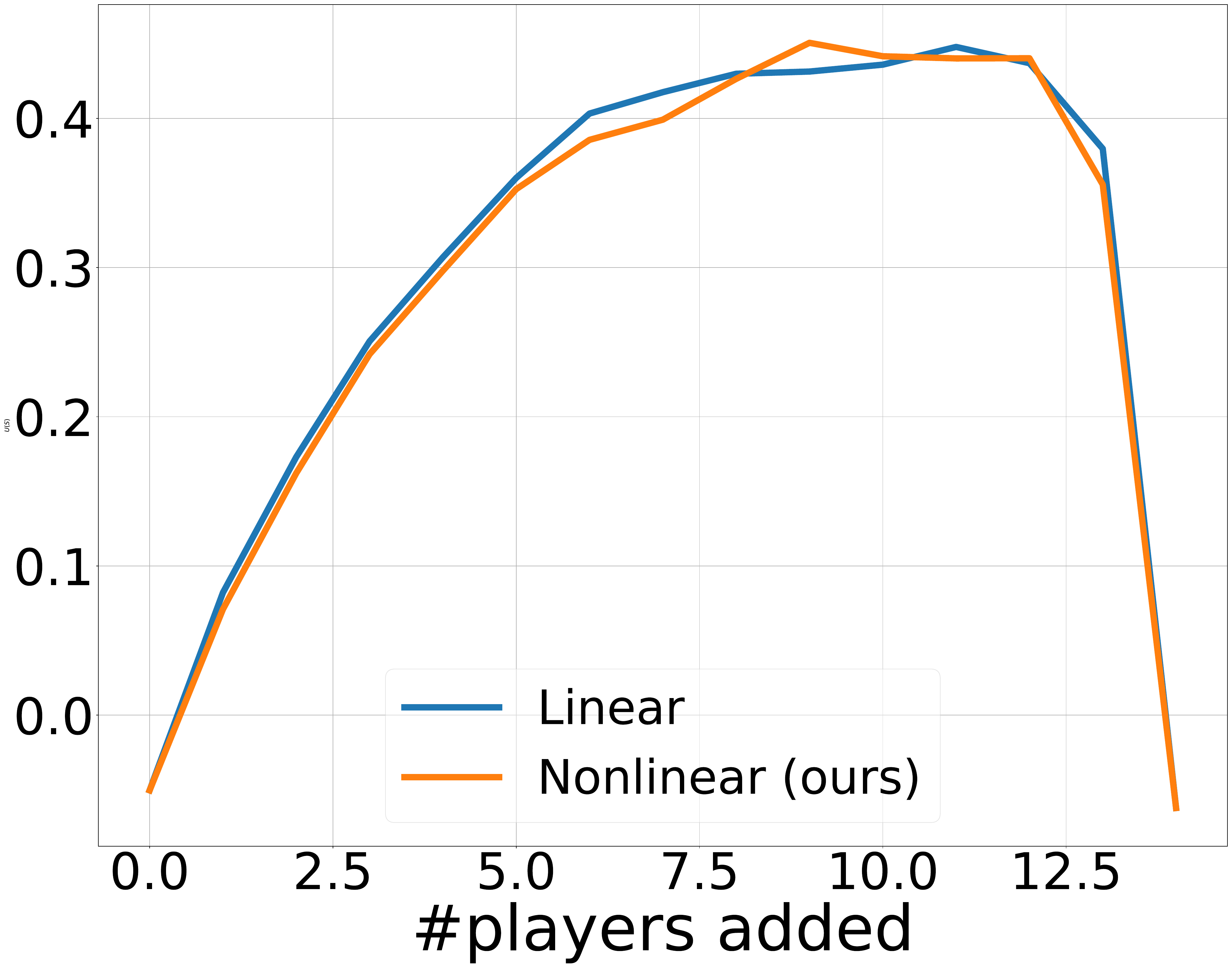} \\
		\phantom{aaaaa}letter ($n=16$) & \phantom{aaaaa}pendigits $(n=16)$ & \phantom{aaaaa}EES $(n=14)$\\
		\includegraphics[width=0.3\linewidth]{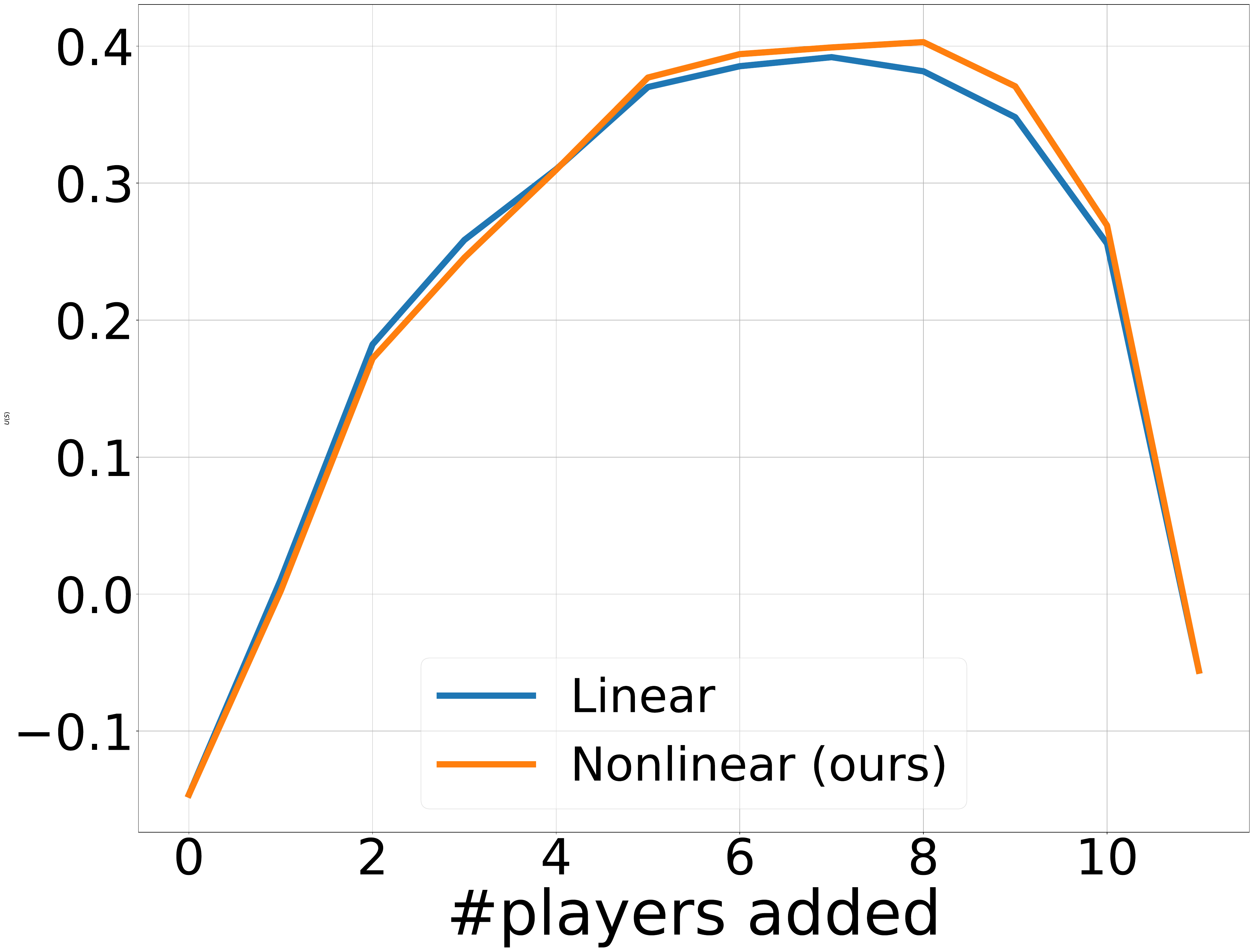} & \includegraphics[width=0.3\linewidth]{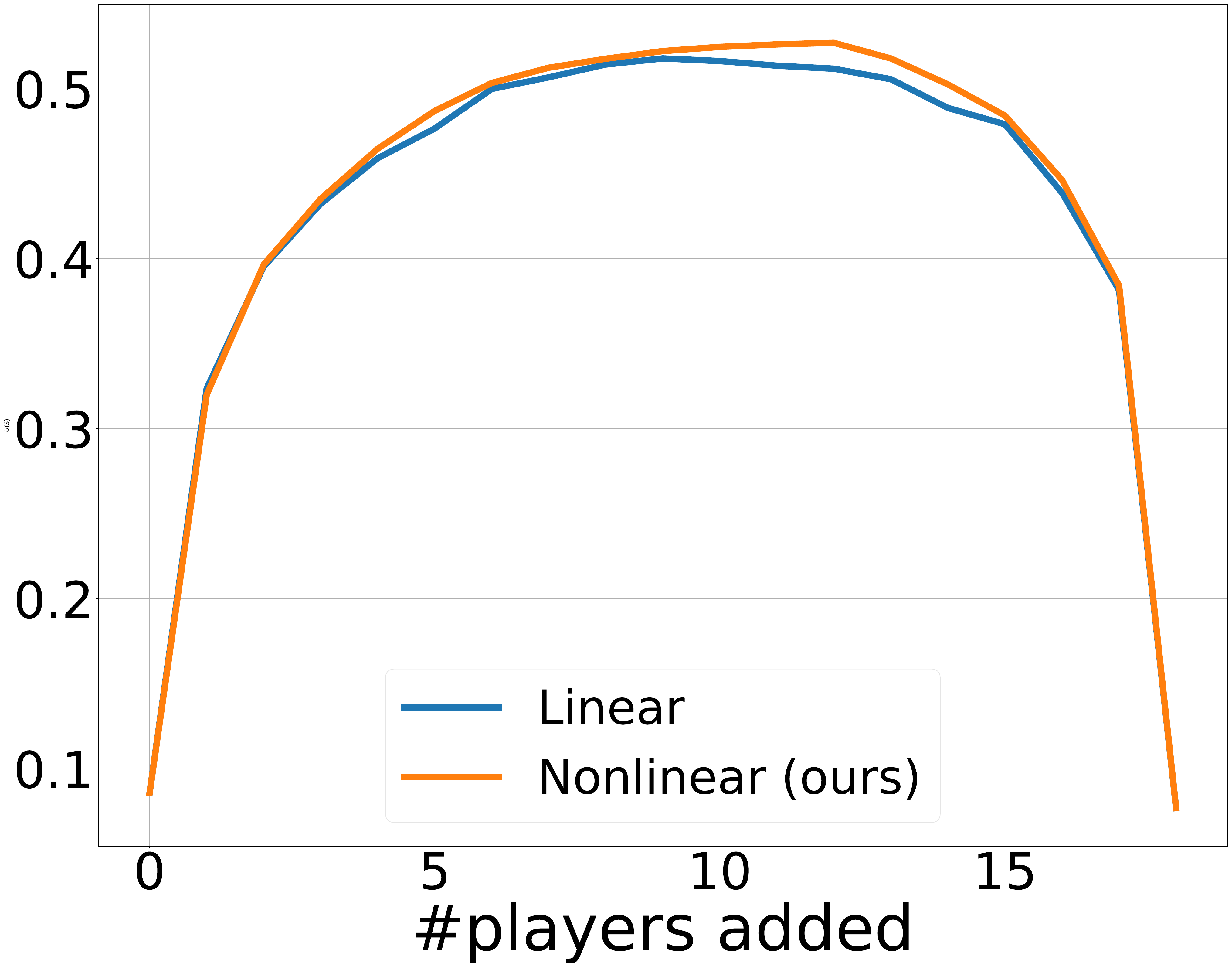} &
		\includegraphics[width=0.3\linewidth]{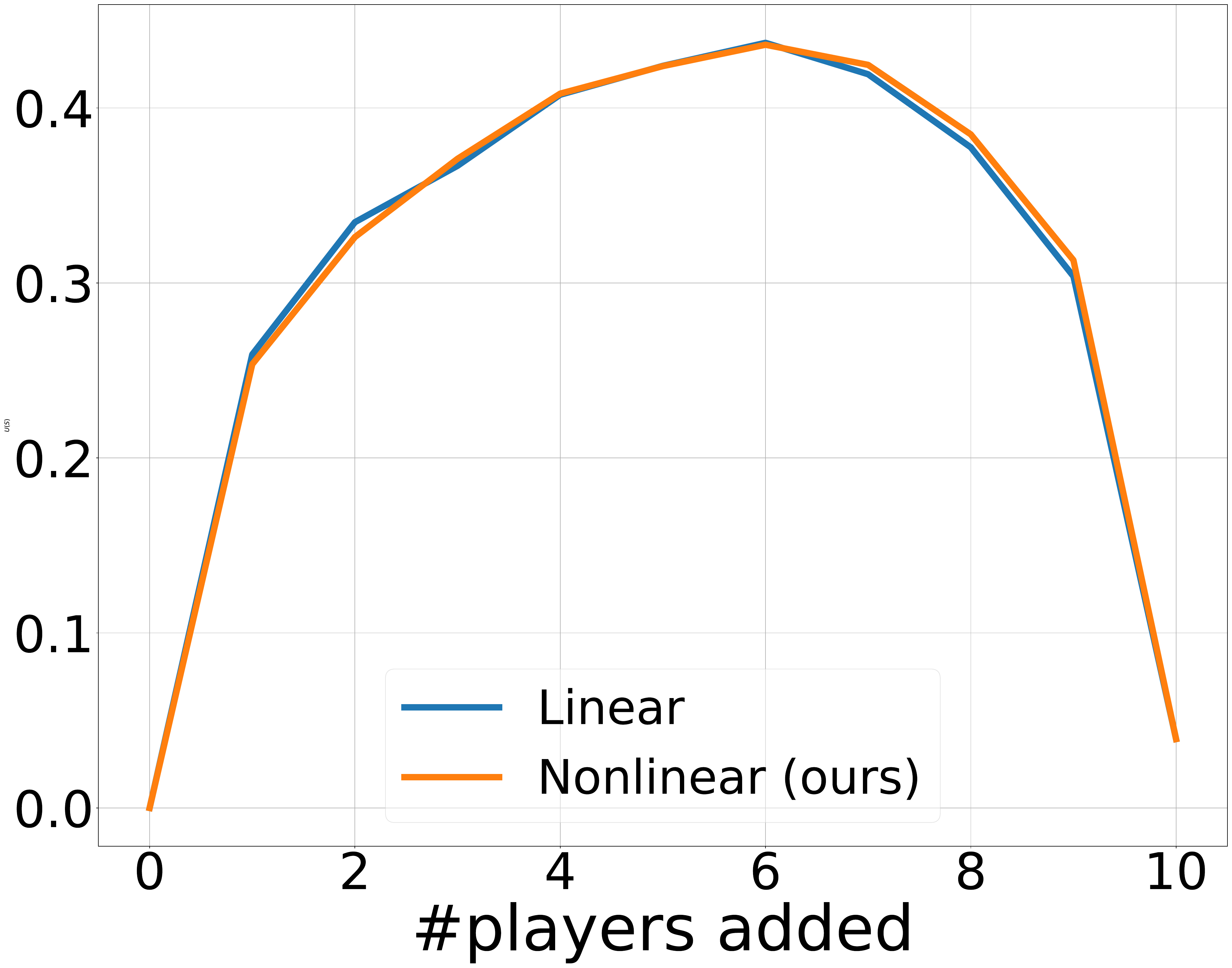} \\
		\phantom{aaaaa}WQW ($n=11$) & \phantom{aaaaa}elevators ($n=18$) & \phantom{aaaaa}credit ($n=10$)
	\end{tabular}
	\caption{Comparison of attribution methods on six datasets for feature ranking. A larger area under the curve indicates better performance. Since all utility functions contain fewer than $20$ players, all subsets are used for nonlinear methods and exact computations are performed for linear methods. The linear and nonlinear curves shown are the best among $11$ and $6$ candidates respectively under the inclusion AUC metric.}
	\label{fig:exact}
\end{figure*}

\begin{corollary}
	For every $\omega \in (0,1)$ and every $p \in (1, \infty)$, the weighted $p$-norm attribution method $\phi^{p,\omega}$ satisfies the axioms of consistency, equal treatment, monotonicity and translation invariance.
\end{corollary}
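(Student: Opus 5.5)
We reuse the proof of Theorem~\ref{thm:p norm is good} and make only the changes forced by the weights $w_S \coloneq \omega^{|S|}(1-\omega)^{n-|S|}$. For $p\in(1,\infty)$ the objective $F(\mathbf{x},b)=\sum_S w_S|A_n(S;\mathbf{x},b)-U_n(S)|^p$ is strictly convex in the vector $(\mathbf{A}\mathbf{x}+b\mathbf{1})$. Since $\mathbf{A}$ augmented by $\mathbf{1}$ has full column rank (the rows $\emptyset$ and $\{i\}$ already identify $b$ and each $x_i$), $F$ is strictly convex in $(\mathbf{x},b)$. It is also coercive, because every $w_S>0$. Hence a unique minimizer exists and $\phi^{p,\omega}$ is well defined. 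We write the first-order conditions
\begin{equation}
\sum_{S} w_S\, \psi(r_S)=0,\qquad \sum_{S\ni i} w_S\,\psi(r_S)=0\ \ \forall i,
\end{equation}
where $r_S \coloneq A_n(S;\mathbf{x},b)-U_n(S)$ and $\psi(t)\coloneq |t|^{p-1}\mathrm{sign}(t)$, which is strictly increasing. Each axiom is then checked through uniqueness together with these conditions.

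\emph{Translation invariance.} Adding a constant $c$ to $U_n$ is absorbed by $b\mapsto b+c$, so $\mathbf{x}$ is unchanged. \emph{Equal treatment.} The swap $\sigma$ of $i$ and $j$ preserves $|S|$, so it preserves the weights. If $U_n$ is symmetric in $i,j$, then swapping $x_i$ and $x_j$ in a minimizer gives another minimizer, and uniqueness yields $x_i=x_j$.

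\emph{Consistency.} Let $U_{n+1}(S)=U_n(S\cap[n])+c\llbracket n+1\in S\rrbracket$, and let $(\mathbf{x}^*,b^*)$ be optimal for $U_n$. We propose $(\mathbf{x}^*,c,b^*)$ for $U_{n+1}$. Each $T\subseteq[n]$ occurs as $T$ with weight $w^{(n+1)}_T=(1-\omega)w^{(n)}_T$ and as $T\cup\{n+1\}$ with weight $\omega w^{(n)}_T$, and both occurrences have residual $r_T$. The $U_{n+1}$ objective therefore equals the $U_n$ objective, since the two factors sum to $1$, and the stationarity conditions for $b$ and for $x_i$, $i\le n$, follow from those for $U_n$. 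The condition for $x_{n+1}$ equals $\omega\sum_T w^{(n)}_T\psi(r_T)=0$, which is the $b$-condition for $U_n$. Uniqueness then gives both parts of consistency. This product structure of the weights is the one place where the weighting matters, so it must be checked carefully.

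\emph{Monotonicity.} This is the main obstacle. Suppose $U_n(S\cup i)\ge U_n(S)$ for all $S$, and suppose for contradiction that $x_i<0$. Pair each $S\not\ni i$ with $S\cup i$. The residual difference is
\begin{equation}
r_{S\cup i}-r_S=x_i-[U_n(S\cup i)-U_n(S)]<0,
\end{equation}
so $\psi(r_{S\cup i})<\psi(r_S)$ for every pair. The weights satisfy $w_{S\cup i}=\frac{\omega}{1-\omega}w_S$. Let $\Sigma_{\notin}\coloneq\sum_{S\not\ni i}w_S\psi(r_S)$. The $x_i$-condition gives $\sum_{S\not\ni i}\omega w_S\psi(r_{S\cup i})/(1-\omega)=0$. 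Combined with the $b$-condition, this forces $\Sigma_{\notin}=0$. But strict monotonicity gives $\sum_{S\not\ni i}w_S\psi(r_{S\cup i})<\Sigma_{\notin}=0$, which contradicts the first equality. Hence $x_i\ge0$, and part (ii) follows symmetrically. The same argument is expected to be the one used for $\phi^p$ with uniform weights. The only point to verify here is that the constant ratio $\omega/(1-\omega)$ between paired weights keeps the pairing valid, and it does.
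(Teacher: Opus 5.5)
Your proposal is correct and follows essentially the route the paper intends: the corollary comes from rerunning the first-order-condition arguments in the proof of Theorem~\ref{thm:p norm is good} with the weights inserted. As you identify, the only new facts needed are $w^{(n+1)}_T + w^{(n+1)}_{T\cup\{n+1\}} = w^{(n)}_T$ for consistency and the constant ratio $\omega/(1-\omega)$ between paired weights for monotonicity. Your equal-treatment argument via swap symmetry plus uniqueness (instead of subtracting the $x_i$- and $x_j$-stationarity conditions) is a harmless variation, and your explicit existence-and-uniqueness check usefully supplies what the paper only asserts.
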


\subsection{Optimization} 
Note that when $n$ is sufficiently large, all of the mentioned attribution methods can only be approximated, unless the utility function $U_{n}$ exhibits certain favorable properties \citep[e.g.,][]{jia2019efficient,lundberg2020local}.

For $p \in (1, \infty)$, $\phi^{p}(U_{n})$ can be effectively approximated by solving the minimization problem~\eqref{op:p norm}, since its solution is unique. However, this is not the case for $\phi^{\infty}$, as it is defined as the optimal solution with the smallest $\ell_{2}$-norm among potentially many minimizers. To facilitate the approximation of $\phi^{\infty}$, we establish the following theoretical result.

\begin{restatable}{theorem}{SolveInfinity} \label{thm:sovleinfinity}
	For every $\eta>0$, let $\phi^{\infty}(U_{n}; \eta)$ denote the $\mathbf{x}$-part of an optimal solution to 
	\begin{equation}
		\begin{gathered}
			\minimize_{\mathbf{x}\in \mathbb{R}^{n}, b\in \mathbb{R}}\  \|\mathbf{Ax} + b\cdot\mathbf{1}_{2^{n}} - \mathbf{V}(U_{n})\|_{\infty} + \eta\cdot\|\mathbf{x}\|_{2}^{2} .
		\end{gathered}
	\end{equation}
	Then, we have $\phi^{\infty}(U_{n}; \eta) \rightarrow \phi^{\infty}(U_{n})$ as $\eta \rightarrow 0$.
\end{restatable}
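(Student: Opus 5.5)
The plan is the standard Tikhonov-regularization argument: the regularized minimizers stay bounded, every limit point is an optimal solution of the unregularized problem, and the norm bound forces that limit point to be the minimum-norm element $\phi^{\infty}(U_{n})$.

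Write $f(\mathbf{x},b)\coloneq\|\mathbf{Ax}+b\cdot\mathbf{1}_{2^{n}}-\mathbf{V}(U_{n})\|_{\infty}$ and let $f^{*}$ be its minimum over $(\mathbf{x},b)$.

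\emph{Step 1: well-definedness and the structure of $\mathrm{OS}_{\infty}(U_{n})$.}
\begin{itemize}
\item The minimum $f^{*}$ is attained, since minimizing $f$ is a linear program in $(\mathbf{x},b,e)$ that is feasible and bounded below by $0$.
\item The set of optimal pairs $(\mathbf{x},b)$ is a polyhedron. Hence its projection $\mathrm{OS}_{\infty}(U_{n})$ onto the $\mathbf{x}$-coordinates is a nonempty closed convex set, being again a polyhedron.
\item It follows that $\phi^{\infty}(U_{n})=\mathbf{x}^{*}$, the minimum-$\ell_2$-norm element of this set, exists and is unique. Fix some $b^{*}$ with $f(\mathbf{x}^{*},b^{*})=f^{*}$.
\item The $\mathbf{x}$-part of a regularized minimizer is unique. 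Suppose two optimal pairs of $f+\eta\|\mathbf{x}\|_2^2$ had different $\mathbf{x}$-parts. At their midpoint, $f$ is no larger (by convexity), while $\|\mathbf{x}\|_2^2$ is strictly smaller (by strict convexity). This contradicts optimality, so $\phi^{\infty}(U_{n};\eta)$ is well defined even though $b$ may not be unique.
\item The regularized problem does have a minimizer. Its objective is at least $f(\mathbf{x},b)\ge |b-U_{n}(\emptyset)|$ (the row $S=\emptyset$) and at least $\eta\|\mathbf{x}\|_2^2$, so it is coercive in $(\mathbf{x},b)$.
\end{itemize}

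\emph{Step 2: a priori bounds.} Let $(\mathbf{x}_\eta,b_\eta)$ be a regularized minimizer. Comparing its objective with that of $(\mathbf{x}^{*},b^{*})$ gives
\begin{equation}
f(\mathbf{x}_\eta,b_\eta)+\eta\|\mathbf{x}_\eta\|_2^2\le f^{*}+\eta\|\mathbf{x}^{*}\|_2^2 .
\end{equation}
Since $f(\mathbf{x}_\eta,b_\eta)\ge f^{*}$, this yields two consequences:
\begin{itemize}
\item $\|\mathbf{x}_\eta\|_2\le\|\mathbf{x}^{*}\|_2$ for every $\eta>0$;
\item $f(\mathbf{x}_\eta,b_\eta)\le f^{*}+\eta\|\mathbf{x}^{*}\|_2^2\to f^{*}$ as $\eta\to 0$.
\end{itemize}
The row $S=\emptyset$ again gives $|b_\eta-U_{n}(\emptyset)|\le f(\mathbf{x}_\eta,b_\eta)$, so $b_\eta$ stays bounded for, say, $\eta\le 1$.

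\emph{Step 3: limit points.} Take any sequence $\eta_k\to 0$. By boundedness, every subsequence has a further subsequence along which $(\mathbf{x}_{\eta_k},b_{\eta_k})\to(\bar{\mathbf{x}},\bar b)$.
\begin{itemize}
\item Continuity of $f$ gives $f(\bar{\mathbf{x}},\bar b)\le f^{*}$, so $\bar{\mathbf{x}}\in\mathrm{OS}_{\infty}(U_{n})$.
\item Passing the bound from Step 2 to the limit gives $\|\bar{\mathbf{x}}\|_2\le\|\mathbf{x}^{*}\|_2$.
\item Uniqueness of the minimum-norm point of the closed convex set $\mathrm{OS}_{\infty}(U_{n})$ then forces $\bar{\mathbf{x}}=\mathbf{x}^{*}$.
\end{itemize}
Since every subsequence has a further subsequence whose $\mathbf{x}$-parts converge to the same limit $\mathbf{x}^{*}$, we conclude $\phi^{\infty}(U_{n};\eta)\to\phi^{\infty}(U_{n})$.

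The only delicate points are the closedness and convexity of $\mathrm{OS}_{\infty}(U_{n})$ and the boundedness of the possibly non-unique bias $b_\eta$. Both are handled above: the first by the polyhedral (LP) structure of the optimal set, the second by the $S=\emptyset$ row of the residual. The remainder of the argument is routine.
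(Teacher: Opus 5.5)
Your proposal is correct and follows essentially the paper's argument: compare the regularized objective at $(\mathbf{x}_\eta,b_\eta)$ with its value at an optimal pair for $\phi^{\infty}(U_n)$ to get $\|\mathbf{x}_\eta\|_2\le\|\phi^{\infty}(U_n)\|_2$ and $f(\mathbf{x}_\eta,b_\eta)\to f^{*}$, extract convergent subsequences, and conclude from the uniqueness of the minimum-norm optimal point. The one difference is the boundedness step: the paper bounds $(\mathbf{x}_\eta,b_\eta)$ jointly via the full column rank of $[\mathbf{A},\mathbf{1}_{2^n}]$ and its smallest singular value, whereas you use the norm bound for $\mathbf{x}_\eta$ and the $S=\emptyset$ row for $b_\eta$, which is more elementary; you also make explicit two well-definedness facts the paper leaves implicit, namely uniqueness of the regularized $\mathbf{x}$-part and closed convexity of $\mathrm{OS}_{\infty}(U_n)$.
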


The key to proving this result is that the constraint $\|\mathbf{A}\mathbf{x} - b\cdot\mathbf{1}_{n} - \mathbf{V}(U_{n})\|_{\infty} \leq e$ implies boundedness of $(\mathbf{x}, b)$. While previous studies have focused on approximating the least core, it remains unclear how to approximate the egalitarian least core $\phi^{\mathrm{ELC}}$. Nevertheless, the regularization technique used for approximating $\phi^{\infty}$ can be extended to $\phi^{\mathrm{ELC}}$ as well.

\begin{restatable}{theorem}{SolveELC} \label{thm:solveELC}
	For every $\eta>0$, let $\phi^{\mathrm{ELC}}(U_{n}; \eta)$ denote the $\mathbf{x}$-part of an optimal solution to
	\begin{equation}
		\begin{gathered}
			\minimize_{\mathbf{x}\in \mathbb{R}^{n}, e\in \mathbb{R}}\ e + \eta\cdot\|\mathbf{x}\|_{2}^{2} \\
			\text{subject to }\ U_{n}(S) - A_{n}(S; \mathbf{x}, U_{n}(\emptyset)) \leq e \ \text{ for every } S \subsetneq [n] \\ 
			\text{and }\ A_{n}([n]; \mathbf{x}, U_{n}(\emptyset)) = U_{n}([n]). 
		\end{gathered}
	\end{equation}
	Then, we have $\phi^{\mathrm{ELC}}(U_{n}; \eta) \rightarrow \phi^{\mathrm{ELC}}(U_{n})$ as $\eta \rightarrow 0$.
\end{restatable}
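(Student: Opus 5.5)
We prove Theorem~\ref{thm:solveELC} with the standard Tikhonov-type argument for a convex program with a strictly convex perturbation, the same one behind Theorem~\ref{thm:sovleinfinity}. Write $F = \{(\mathbf{x}, e) : U_{n}(S) - A_{n}(S; \mathbf{x}, U_{n}(\emptyset)) \leq e \ \forall S \subsetneq [n],\ \sum_{i} x_{i} = U_{n}([n]) - U_{n}(\emptyset)\}$ for the feasible set. Let $e^{*}$ be the optimal value of problem~\eqref{op:one-sided least core}, so that $\mathrm{LC}(U_{n}) = \{\mathbf{x} : (\mathbf{x}, e^{*}) \in F\}$. This set is a nonempty polyhedron by \citet{blum1972direct}. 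It is also convex, so its least-norm element $\mathbf{x}^{*} = \phi^{\mathrm{ELC}}(U_{n})$ is unique.

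\textbf{Step 1 (existence and uniqueness).} The regularized problem has an optimal solution because $F$ is closed. Its objective is coercive on $F$: once $\|\mathbf{x}\|_{2}$ is bounded, $e$ is bounded below by the constraint at $S = \emptyset$, and $\eta\|\mathbf{x}\|_{2}^{2}$ dominates the linear term $e$. For fixed $\mathbf{x}$ the optimal $e$ is $g(\mathbf{x}) \coloneq \max_{S \subsetneq [n]} [U_{n}(S) - A_{n}(S;\mathbf{x},U_{n}(\emptyset))]$. The reduced objective $g(\mathbf{x}) + \eta\|\mathbf{x}\|_{2}^{2}$ is strictly convex on the affine efficiency set, so $\mathbf{x}_{\eta} \coloneq \phi^{\mathrm{ELC}}(U_{n};\eta)$ is unique.

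\textbf{Step 2 (comparison inequalities).} Since $(\mathbf{x}^{*}, e^{*})$ is feasible and optimality gives $g(\mathbf{x}_{\eta}) \geq e^{*}$,
\begin{equation}
	e^{*} + \eta\|\mathbf{x}_{\eta}\|_{2}^{2} \leq g(\mathbf{x}_{\eta}) + \eta\|\mathbf{x}_{\eta}\|_{2}^{2} \leq e^{*} + \eta\|\mathbf{x}^{*}\|_{2}^{2}.
\end{equation}
This yields (a) $\|\mathbf{x}_{\eta}\|_{2} \leq \|\mathbf{x}^{*}\|_{2}$, and (b) $0 \leq g(\mathbf{x}_{\eta}) - e^{*} \leq \eta\|\mathbf{x}^{*}\|_{2}^{2} \to 0$.

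\textbf{Step 3 (limit).} By (a), the family $\{\mathbf{x}_{\eta}\}$ is bounded. Take any sequence $\eta_{k} \to 0$ and any convergent subsequence with limit $\bar{\mathbf{x}}$. The function $g$ is continuous, being a finite max of affine functions, and the efficiency constraint is closed. So (b) gives $g(\bar{\mathbf{x}}) = e^{*}$ with efficiency holding, hence $\bar{\mathbf{x}} \in \mathrm{LC}(U_{n})$. By (a), $\|\bar{\mathbf{x}}\|_{2} \leq \|\mathbf{x}^{*}\|_{2}$. Uniqueness of the least-norm element of $\mathrm{LC}(U_{n})$ then forces $\bar{\mathbf{x}} = \mathbf{x}^{*}$. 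Every subsequential limit equals $\mathbf{x}^{*}$ and the family is bounded, so $\mathbf{x}_{\eta} \to \mathbf{x}^{*}$.

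\textbf{Main obstacle.} Step 2 avoids the main difficulty altogether. For $\phi^{\infty}$ one must first bound the free bias $b$, but here the bias is fixed to $U_{n}(\emptyset)$ and the norm bound on $\mathbf{x}_{\eta}$ comes for free by comparing with $\mathbf{x}^{*}$. What remains is the care needed in Step 1: existence must be argued on the unbounded set $F$, and the reduction to the $\mathbf{x}$-only objective via $g$ must be justified. Both are routine given the polyhedral structure.
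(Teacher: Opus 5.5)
Your proposal is correct and follows essentially the same argument as the paper. You compare the regularized optimum with the feasible point $(\phi^{\mathrm{ELC}}(U_n), e^{*})$ to obtain both $0 \le e_\eta - e^{*} \le \eta\|\mathbf{x}^{*}\|_2^2$ and $\|\mathbf{x}_\eta\|_2 \le \|\mathbf{x}^{*}\|_2$, pass to convergent subsequences, and conclude by uniqueness of the least-norm element of $\mathrm{LC}(U_n)$. The differences are cosmetic: you prove existence directly by coercivity (using $e \ge 0$ from the constraint at $S=\emptyset$) where the paper cites Blum--Oettli, and you also show uniqueness of $\mathbf{x}_\eta$, which the convergence argument does not need.
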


Accordingly, we propose Algorithm~\ref{alg:appr} to approximate the attribution methods of interest when exact computation is infeasible. Specifically, all the minimization problems we solve are convex, and thus off-the-shelf convex solvers can be directly applied.

%% file: files/experiments.tex
\begin{table*}[t]
	\centering
	\caption{Average inclusion AUC scores of various attribution methods. The best results are shown in bold, and the second-best results are underlined. Note that Beta Shapley reports the best one over $10$ candidates, as described in~\eqref{eq:weightedshap}.}
	\label{tab:auc}
	\resizebox{\textwidth}{!}{
		\begin{tabular}{lrrrrrrrr}
			\toprule
			dataset & $1.5$-norm& $2.5$-norm & $5$-norm & $10$-norm & $\infty$-norm & ELC & Banzhaf & Beta Shapley\\ \midrule
			GPSP & $10.11$ & $10.45$ & $\underline{10.77}$ & $\textbf{10.82}$ & $10.37$ & $10.19$ & $10.32$ & $10.28$\\
			FOTP & $30.72$ & $31.94$ & $\underline{33.50}$ & $\textbf{33.68}$ & $31.97$ & $30.08$ & $31.52$ & $30.56$\\
			wave\_energy & $17.63$ & $\underline{17.88}$ & $\textbf{18.09}$ & $17.78$ & $16.90$ & $17.16$ & $17.77$ & $17.65$\\
			jannis & $21.70$ & $25.83$ & $\textbf{27.27}$ & $\underline{27.21}$ & $25.23$ & $25.19$ & $24.53$ & $24.21$\\
			spambase & $26.37$ & $26.67$ & $\underline{26.74}$ & $26.63$ & $26.07$ & $26.14$ & $26.67$ & $\textbf{26.83}$\\
			superconduct & $31.39$ & $31.62$ & $\underline{31.67}$ & $31.50$ & $30.35$ & $\textbf{31.95}$ & $31.58$ & $31.64$\\
			letter & $1.62$ & $2.01$ & $2.48$ & $\textbf{2.59}$ & $\underline{2.50}$ & $2.14$ & $1.85$ & $1.44$\\
			pendigits & $4.69$ & $4.81$ & $\textbf{4.84}$ & $4.81$ & $4.71$ & $\underline{4.82}$ & $4.78$ & $4.52$\\
			EES & $4.26$ & $4.38$ & $\underline{4.40}$ & $4.26$ & $4.18$ & $4.06$ & $4.35$ & $\textbf{4.44}$\\
			WQW & $2.76$ & $\underline{2.86}$ & $\textbf{2.89}$ & $2.85$ & $2.79$ & $2.79$ & $2.84$ & $2.68$\\
			elevators & $8.03$ & $8.03$ & $8.04$ & $8.03$ & $7.95$ & $\textbf{8.15}$ & $8.01$ & $\underline{8.12}$\\
			credit & $\underline{3.37}$ & $\textbf{3.38}$ & $3.29$ & $3.17$ & $3.07$ & $3.13$ & $\underline{3.37}$ & $3.36$\\

			\bottomrule
	\end{tabular}}
\end{table*}

\section{Experiments} \label{sec:experiments}
The goal of our experiments is to evaluate whether the nonlinear attribution methods introduced in this work outperform semi-values in ranking players.  According to Theorem~\ref{thm:p norm is good}, the considered nonlinear attribution methods include $\phi^{p}$ with $p \in \{1.5, 2.5, 5, 10, \infty\}$ as well as $\phi^{\mathrm{ELC}}$. We use the CVXPY library with the CLARABEL solver to solve the minimization problems in Algorithm~\ref{alg:appr} \citep{diamond2016cvxpy,agrawal2018rewriting}.

\paragraph{Utility functions.} 
The utility function we employ is $U_{f}^{\mathbf{x}}(S) \coloneq \mathbb{E}_{\mathbf{X}_{[n]\setminus S}}[f(\mathbf{x}_{S}, \mathbf{X}_{[n]\setminus S})]$, where $f: \mathbb{R}^{n}\to \mathbb{R}$ is a trained gradient boosted tree (GBT) with five estimators, $\mathbf{x} \in \mathbb{R}^{n}$ is an input instance, and $\mathbf{x}_{S}$ is the restriction of $\mathbf{x}$ on $S$. Then, the number of players corresponds to the number of features.
We follow the path-dependent definition proposed by \citet[Algorithm 1]{lundberg2020local}. For this type of utility functions, the corresponding semi-values can be computed in polynomial time \citep{karczmarz2022improved,Yu2022linear,muschalik2024beyond}, and we use the numerically stable version developed by \citet{li2026treegrad}. In general, $\phi^{\mu}(f_{\mathbf{x}})$ can only be approximated, and our rationale is to avoid such approximation error. All GBTs are trained using the scikit-learn library \citep{pedregosa2011scikit}.
We train one GBT for each dataset. 
To account for the variability of $U_{f}^{\mathbf{x}}$, for each trained GBT $f$, the reported curve is averaged over $100$ different instances of $\mathbf{x}$. For classification datasets, $f$ outputs the logit of a randomly selected class.

\paragraph{Task.} 
The task is to rank players, where performance is evaluated using the inclusion AUC metric defined in problem~\eqref{op:auc}. A higher value indicates better performance. 
According to Lemma~\ref{lem:liearized AUC}, the player ranking $\pi$ induced by a specific contribution vector $\boldsymbol\phi$ satisfies that $\phi_{\pi(j)} \geq \phi_{\pi(j+1)}$ for every $1\leq j < n$. 
Intuitively, $U_{f}^{\mathbf{x}}$ is supposed to increase sharply after including the players that contribute the most.

\paragraph{Datasets.} 
We employ $12$ datasets in total, which are summarized in Table~\ref{tab:sum}. In particular, the datasets we selected are equally divided into two group. Each dataset in the first group contains fewer than 20 features, whereas those in the second group contain more than 30 features. For decision trees trained using the first group of datasets, we solve the exact minimization problem for each considered nonlinear attribution method. For the other group of datasets, the budget $B$ in Algorithm~\ref{alg:appr} is set to $\#\mathrm{players} \times 1,000$. For the two regression datasets, we scale the predicted values for better presentation.

\paragraph{Baselines.} We compare the selected nonlinear attribution methods to Beta Shapley values \citep{kwon2022beta}. 
%It evaluates a list of finitely many semi-values, and then select the one that achieves the highest inclusion AUC. Intuitively, this may partially alleviate the issue introduced by the linearity axiom.
The considered candidates include Beta$(\alpha,\beta)$ with
\begin{equation} \label{eq:weightedshap}
	\begin{aligned}
		(\alpha, \beta) \in \{&(16,1), (8,1), (4,1), (2,1), (1,1), \\
		&(1,2), (1,4), (1,8), (1,16), (1,32)\},
	\end{aligned}
\end{equation}
which were used in \citep{kwon2022weightedshap}.
Meanwhile, we also compare with another well-known semi-value, the Banzhaf value \citep{banzhaf1965weighted}. Therefore, a total of $11$ linear attribution methods are used.

For reproducibility, we fix the random seed for training GBTs and selecting logits. When performing approximation, we average the results over five different random seeds and report the mean along with the corresponding standard deviation.
More experimental details and statistical results can be found in Appendix~\ref{app:exp}.

\subsection{Empirical Observations}
Results for utility functions containing fewer than $20$ players are shown in Figure~\ref{fig:exact}, while those for datasets with more than $30$ players are presented in Figure~\ref{fig:appr}. 
For clarity, we plot only the inclusion curves of the best linear and nonlinear candidates that maximize the inclusion AUC. In Appendix~\ref{app:exp}, we also plot the inclusion curves for all employed nonlinear attribution methods.
The average inclusion AUC scores are reported in Table~\ref{tab:auc}.
We make the following observations: (i) nonlinear attribution methods tend to outperform linear attribution methods under the inclusion AUC metric; and (ii) no single attribution method consistently outperforms others across all datasets.

%% file: files/conclusion.tex
\section{Conclusion}
\label{sec:con}
While the Shapley value is widely adopted for its axiomatic properties, we demonstrate that its linearity in utility functions can lead to unreliable/undesirable player ranking under the inclusion AUC metric. Moreover, the efficiency axiom has also been empirically proved unfavorable. Accordingly, we introduce a class of attribution methods by relaxing the linearity and efficiency axioms while preserving other desirable axioms such as consistency, equal treatment, monotonicity, and translation invariance.
In particular, these introduced nonlinear attribution methods are based on faithful additive approximation of utility functions.
%Experiments across different utility functions confirm that nonlinear attribution methods tend to outperform linear ones in player ranking. 
We view our work as the first step to establish nonlinear axiomatic attribution methods through the lens of additive approximation.

%% file: files/appendix.tex
\section{Missing Proofs}
\label{sec:proofs}
\AxiomOfEgalitarian*
\begin{proof}
	\textbf{Efficiency}. This is imposed by the constraint $\sum_{i \in [n]} x_{i} = U_{n}([n]) - U_{n}(\emptyset)$ that appears in the problem~\eqref{op:one-sided least core}.
	
	\textbf{Translation Invariance}. This is clear as it is $\{U_{n}(S_{1}) - U_{n}(S_{2})\}$ that matters in the problem~\eqref{op:one-sided least core}.
	
	\textbf{Equal Treatment}. Suppose there exist $i, j \in [n]$ such that $U_{n}(S\cup \{i\}) = U_{n}(S\cup \{j\})$ for every $S\subseteq [n]\setminus\{i,j\}$. For the sake of contradiction, assume $\phi^{\mathrm{ELC}}_{i}(U_{n}) > \phi^{\mathrm{ELC}}_{j}(U_{n}) $.
	Define a contribution vector $\boldsymbol\phi\in \mathbb{R}^{n}$ by letting
	\begin{equation}
		\begin{gathered}
			\phi_{k} \coloneq \begin{cases}
				\phi^{\mathrm{ELC}}_{k}(U_{n}), & k \in [n]\setminus\{i,j\}\\
				\frac{\phi_{i}^{\mathrm{ELC}}(U_{n}) + \phi_{j}^{\mathrm{ELC}}(U_{n})}{2}, & \text{otherwise.}
			\end{cases}
		\end{gathered}
	\end{equation}
	Let
	\begin{equation}
		\begin{gathered}
			F(S)\coloneq U_{n}(S\cup \{i\}) - U_{n}(\emptyset) - \sum_{k\in S}\phi_{k}^{\mathrm{ELC}}(U_{n}) .
		\end{gathered}
	\end{equation}
	Since 
	\begin{equation}
		\begin{gathered}
			F(S) - \phi_{i} \leq \max(F(S)-\phi_{i}^{\mathrm{ELC}}(U_{n}),\  F(S)-\phi_{j}^{\mathrm{ELC}}(U_{n}),\ 0)
		\end{gathered}
	\end{equation}
	for every $S \subseteq  [n]\setminus\{i, j\}$,
	$\boldsymbol\phi$ with $e = \max_{S\subseteq  [n]} U_{n}(S) - \sum_{i\in S} \phi_{i}^{\mathrm{ELC}}(U_{n})$ is another optimal solution to the problem~\eqref{op:one-sided least core} . However, 
	\begin{equation}
		\begin{gathered}
			2\times \left( \frac{\phi_{i}^{\mathrm{ELC}}(U_{n}) + \phi_{j}^{\mathrm{ELC}}(U_{n})}{2} \right)^{2} < \phi_{i}^{\mathrm{ELC}}(U_{n})^{2} + \phi_{j}^{\mathrm{ELC}}(U_{n})^{2} , 
		\end{gathered}
	\end{equation}
	a contradiction to that $\phi^{\mathrm{ELC}}(U_{n})$ is the least-norm optimal solution.
	
	\textbf{Consistency}. Suppose there exists $C\in \mathbb{R}$ such that $U_{n+1}(S) = U_{n}(S\cap [n]) + C\cdot\mathds{1}_{n+1 \in S}$ for every $S \subseteq  [n+1]$. First, we prove that $\phi_{n+1}^{\mathrm{ELC}}(U_{n+1}) \geq C$. 
	For convenience, we write
	\begin{equation}
		\begin{gathered}
			d(S) \coloneqq U_{n+1}(S) - U_{n+1}(\emptyset) - \sum_{i\in S} \phi_{i}^{\mathrm{ELC}}(U_{n+1})\ \text{ for every }\ S \subseteq  [n+1]\\
			\text{and }\ e = \max_{S\subseteq [n]} d(S) \geq C-U_{n+1}^{\mathrm{ELC}}(U_{n+1}) > 0.
		\end{gathered}
	\end{equation}
	For the sake of contradiction, assume $\phi_{n+1}^{\mathrm{ELC}}(U_{n+1}) < C$. 
	Then, we have
	\begin{equation}
		\begin{gathered}
			d(S) < d(S) + (C - U_{n+1}^{\mathrm{ELC}}(U_{n+1})) = d(S\cup \{n+1\}) \ \text{ for every }\ S\subseteq [n] .
		\end{gathered}
	\end{equation}
	It indicates that
	\begin{equation}
		\begin{gathered}
			d(S) = e \implies n+1 \in S .
		\end{gathered}
	\end{equation}
	Define a contribution $\boldsymbol\phi \in \mathbb{R}^{n+1}$ by letting
	\begin{equation}
		\begin{gathered}
			\phi_{k} \coloneq \begin{cases}
				\phi_{k}^{\mathrm{ELC}}(U_{n+1}) + \frac{C - \phi_{n+1}^{\mathrm{ELC}}(U_{n+1})}{3}, & k = n+1,\\[3pt]
				\phi_{k}^{\mathrm{ELC}}(U_{n+1}) - \frac{C - \phi_{n+1}^{\mathrm{ELC}}(U_{n+1})}{3n}, & \text{otherwise.} 
			\end{cases}
		\end{gathered}
	\end{equation}
	Then, $\boldsymbol\phi$ satisfies the efficiency constraint, and for every $S \subseteq  [n]$
	\begin{equation}
		\begin{aligned}
			U_{n+1}(S) - U_{n+1}(\emptyset) - \sum_{i\in S}\phi_{i} &\leq d_{n+1}(S) + \frac{C - \phi_{n+1}^{\mathrm{ELC}}(U_{n+1})}{3} \\
			&< d_{n+1}(S\cup \{n+1\}) - \frac{C - \phi_{n+1}^{\mathrm{ELC}}(U_{n+1})}{3}\\
			&\leq U_{n+1}(S\cup \{n+1\}) - U_{n+1}(\emptyset) - \sum_{i\in S\cup \{n+1\}} \phi_{i} .
		\end{aligned}
	\end{equation}
	In particular,
	\begin{equation}
		\begin{gathered}
			U_{n+1}(S\cup \{n+1\}) - U_{n+1}(\emptyset) - \sum_{i\in S\cup \{n+1\}} \phi_{i} < d(S\cup \{n+1\})
		\end{gathered}
	\end{equation}
	for every $S \subsetneq [n]$,
	which contradicts the optimality of $(\phi^{\mathrm{ELC}}(U_{n+1}),\,e)$. Therefore, we have $\phi_{n+1}^{\mathrm{ELC}}(U_{n+1}) \geq C$.
	
	To have $\phi_{n+1}^{\mathrm{ELC}}(U_{n+1})\leq C$, the same argument applies by noticing that
	\begin{equation}
		\begin{gathered}
			e \geq d([n]) > d([n+1]) = 0 
		\end{gathered}
	\end{equation}
	if we assume $\phi_{n+1}^{\mathrm{ELC}}(U_{n+1}) > C$.
	As a result, we have $\phi_{n+1}^{\mathrm{ELC}}(U_{n+1}) = C$.
	With a moment's thought, $\phi_{n+1}^{\mathrm{ELC}}(U_{n+1}) = C$ implies
	\begin{equation}
		\begin{gathered}
			\phi_{k}^{\mathrm{ELC}}(U_{n+1}) = \phi_{k}^{\mathrm{ELC}}(U_{n}) \ \text{ for every }\ k \in [n] .
		\end{gathered}
	\end{equation}
	
	\textbf{Monotonicity}.
	Suppose there exists $j \in [n]$ such that $U_{n}(S\cup \{j\}) \geq U_{n}(S)$ for every $S\subseteq [n]\setminus\{j\}$. 
	Assume, for the sake of contradiction, that $\phi_{j}^{\mathrm{ELC}} < 0$. Then,
	\begin{equation}
		\begin{gathered}
			r \coloneq \max_{S\subseteq [n]}U_{n}(S)-U_{n}(\emptyset)-\sum_{i\in S}\phi_{i}^{\mathrm{ELC}}(U_{n}) \geq U_{n}(\{j\}) - U_{n}(\emptyset) - \phi_{j}^{\mathrm{ELC}}(U_{n}) > 0,
		\end{gathered}
	\end{equation}	
	and, for every $S \subseteq  [n]\setminus\{j\}$,
	\begin{equation}
		\begin{gathered}
			U_{n}(S) - U_{n}(\emptyset) - \sum_{i\in S}\phi^{\mathrm{ELC}}_{i}(U_{n})< U_{n}(S\cup \{j\})-U_{n}(\emptyset) - \sum_{i\in S\cup \{j\}}\phi_{i}^{\mathrm{ELC}}(U_{n}) ,
		\end{gathered}
	\end{equation}
	which indicates
	\begin{equation}
		\begin{gathered}
			U_{n}(S) - U_{n}(\emptyset) - \sum_{i\in S}\phi_{i}^{\mathrm{ELC}}(U_{n}) = r \implies j \in S .
		\end{gathered}
	\end{equation}
	Select $\delta > 0$ such that
	\begin{equation}
		\begin{gathered}
			\delta < \min_{S\subseteq [n]\setminus\{j\}} U_{n}(S\cup \{j\}) - U_{n}(S) - \phi_{j}^{\mathrm{ELC}}(U_{n}) .
		\end{gathered}
	\end{equation}
	Define $\boldsymbol\phi \in \mathbb{R}^{n}$ by letting
	\begin{equation}
		\begin{gathered}
			\phi_{k} \coloneq \begin{cases}
				\phi_{k}^{\mathrm{ELC}}(U_{n}) + \frac{\delta}{3}, & k = j\\[3pt]
				\phi_{k}^{\mathrm{ELC}}(U_{n}) - \frac{\delta}{3(n-1)}, & \text{otherwise.}
			\end{cases}
		\end{gathered}
	\end{equation}
	Observe that $\boldsymbol\phi$ satisfies the efficiency constraint, and for every $S\subseteq [n]\setminus\{j\}$
	\begin{equation}
		\begin{aligned}
			U_{n}(S) - U_{n}(\emptyset) - \sum_{i\in S}\phi_{i} &\leq U_{n}(S) - U_{n}(\emptyset) - \sum_{i\in S}\phi_{i}^{\mathrm{ELC}}(U_{n}) + \frac{\delta}{3}\\
			&< U_{n}(S\cup \{j\}) - U_{n}(\emptyset) - \sum_{i\in S\cup \{j\}}\phi_{i}^{\mathrm{ELC}}(U_{n}) - \frac{\delta}{3}\\
			&\leq U_{n}(S\cup \{j\}) - U_{n}(\emptyset) - \sum_{i\in S\cup \{j\}}\phi_{i} . 
		\end{aligned}
	\end{equation}
	Specifically,
	\begin{equation}
		\begin{gathered}
			U_{n}(S\cup \{j\}) - U_{n}(\emptyset) - \sum_{S\cup \{j\}} \phi_{i} < U_{n}(S\cup \{j\}) - U_{n}(\emptyset) - \sum_{S\cup \{j\}}\phi_{i}^{\mathrm{ELC}}(U_{n})
		\end{gathered}
	\end{equation}
	for every $S \subsetneq  [n]\setminus\{j\}$, contradicting the optimality of $(\phi^{\mathrm{ELC}}(U_{n}),\,r)$. Consequently, we have $\phi_{j}^{\mathrm{ELC}}(U_{n}) \geq 0$. 
	
	On the other hand, suppose there exists $j \in [n]$ such that $U_{n}(S\cup \{j\}) \leq U_{n}(S)$ for every $S\subseteq[n]\setminus\{j\}$. Then, $\phi_{j}^{\mathrm{ELC}}(U_{n}) \leq 0$ can be proved similarly by noting that
	\begin{equation}
		\begin{gathered}
			r \geq U_{n}([n]\setminus\{j\}) - U_{n}(\emptyset) - \sum_{i \in [n]\setminus\{j\}}\phi_{i}^{\mathrm{ELC}}(U_{n}) > U_{n}([n]) - U_{n}(\emptyset) - \sum_{i\in [n]}\phi_{i}^{\mathrm{ELC}}(U_{n}) = 0
		\end{gathered}
	\end{equation}
	if we assume $\phi_{j}^{\mathrm{ELC}}(U_{n}) > 0$.

	\textbf{Non-Linearity}. Define $U_{2}\in \mathcal{G}_{2}$ by letting
	\begin{equation}
		\begin{gathered}
			U_{2}(\emptyset) = U_{2}(\{1, 2\}) = 0,\ U_{2}(\{1\}) = -1, \,\text{ and }\, U_{2}(\{2\}) = -2 .
		\end{gathered}
	\end{equation}
	Then, it is clear that $\phi^{\mathrm{ELC}}(U_{2}) = (0, 0)^{\top}$. If $\phi^{\mathrm{ELC}}$ is linear, it would reduce to the Shapley value \citep{shapley1953value}, as we have proved that $\phi^{\mathrm{ELC}}$ satisfies the axioms of equal treatment, consistency, and efficiency. However, the Shapley value of $U_{2}$ is $(\frac{1}{2}, -\frac{1}{2})^{\top}$, which indicates that $\phi^{\mathrm{ELC}}$ must be non-linear.
\end{proof}

\UnreliableShapley*
\begin{proof}
	According to \citet{yokote2016new}, a basis of the null space of the Shapley operator is $\{U_{T}\}_{T\in\mathcal{N}}$ where $\mathcal{N} \coloneq \{S\subseteq [n]\mid |S|\not=1\}$. Specifically,
	\begin{equation}
		\begin{gathered}
			U_{\emptyset}(S)  = 1 \ \text{ for every }\ S \subseteq [n],\\
			\text{and }\ U_{T}(S) = \begin{cases}
				1, & |T\cap S| = 1,\\
				0, & \text{otherwise.}
			\end{cases}
		\end{gathered}
	\end{equation}
	Let $2^{[n]}$ and $\mathcal{N}$ be ordered so that every $U_{T}$ can be written as a vector $\mathbf{a}_{T} \in \{0, 1\}^{2^{n}}$, all of which constitute a matrix $\mathbf{A} \in \{0, 1\}^{2^{n}\times|\mathcal{N}|}$. Then, every utility function $U'_{n}$ in this null space can be expressed as $U'_{n} = \mathbf{A}\boldsymbol\alpha$ where $\boldsymbol\alpha \in \mathbb{R}^{|\mathcal{N}|}$.
	
	Given a permutation $\pi \in \Pi_{n}$, we write
	\begin{equation}
		\begin{gathered}
			S_{k}(\pi) = \{\pi_{1}, \pi_{2},\dots, \pi_{k}\} \ \text{ for every } 0\leq k\leq n.
		\end{gathered}
	\end{equation}
	For each permutation $\pi \in \Pi_{n}$, it can be represented by a vector $\mathbf{p}_{\pi}\in\{0, 1\}^{2^{n}}$ whose $S$-th entry is $1$ if $S \in \{S_{k}(\pi)\colon 0\leq k \leq n\}$ and $0$ otherwise. Then, we have
	\begin{equation}
		\begin{gathered}
			\mathrm{AUC}(\pi; U_{n}') = \mathbf{p}_{\pi}^{\top}\mathbf{A}\boldsymbol\alpha - U'_{n}(\emptyset).
		\end{gathered}
	\end{equation}
	For convenience, we write $\mathbf{d}_{\pi}^{\top} = \mathbf{p}_{\pi}^{\top}\mathbf{A}$ for every $\pi \in \Pi_{n}$. Next, we prove that every $\mathbf{d}_{\pi}^{\top}$ can be re-ordered to the same vector. Note that this is equivalent to ordering $\mathcal{N}$ according to the given $\pi$. Without loss of generality, we demonstrate this ordering process by using $n=4$ as an example.
	
	For subsets that include $\pi_{1}$, their order is generated by
	\begin{equation}
		\begin{aligned}
			& \{\pi_{1}\} \\
			& \{\pi_{1}\},\ \{\pi_{1},\pi_{4}\},\\
			& \{\pi_{1}\},\ \{\pi_{1},\pi_{4}\},\ \{\pi_{1},\pi_{3}\},\ \{\pi_{1},\pi_{4},\pi_{3}\}\\
			& \{\pi_{1}\},\ \{\pi_{1},\pi_{4}\},\ \{\pi_{1},\pi_{3}\},\ \{\pi_{1},\pi_{4},\pi_{3}\},\ \{\pi_{1},\pi_{2}\},\ \{\pi_{1},\pi_{4},\pi_{2}\},\ \{\pi_{1},\pi_{3},\pi_{2}\},\ \{\pi_{1},\pi_{4},\pi_{3},\pi_{2}\} . 
		\end{aligned}
	\end{equation}
	Following these ordered subsets, the subsets that include $\pi_{2}$ but not $\pi_{1}$  are ordered as
	\begin{equation}
		\begin{aligned}
			& \{\pi_{2}\}\\
			& \{\pi_{2}\},\ \{\pi_{2},\pi_{4}\}\\
			& \{\pi_{2}\},\ \{\pi_{2},\pi_{4}\},\ \{\pi_{2},\pi_{3}\},\ \{\pi_{2},\pi_{4},\pi_{3}\} .
		\end{aligned}
	\end{equation}
	Finally, it is
	\begin{equation}
		\begin{aligned}
			& \{\pi_{3}\}\\
			& \{\pi_{3}\},\ \{\pi_{3},\pi_{4}\} .
		\end{aligned}
	\end{equation}
	The empty set is appended to these ordered non-empty subsets. The order is finalized by removing all the singleton subsets. Precisely, the $\pi$-specific order for $\mathcal{N}$ is
	\begin{equation}
		\begin{gathered}
			\{\pi_{1},\pi_{4}\},\ \{\pi_{1},\pi_{3}\},\ \{\pi_{1},\pi_{4},\pi_{3}\},\ \{\pi_{1},\pi_{2}\},\ \{\pi_{1},\pi_{4},\pi_{2}\},\ \{\pi_{1},\pi_{3},\pi_{2}\},\ \{\pi_{1},\pi_{4},\pi_{3},\pi_{2}\},\\
			\{\pi_{2},\pi_{4}\},\ \{\pi_{2},\pi_{3}\},\ \{\pi_{2},\pi_{4},\pi_{3}\},\ \{\pi_{3},\pi_{4}\},\  \emptyset.
		\end{gathered}
	\end{equation}
	One can verify that, using this order for $\mathcal{N}$, $\mathbf{d}_{\pi}^{\top} = (3,2,2,1,1,1,1,2,1,1,1,5)$. In other words, for $1\leq i\leq 2^{n}-1$, the $i$-th entry of $\mathbf{d}_{\pi}$ is equal to the number of the occurrences of the corresponding subset in the ordering process,  whereas the last entry is equal to $n+1$.
	
	From now on, assume the order of $\mathcal{N}$ is fixed. 
	Consequently, we can partition $\Pi_{n}$ into $\{\Pi_{n,j}\}_{j}$ such that, for every $\pi, \pi' \in \Pi_{n,j}$, $\mathbf{d}_{\pi} = \mathbf{d}_{\pi'}$. From the ordering process, it is clear that $|\{\Pi_{n,j}\}_{j}|\geq 2$ if $n\geq 3$.

	For each $\Pi_{n,j}$, let $U'_{n} = \mathbf{A}\mathbf{d}_{\pi}$ where $\pi \in \Pi_{n,j}$. It is clear that $\mathrm{AUC}(\pi; U_{n}') > \mathrm{AUC}(\pi'; U_{n}')$ for every $\pi' \not\in \Pi_{n,j}$. By choosing an appropriate scalar $c > 0$, we have the result by setting $U'_{n} = c\cdot \mathbf{A}\mathbf{d}_{\pi}$.
	
\end{proof}

\pNorm*
\begin{proof}
	We divide our proof into two parts. The first part is to demonstrate the properties of $\phi^{\infty}$, whereas the second part goes for $p \in (1, \infty)$. Obviously, $\phi^{p}$ with $p\in(0,\infty)\cup\{\infty\}$ satisfies the axiom of translation invariance due to the bias variable $b$.
	
	For convenience, we write
	\begin{equation}
		\begin{gathered}
			d(S, \mathbf{x}; U_{n}) \coloneq U_{n}(S) - b_{p}(U_{n}) - \sum_{i\in S}x_{i}, \\
			\text{and }\ e_{p}(U_{n}) \coloneq \min_{\mathbf{x}\in \mathbb{R}^{n}, b\in \mathbb{R}}\|\mathbf{Ax}+b\cdot\mathbf{1}_{2^{n}}-\mathbf{V}(U_{n})\|_{p}.
		\end{gathered}
	\end{equation}
	where $b_{p}(U_{n})$ is the optimal value of $b$ in then minimization problem with $\mathbf{x} = \phi^{\infty}(U_{n})$.
	
	Note that
	\begin{equation} \label{eq: e^inf}
		\begin{gathered}
			e_{\infty}(U_{n}) = \frac{\max_{S\subseteq [n]}d(S,\phi^{\infty}(U_{n}); U_{n}) - \min_{S\subseteq [n]}d(S, \phi^{\infty}(U_{n}); U_{n})}{2} .
		\end{gathered}
	\end{equation}
	
	Besides, for $p\in(0, \infty)$, define $\psi_{p}: \mathbb{R}\to\mathbb{R}$ by letting
	\begin{equation}
		\begin{gathered}
			\psi_{p}(t) = \mathrm{sign}(t)\cdot|t|^{p-1} .
		\end{gathered}
	\end{equation}
	
	\textbf{$\phi^{\infty}$ satisfies the axiom of consistency}.
	
	This is straightforward by observing that the bias variable $b$ ensures that there exist two subsets $S, T \subseteq [n]$ such that
	\begin{equation}
		\begin{gathered}
			d(S, \phi^{\infty}(U_{n}); U_{n}) = e_{\infty}(U_{n}) \ \text{ and }\ d(T, \phi^{\infty}(U_{n}); U_{n}) = -e_{\infty}(U_{n}).
		\end{gathered}
	\end{equation}
	
	\textbf{$\phi^{\infty}$ satisfies the axiom of monotonicity}.
	Suppose there exists $j \in [n]$ such that $U_{n}(S\cup \{j\}) \geq U_{n}(S)$ for every $S\subseteq [n]\setminus\{j\}$. Assume, for the sake of contradiction, that $\phi^{\infty}_{j}(U_{n}) < 0$. 
	Then, for every subset $\mathbf{S} \subseteq  [n]\setminus \{j\}$
	\begin{equation}
		\begin{gathered}
			d(S, \phi^{\infty}(U_{n}); U_{n}) < d(S\cup\{j\}, \phi^{\infty}(U_{n}); U_{n}) .
		\end{gathered}
	\end{equation}
	It indicates that
	\begin{equation}
		\begin{gathered}
			d(S, \phi^{\infty}(U_{n}); U_{n}) = e_{\infty}(U_{n}) \implies j \in S\ \text{ and }\ d(S, \phi^{\infty}(U_{n}); U_{n}) = -e_{\infty}(U_{n}) \implies j \not\in S .
		\end{gathered}
	\end{equation}
	Select
	\begin{equation}
		\begin{gathered}
			0 < \delta < \min_{S\subseteq [n]\setminus \{j\}}d(S\cup \{j\},\phi^{\infty}(U_{n}); U_{n}) - \min_{S\subseteq [n]\setminus\{j\}} d(S, \phi^{\infty}(U_{n}); U_{n}) .
		\end{gathered}
	\end{equation}
	Define $\boldsymbol\phi \in \mathbb{R}^{n}$ by letting
	\begin{equation}
		\begin{gathered}
			\phi_{k} \coloneq \begin{cases}
				\phi_{k}^{\infty}(U_{n}), & k \not= j,\\
				\phi_{k}^{\infty}(U_{n}) + \delta, &\text{otherwise.}
			\end{cases}
		\end{gathered}
	\end{equation}
	Clearly, 
	\begin{equation}
		\begin{gathered}
			\min_{S\subseteq [n]\setminus\{j\}}d(S\cup\{j\}, \boldsymbol\phi; U_{n}) < \min_{S\subseteq [n]\setminus\{j\}} d(S\cup\{j\}, \phi^{\infty}(U_{n}); U_{n}),
		\end{gathered}
	\end{equation}
	which, by Eq.~\eqref{eq: e^inf}, contradicts the optimality of $(\phi^{\infty}(U_{n}),\, b_{\infty}(U_{n}))$. Therefore, we must have $\phi_{j}^{\infty}(U_{n})\geq 0$. The other case can be proved similarly.
	
	\textbf{$\phi^{\infty}$ satisfies the axiom of equal treatment}.
	Suppose there exist $i, j \in [n]$ such that $U_{n}(S\cup \{i\}) = U_{n}(S\cup \{j\})$ for every $S\subseteq [n]\setminus\{i,j\}$. For the sake of contradiction, assume that $\phi^{\infty}_{i}(U_{n}) > \phi^{\infty}(U_{n})$. Define $\boldsymbol\phi \in \mathbb{R}^{n}$ by letting
	\begin{equation}
		\begin{gathered}
			\phi_{k} \coloneq \begin{cases}
				\frac{\phi_{i}^{\infty}(U_{n})+\phi_{j}^{\infty}(U_{n})}{2}, & k \in \{i,j\},\\
				\phi_{k}^{\infty}(U_{n}), & \text{otherwise.}
			\end{cases}
		\end{gathered}
	\end{equation} 
	Observe that for every $S \subseteq [n]\setminus\{i,j\}$
	\begin{equation}
		\begin{gathered}
			|d(S\cup \{i\}, \boldsymbol\phi; U_{n})| < \max\left( d(S\cup \{i\}, \phi^{\infty}(U_{n}); U_{n}),\, d(S\cup \{j\}, \phi^{\infty}(U_{n}); U_{n}) \right) .
		\end{gathered}
	\end{equation}
	It means that the solution of $(\boldsymbol\phi, b^{*})$ is also \begin{equation}
		\begin{gathered}
			2\times \left( \frac{\phi_{i}^{\infty}(U_{n}) + \phi_{j}^{\infty}(U_{n})}{2} \right)^{2} < \phi_{i}^{\infty}(U_{n})^{2} + \phi_{j}^{\infty}(U_{n})^{2} , 
		\end{gathered}
	\end{equation}
	a contradiction to that $\phi^{\infty}(U_{n})$ is the least-norm optimal solution.

	\textbf{$\phi^{\infty}$ is not linear}.
	Assume, for the sake of contradiction, that $\phi^{\infty}$ is linear. Since $\phi^{\infty}$ satisfies the axioms of linearity, consistency, equal treatment, and monotonicity, according to \citet{weber1988probabilistic}, $\phi^{\infty}(U_{n})$ can be expressed as
	\begin{equation} \label{eq: formula for inf if linear}
		\begin{gathered}
			\phi_{i}^{\infty}(U_{n}) = \sum_{S\subseteq [n]\setminus\{i\}} q_{|S|+1}[U_{n}(S\cup \{i\})- U_{n}(S)]
		\end{gathered}
	\end{equation}
	where $\mathbf{q} \in \mathbb{R}^{n}$ is non-negative and satisfies that $\sum_{k=1}^{n}\binom{n-1}{k-1}q_{k} = 1$. For $t \in [3, 5]$, define $U_{3}^{t} \in \mathcal{G}_{3}$ by letting
	\begin{equation}
		\begin{gathered}
			U_{3}^{t}(\emptyset) = 0,\ U_{3}^{t}(\{1\}) = U_{3}^{t}(\{2,3\}) = U_{3}^{t}(\{1,2,3\}) = 4,\ U_{3}^{t}(\{2\}) = U_{3}^{t}(\{3\}) = 3,\\
			\text{and }\ U_{3}^{t}(\{1,2\}) = U_{3}^{t}(\{1,3\}) = t.
		\end{gathered}
	\end{equation}
	Using the constraints for $\emptyset$ and $\{1\}$, we have
	\begin{equation}
		\begin{gathered}
			b_{\infty}(U_{3}^{t}) \in [-e_{\infty}(U_{3}^{t}),\, e_{\infty}(U_{3}^{t})]\ \text{ and }\  b+\phi_{1}^{\infty}(U_{3}^{t}) \in [4-e_{\infty}(U_{3}^{t}),\, 4+e_{\infty}(U_{3}^{t})], 
		\end{gathered}
	\end{equation}
	which leads to
	\begin{equation} \label{eq:inf x1}
		\begin{gathered}
			\phi_{1}^{\infty}(U_{3}^{t}) \in [4-2e_{\infty}(U_{3}^{t}),\, 4+2e_{\infty}(U_{3}^{t})].
		\end{gathered}
	\end{equation}
	Using the constraint for $\{2, 3\}$, there is
	\begin{equation} \label{eq:inf b+x2+x3}
		\begin{gathered}
			b^{*} + \phi_{2}^{\infty}(U_{3}^{t}) + \phi_{3}^{\infty}(U_{3}^{t}) \in [4-e_{\infty}(U_{3}^{t}),\, 4+e_{\infty}(U_{3}^{t})]
		\end{gathered}
	\end{equation}
	Combining Eqs.~\eqref{eq:inf x1} and~\eqref{eq:inf b+x2+x3} yields
	\begin{equation}
		\begin{gathered}
			b^{*} + \phi_{1}^{\infty}(U_{3}^{t}) + \phi_{2}^{\infty}(U_{3}^{t}) + \phi_{3}^{\infty}(U_{3}^{t}) \in [8-3e_{\infty}(U_{3}^{t}),\, 8+3e_{\infty}(U_{3}^{t})].
		\end{gathered}
	\end{equation}
	On the other hand, from the constraint for $\{1,2,3\}$, there is
	\begin{equation}
		\begin{gathered}
			b^{*} + \phi_{1}^{\infty}(U_{3}^{t}) + \phi_{2}^{\infty}(U_{3}^{t}) + \phi_{3}^{\infty}(U_{3}^{t}) \in [4-e_{\infty}(U_{3}^{t}),\, 4+e_{\infty}(U_{3}^{t})] .
		\end{gathered}
	\end{equation}
	Solving $4+e_{\infty}(U_{3}^{t})=8-3e_{\infty}(U_{3}^{t})$ yields an lower bound for $e_{\infty}(U_{3}^{t})$, i.e., $e_{\infty}(U_{3}^{t})\geq 1$. If $e^{*}=1$, following the derivation of Eq.~\eqref{eq:inf x1}, the lower bounds for $\phi_{1}^{\infty}(U_{3}^{t})$, $\phi_{2}^{\infty}(U_{3}^{t})$, and $\phi_{3}^{\infty}(U_{3}^{t})$ are $2$, $1$, and $1$, respectively. Then, one can verify that the solution of $b=1$ and $\mathbf{x}=(2,1,1)$ is indeed optimal, leading to $e_{\infty}(U_{3}^{t})=1$. In other words, $\phi^{\infty}(U_{3}^{t}) = (2,1,1)$ for every $t \in [3, 5]$.
	
	Then, it indicates that 
	\begin{equation}
		\begin{gathered}
			q_{1} = 1,\ q_{2} = q_{3} = 0.
		\end{gathered}
	\end{equation}
	However, Eq.~\eqref{eq: formula for inf if linear} produces $\phi^{\infty}(U_{3}^{t}) = (4, 3, 3)$, contradicting to the uniqueness of $\phi^{\infty}(U_{3}^{t})$. Therefore, $\phi^{\infty}$ must be nonlinear.
	
	\textbf{$\phi^{p}$ with $p\in(1, \infty)$ satisfies the axiom of consistency}.
	Suppose there exists $C\in \mathbb{R}$ such that $U_{n+1}(S) = U_{n}(S\cap [n]) + C\cdot\mathds{1}_{n+1 \in S}$ for every $S \subseteq  [n+1]$. Since the corresponding minimization problem is strictly convex, using the first order condition for optimality, we have
	\begin{equation} \label{eq:first order condition}
		\begin{gathered}
			\sum_{S \subseteq  [n]} \psi_{p}(d(S, \phi^{p}(U_{n});U_{n})) = 0\\
			\text{and }\ \sum_{S\subseteq [n]\setminus\{i\}}\psi_{p}(d(S\cup\{i\}, \phi^{p}(U_{n}); U_{n})) = 0 \ \text{ for every }\ i \in [n].
		\end{gathered}
	\end{equation}
	Define $\boldsymbol\phi \in \mathbb{R}^{n+1}$ by letting $\phi_{k}=\phi_{k}^{p}(U_{n})$ if $k\in [n]$ and $C$ otherwise. Define
	\begin{equation}
		\begin{gathered}
			d'(S,\mathbf{x}) \coloneq U_{n+1}(S) - b_{p}(U_{n}) - \sum_{i\in S} x_{i}.
		\end{gathered}
	\end{equation}
	Verifying the first order condition for $U_{n+1}$, 
	\begin{equation}
		\begin{gathered}
			\sum_{S\subseteq  [n+1]} \psi_{p}(d'(S, \boldsymbol\phi)) = 2\sum_{S\subseteq [n]} \psi_{p}(d(S, \phi^{p}(U_{n}); U_{n})) = 0,\\
			\sum_{S\subseteq [n+1]\setminus\{i\}} \psi_{d}(d'(S\cup\{i\}, \boldsymbol\phi)) 
			=2\sum_{S\subseteq [n]\setminus\{i\}}\psi_{p}(d(S\cup\{i\}, \phi^{p}(U_{n}); U_{n})) = 0 \ \text{ for every } i \in [n],\\
			\sum_{S\subseteq [n]} \psi_{d}(d'(S\cup\{n+1\}, \boldsymbol\phi)) = \sum_{S \subseteq  [n]}\psi_{d}(d(S,\phi^{p}(U_{n}); U_{n})) = 0.
		\end{gathered}
	\end{equation}

	\textbf{$\phi^{p}$ with $p\in(1, \infty)$ satisfies the axiom of monotonicity}.
	Suppose there exists $j \in [n]$ such that $U_{n}(S\cup \{j\}) \geq U_{n}(S)$ for every $S\subseteq [n]\setminus\{j\}$. Using Eq.~\eqref{eq:first order condition}, we have
	\begin{equation} \label{eq:both are zero}
		\begin{gathered}
			\sum_{S\subseteq [n]\setminus\{j\}} \psi_{d}(d(S, \phi^{p}(U_{n}); U_{n})) = \sum_{S\subseteq [n]\setminus\{j\}} \psi_{d}(d(S\cup\{j\}, \phi^{p}(U_{n}); U_{n})) = 0
		\end{gathered}
	\end{equation}
	Notice that the function $\psi_{d}$ is strictly increasing. If $\phi_{j}^{p}(U_{n}) < 0$, for every $S\subseteq [n]\setminus\{j\}$, we have
	\begin{equation}
		\begin{gathered}
			d(S,\phi^{p}(U_{n}); U_{n}) < d(S\cup\{j\}, \phi^{p}(U_{n}); U_{n}) .
		\end{gathered}
	\end{equation}
	Then,
	\begin{equation}
		\begin{gathered}
			\sum_{S\subseteq [n]\setminus\{j\}} \psi_{d}(d(S, \phi^{p}(U_{n}); U_{n})) < \sum_{S\subseteq [n]\setminus\{j\}} \psi_{d}(d(S\cup\{j\}, \phi^{p}(U_{n}); U_{n})),
		\end{gathered}
	\end{equation}
	which makes Eq.~\eqref{eq:both are zero} impossible. Therefore, $\phi_{j}^{p}(U_{n}) \geq 0$. The other case can be proved similarly.
	
	\textbf{$\phi^{p}$ with $p\in(1, \infty)$ satisfies the axiom of equal treatment}.
	Suppose there exist $i, j \in [n]$ such that $U_{n}(S\cup \{i\}) = U_{n}(S\cup \{j\})$ for every $S\subseteq [n]\setminus\{i,j\}$. Using Eq.~\eqref{eq:first order condition}, we have
	\begin{equation}
		\begin{gathered}
			\sum_{S\subseteq [n]\setminus\{i,j\}} \phi_{p}(d(S\cup\{i\}, \phi^{p}(U_{n}); U_{n})) = \sum_{S\subseteq [n]\setminus\{i,j\}} \phi_{p}(d(S\cup\{j\}, \phi^{p}(U_{n}); U_{n})).
		\end{gathered}
	\end{equation}
	Since the function $\phi_{p}$ is strictly increasing, this equality implies that $\phi_{i}^{p}(U_{n}) = \phi_{j}^{p}(U_{n})$.
	
	\textbf{$\phi^{p}$ with $p\in(1, \infty)$ is linear if and only if $p=2$}. 
	It is obvious that $\phi^{2}$ is linear; therefore, we focus on proving the converse. Suppose $\phi^{p}$ with $p \in (1, \infty)$ is linear. Since $\phi^{p}$ satisfies the axioms of linearity, consistency, equal treatment, and monotonicity, according to \citet{weber1988probabilistic}, we have
	\begin{equation} \label{eq:formula for p if linear}
		\begin{gathered}
			\phi^{p}_{i}(U_{n}) = \sum_{S\subseteq [n]\setminus \{i\}} \gamma^{[n]}_{|S|+1}[U_{n}(S\cup\{i\}) - U_{n}(S)] \ \text{ for every } i \in [n]
		\end{gathered}
	\end{equation}
	where $\boldsymbol\gamma^{[n]} \in \mathbb{R}^{n}$ is non-negative and satisfies that $\sum_{k=1}^{n}\binom{n-1}{k-1}\gamma^{[n]}_{k} = 1$.
	Define $U_{2} \in \mathcal{G}_{2}$ by letting
	\begin{equation}
		\begin{gathered}
			U_{2}(\emptyset) = 0 \ \text{ and }\ U_{2}(\{1\}) = U_{2}(\{2\}) = U_{2}(\{1,2\}) = 1 .
		\end{gathered}
	\end{equation}
	Then, 
	\begin{equation}
		\begin{gathered}
			\phi_{1}^{p}(U_{2}) = \phi_{2}^{p}(U_{2}) = \gamma^{[2]}_{1}.
		\end{gathered}
	\end{equation}
	Using the first order condition, as described in Eq.~\eqref{eq:first order condition}, there is
	\begin{equation}
		\begin{gathered}
			\psi_{p}(-b_{p}(U_{2})) = \psi_{p}(\gamma^{[2]}_{1} + b_{p}(U_{2}) - 1) \ \text{ and }\ \psi_{p}(1-\gamma^{[2]}_{1} -b_{p}(U_{2})) = \psi_{p}(2\gamma^{[2]}_{1} + b_{p}(U_{2}) - 1) .
		\end{gathered}
	\end{equation}
	Since $\psi_{p}$ is strictly increasing, it leads to
	\begin{equation}
		\begin{gathered}
			-b_{p}(U_{2}) = \gamma^{[2]}_{1} + b_{p}(U_{2}) - 1 \ \text{ and }\ 1-\gamma^{[2]}_{1} -b_{p}(U_{2}) = 2\gamma^{[2]}_{1} + b_{p}(U_{2}) - 1,
		\end{gathered}
	\end{equation}
	which, combined with $\gamma_{1}^{[2]} + \gamma_{2}^{[2]} = 1$, yields
	\begin{equation}
		\begin{gathered}
			\gamma_{1}^{[2]} = \gamma_{2}^{[2]} = 0.5 .
		\end{gathered}
	\end{equation}
	Then, employing the axiom of consistency, we obtain
	\begin{equation} \label{eq: constraint for gamma^[3]}
		\begin{gathered}
			\gamma^{[3]}_{1} + \gamma^{[3]}_{2} = \gamma^{[3]}_{2} + \gamma^{[3]}_{3} = 0.5 .
		\end{gathered}
	\end{equation}
	It is worth pointing out that $\phi^{p}$ is linear for every $p \in (1,\infty)$ if $n=2$, and thus we require $n\geq 3$. Define $U_{3} \in \mathcal{G}_{3}$ by letting
	\begin{equation}
		\begin{gathered}
			U_{3}(\emptyset) = U_{3}(\{2\}) = U_{2}(\{1,2\}) = U_{2}(\{1,3\}) = 0,\ U_{3}(\{1\}) = U_{3}(\{1,2,3\}) = 1,\\
			U_{2}(\{3\}) = -2\gamma^{[3]}_{1},\,\text{ and }\, U_{2}(\{2,3\}) = 1-2\gamma^{[3]}_{1} .
		\end{gathered}
	\end{equation}
	Using the first order condition, as described in Eq.~\eqref{eq:first order condition}, there is
	\begin{equation} \label{eq: foc for U3}
		\begin{gathered}
			\psi_{p}(d(\{3\}, \phi^{p}(U_{3}); U_{3})) + \psi_{p}(d(\{1,3\}, \phi^{p}(U_{3}); U_{3})) + \psi_{p}(d(\{2,3\}, \phi^{p}(U_{3}); U_{3})) \\
			+ \psi_{p}(d(\{1,2,3\}, \phi^{p}(U_{3}); U_{3})) = 0 .
		\end{gathered}
	\end{equation}
	Using Eqs.~\eqref{eq:formula for p if linear} and~\eqref{eq: constraint for gamma^[3]}, one can verify that
	\begin{equation}
		\begin{gathered}
			d(\{3\}, \phi^{p}(U_{3}); U_{3}) = d(\{1, 3\}, \phi^{p}(U_{3}); U_{3}) = -b_{p}(U_{3}) - \gamma^{[3]}_{1} - \gamma^{[3]}_{3},\\
			\text{and }\ d(\{2,3\}, \phi^{p}(U_{3}); U_{3}) = d(\{1,2,3\}, \phi^{p}(U_{3}); U_{3}) = 1-3\gamma^{[3]}_{3}-b_{p}(U_{3}) .
		\end{gathered}
	\end{equation}
	Thus, Eq.~\eqref{eq: foc for U3} can be simplified as
	\begin{equation}
		\begin{gathered}
			\psi_{p}(1-3\gamma^{[3]}_{3}-b_{p}(U_{3})) = \psi_{p}(b_{p}(U_{3}) + \gamma_{1}^{[3]}+\gamma_{3}^{[3]}) .
		\end{gathered}
	\end{equation}
	Since $\psi_{p}$ is strictly increasing, it leads to
	\begin{equation}
		\begin{gathered}
			1-3\gamma^{[3]}_{3}-b_{p}(U_{3}) = b_{p}(U_{3}) + \gamma_{1}^{[3]}+\gamma_{3}^{[3]},
		\end{gathered}
	\end{equation}
	which yields
	\begin{equation} \label{eq: b for U3}
		\begin{gathered}
			b_{p}(U_{3}) = 0.5 - 2\gamma_{3}^{[3]} - 0.5\gamma_{1}^{[3]}.
		\end{gathered}
	\end{equation}
	Using the first order condition again, 
	\begin{equation}
		\begin{gathered}
			\psi_{p}(d(\emptyset, \phi^{p}(U_{3}); U_{3})) + \psi_{p}(d(\{2\}, \phi^{p}(U_{3}); U_{3})) + \psi_{p}(d(\{3\}, \phi^{p}(U_{3}); U_{3})) \\
			+ \psi_{p}(d(\{2,3\}, \phi^{p}(U_{3}); U_{3})) = 0 .
		\end{gathered}
	\end{equation}
	Using Eqs~\eqref{eq:formula for p if linear},~\eqref{eq: constraint for gamma^[3]} and~\eqref{eq: b for U3}, 
	\begin{equation}
		\begin{gathered}
			d(\emptyset, \phi^{p}(U_{3}); U_{3}) = 0.75 - 2.5 \gamma_{2}^{[3]},\\
			d(\{2\}, \phi^{p}(U_{3}); U_{3}) = 0.25 - 1.5\gamma_{2}^{[3]},\\
			d(\{3\}, \phi^{p}(U_{3}); U_{3}) = -0.5\gamma_{2}^{[3]} - 0.25,\\
			\text{and }\ d(\{2,3\}, \phi^{p}(U_{3}); U_{3}) = 0.5\gamma_{2}^{[3]} + 0.25.
		\end{gathered}
	\end{equation}
	Since the function $\psi_{p}$ is odd and strictly increasing, we have
	\begin{equation}
		\begin{gathered}
			0.75 - 2.5 \gamma_{2}^{[3]} = -0.25 + 1.5\gamma_{2}^{[3]},
		\end{gathered}
	\end{equation} 
	which, combined with Eq.~\eqref{eq: constraint for gamma^[3]}, yields
	\begin{equation}\label{eq:gamma^3}
		\begin{gathered}
			\gamma_{1}^{[3]} = \gamma_{2}^{[3]} = \gamma_{3}^{[3]} = 0.25 .
		\end{gathered}
	\end{equation}
	Define $U'_{3} \in \mathcal{G}_{3}$ by letting
	\begin{equation}
		\begin{gathered}
			U'_{3}(\{1\}) = U'_{3}(\{1,3\}) = U'_{3}(\{2,3\}) = U'_{3}(\{1,2,3\}) = U'_{3}(\{2\}) = 0,\\
			U'_{3}(\{3\}) = U'_{3}(\emptyset) = -4,\, \text{ and }\, U'_{3}(\{1,2\}) = 8.
		\end{gathered}
	\end{equation}
	Using the first order condition, 
	\begin{equation}
		\begin{gathered}
			\psi_{p}(d(\emptyset, \phi^{p}(U'_{3}); U_{3})) + \psi_{p}(d(\{2\}, \phi^{p}(U'_{3}); U_{3})) + \psi_{p}(d(\{3\}, \phi^{p}(U'_{3}); U_{3})) \\
			+ \psi_{p}(d(\{2,3\}, \phi^{p}(U'_{3}); U_{3})) = 0,
		\end{gathered}
	\end{equation}
	which, combined with Eqs.~\eqref{eq:formula for p if linear} and~\eqref{eq:gamma^3}, is equal to
	\begin{equation}
		\begin{gathered}
			\psi_{p}(-b_{p}(U'_{3})-4) = \psi_{p}(b_{p}(U'_{3})+2) .
		\end{gathered}
	\end{equation}
	Since $\psi_{p}$ is strictly increasing, 
	\begin{equation} \label{eq:b for U'3}
		\begin{gathered}
			b_{p}(U'_{3}) = -3 .
		\end{gathered}
	\end{equation}
	Using the first order condition again,
	\begin{equation}
		\begin{gathered}
			\psi_{p}(d(\{3\}, \phi^{p}(U'_{3}); U_{3})) + \psi_{p}(d(\{1,3\}, \phi^{p}(U'_{3}); U_{3})) + \psi_{p}(d(\{2,3\}, \phi^{p}(U'_{3}); U_{3})) \\
			+ \psi_{p}(d(\{1,2,3\}, \phi^{p}(U'_{3}); U_{3})) = 0 ,
		\end{gathered}
	\end{equation}
	which, combined with Eqs.~\eqref{eq:formula for p if linear},~\eqref{eq:gamma^3} and~\eqref{eq:b for U'3}, is equal to 
	\begin{equation}
		\begin{gathered}
			3\cdot 1^{p-1} = 3\psi_{p}(1) = \psi_{p}(3) = 3^{p-1}.
		\end{gathered}
	\end{equation}
	Solving $3^{p-2} = 1$ yields $p=2$.
\end{proof}

\SolveInfinity*
\begin{proof}
	The minimization problem can be equivalently rewritten as
	\begin{equation} \label{op:regularized inf}
		\begin{gathered}
			\minimize_{\mathbf{x}\in\mathbb{R}^{n}, b, e\in\mathbb{R}}\ e + \eta\cdot\|\mathbf{x}\|_{2}^{2}\\
			\text{subject to }\ |U_{n}(S) - b - \sum_{i\in S}x_{i}| \leq e \ \text{ for every } S \subseteq [n] .
		\end{gathered}
	\end{equation}
	Then, according to \citet{blum1972direct}, $\phi^{\infty}(U_{n}; \eta)$ exists. 
	To prove $\phi^{\infty}(U_{n};\eta) \rightarrow \phi^{\infty}(U_{n})$ as $\eta \rightarrow 0$, it is sufficient to prove that for every sequence $(\phi^{\infty}(U_{n}; \eta_{k}))_{k=1}^{\infty}$ such that $\eta_{k} \rightarrow 0$ as $k\rightarrow \infty$, there is $ \phi^{\infty}(U_{n}; \eta_{k}) \rightarrow \phi^{\infty}(U_{n})$ as $k\rightarrow \infty$.
	
	For convenience, we write
	\begin{equation}
		\begin{gathered}
			\mathbf{x}_{\eta} \coloneq \phi^{\infty}(U_{n}; \eta),
		\end{gathered}
	\end{equation}
	and let $b_{\eta}$ be the optimal solution of $b$ associated with $\mathbf{x}_{\eta}$. Besides, write $\mathbf{x}_{0} \coloneq \phi^{\infty}(U_{n})$, and define $b_{0}$ similarly.
	Additionally, we append $\mathbf{1}_{2^{n}}$ as the last column to $\mathbf{A}$, the result of which is denoted by $\mathbf{B}$, and write $\mathbf{y}_{\eta} \coloneq (\mathbf{x}_{\eta}, b_{\eta}) \in \mathbb{R}^{n+1}$. 
	
	Observe that
	\begin{equation} \label{eq:upper bound for |Ax-b|}
		\begin{aligned}
			\|\mathbf{A}\mathbf{x}_{\eta} + b_{\eta}\cdot\mathbf{1}_{2^{n}} - \mathbf{V}(U_{n})\|_{\infty} &= \|\mathbf{B}\mathbf{y}_{\eta} - \mathbf{V}(U_{n})\|_{\infty}\\
			&\leq \|\mathbf{B}\mathbf{y}_{\eta} - \mathbf{V}(U_{n})\|_{\infty} \,+\, \eta\cdot\|\mathbf{x}_{\eta}\|_{2}^{2}\\
			&\leq \|\mathbf{B}\mathbf{y}_{0} - \mathbf{V}(U_{n})\|_{\infty} \,+\, \eta\cdot\|\mathbf{x}_{0}\|_{2}^{2} .
		\end{aligned}
	\end{equation}
	It indicates that $(\mathbf{y}_{\eta_{k}})_{k=1}^{\infty}$ is eventually bounded. Precisely,
	\begin{equation}
		\begin{aligned}
			\|\mathbf{y}_{\eta}\|_{2} &\leq \frac{1}{\sigma_{\mathrm{min}}(\mathbf{B})}\|\mathbf{B}\mathbf{y}_{\eta}\|_{2}\\
			&\leq \frac{C_{\infty}}{\sigma_{\mathrm{min}}(\mathbf{B})}\|\mathbf{B}\mathbf{y}_{\eta}\|_{\infty}\\
			&\leq \frac{C_{\infty}}{\sigma_{\mathrm{min}}(\mathbf{B})} \left( \|\mathbf{B}\mathbf{y}_{\eta} - \mathbf{V}(U_{n})\|_{\infty} \,+\, \|\mathbf{V}(U_{n})\|_{\infty} \right)\\
			&\leq \frac{C_{\infty}}{\sigma_{\mathrm{min}}(\mathbf{B})} \left( \|\mathbf{B}\mathbf{y}_{0} - \mathbf{V}(U_{n})\|_{\infty} \,+\, \eta\cdot\|\mathbf{x}_{0}\|_{2}^{2} \,+\, \|\mathbf{V}(U_{n})\|_{\infty} \right).
		\end{aligned}
	\end{equation}
	For the first inequality, $\sigma_{\mathrm{min}}(\mathbf{B})$ denotes the smallest singular value of $\mathbf{B}$; $\sigma_{\mathrm{min}}(\mathbf{B})>0$ as $\mathbf{B}$ has full column rank.\footnote{We have proved that $\mathbf{B}^{\top}\mathbf{B}$ is invertible in \citep[Theorem 2]{li2024one}} The second inequality follows from the fact that all norms on $\mathbb{R}^{n}$ are equivalent.
	
	Since the sequence $(\mathbf{y}_{\eta_{k}})_{k=1}^{\infty}$ is eventually bounded, it is valid to select some convergent subsequence. Therefore, suffice it to prove that every convergent subsequence of $(\mathbf{x}_{\eta_{k}})_{k=1}^{\infty}$ converges to the same point $\mathbf{x}_{0}$.\footnote{This is equivalent to proving that $\liminf_{k\rightarrow\infty}a_{k} = \limsup_{k\rightarrow\infty}a_{k}$ for a sequence $(a_{k})_{k=1}^{\infty}$.}
	
	For simplicity, we abuse the notation $(\mathbf{y}_{\eta_{k}})_{k=1}^{\infty}$ to refer to a convergent subsequence of $(\mathbf{y}_{\eta_{k}})_{k=1}^{\infty}$.\footnote{Given a subsequence $(\mathbf{x}_{\eta_{k_{j}}})_{j=1}^{\infty} $ that converges to $ \mathbf{x}'$, it may happen that $b_{\eta_{k_{j}}}$ does not converge. Nevertheless, we can further select a subsequence of it such that the corresponding bias term converges, and $\mathbf{x}'$ is still the limit.} Thus, we have $\mathbf{y}_{\eta_{k}} \rightarrow \mathbf{y}'$ as $k\rightarrow \infty$ for some point $\mathbf{y}' \in \mathbb{R}^{n+1}$, the first $n$ entries of which is denoted by $\mathbf{x}'$. Since
	\begin{equation}
		\begin{gathered}
			\|\mathbf{B}\mathbf{y}_{0} - \mathbf{V}(U_{n})\|_{\infty} \leq \|\mathbf{B}\mathbf{y}_{\eta} - \mathbf{V}(U_{n})\|_{\infty},
		\end{gathered}
	\end{equation}
	combining this with Eq.~\eqref{eq:upper bound for |Ax-b|}, we have
	\begin{equation}
		\begin{gathered}
			0 \leq \|\mathbf{B}\mathbf{y}_{\eta} - \mathbf{V}(U_{n})\|_{\infty} - \|\mathbf{B}\mathbf{y}_{0} - \mathbf{V}(U_{n})\|_{\infty} \leq \eta\cdot \|\mathbf{x}_{0}\|_{2}^{2} .
		\end{gathered}
	\end{equation}
	This suggests that $\|\mathbf{B}\mathbf{y}_{\eta_{k}} - \mathbf{V}(U_{n})\|_{\infty} \rightarrow \|\mathbf{B}\mathbf{y}_{0} - \mathbf{V}(U_{n})\|_{\infty}$ as $k\rightarrow \infty$. Since the function $\mathbf{y} \mapsto \|\mathbf{By} - \mathbf{V}(U_{n})\|_{\infty}$ is continuous, we have
	\begin{equation} \label{eq:y' is optimal}
		\begin{gathered}
			\|\mathbf{B}\mathbf{y}' - \mathbf{V}(U_{n})\|_{\infty} = \|\mathbf{B}\mathbf{y}_{0} - \mathbf{V}(U_{n})\|_{\infty} .
		\end{gathered}
	\end{equation}
	
	On the other hand, since $\mathbf{x} \mapsto \|\mathbf{x}\|_{2}^{2}$ is continuous, there is
	\begin{equation}
		\begin{gathered}
			\lim_{k\rightarrow\infty}\|\mathbf{x}_{\eta_{k}}\|_{2}^{2} \rightarrow\|\mathbf{x}'\|_{2}^{2} .
		\end{gathered}
	\end{equation}
	Besides, note that
	\begin{equation}
		\begin{aligned}
			\|\mathbf{B}\mathbf{y}_{0} - \mathbf{V}(U_{n})\|_{\infty} + \eta\cdot\|\mathbf{x}_{\eta}\|_{2}^{2} &\leq \|\mathbf{B}\mathbf{y}_{\eta} - \mathbf{V}(U_{n})\|_{\infty} + \eta\cdot\|\mathbf{x}_{\eta}\|_{2}^{2} \\
			&\leq \|\mathbf{B}\mathbf{y}_{0} - \mathbf{V}(U_{n})\|_{\infty} + \eta\cdot\|\mathbf{x}_{0}\|_{2}^{2},
		\end{aligned}
	\end{equation}
	which implies
	\begin{equation}
		\begin{gathered}
			\|\mathbf{x}_{\eta}\|_{2}^{2} \leq \|\mathbf{x}_{0}\|_{2}^{2} .
		\end{gathered}
	\end{equation}
	Then,
	\begin{equation} \label{eq:norm for x'}
		\begin{gathered}
			\|\mathbf{x}'\|_{2}^{2} \leq \|\mathbf{x}_{0}\|_{2}^{2} .
		\end{gathered}
	\end{equation}
	Using Eqs.~\eqref{eq:y' is optimal} and~\eqref{eq:norm for x'}, and the uniqueness of $\mathbf{x}_{0}$, we obtain
	\begin{equation}
		\begin{gathered}
			\mathbf{x}' = \mathbf{x}_{0} .
		\end{gathered}
	\end{equation}
\end{proof}

\SolveELC*
\begin{proof}
	According to \citet{blum1972direct}, $\phi^{\mathrm{ELC}}(U_{n}; \eta)$ exists. 
	For every $\eta > 0$, let the produced optimal solution be denoted by $(\mathbf{x}_{\eta}, e_{\eta})$, and similarly we define $(\mathbf{x}_{0}, e_{0})$ as the optimal solution corresponding to the egalitarian least core. In other words, $\mathbf{x}_{\eta} = \phi^{\mathrm{ELC}}(U_{n};\eta)$ and $\mathbf{x}_{0} = \phi^{\mathrm{ELC}}(U_{n})$. Then, suffice it to prove that for every sequence $(\mathbf{x}_{\eta_{k}})_{k=1}^{n}$ such that $\eta_{k}\rightarrow0$ as $k\rightarrow\infty$, there is $\mathbf{x}_{\eta_{k}}\rightarrow \mathbf{x}_{0}$ as $k \rightarrow \infty$.
	
	Notice that
	\begin{equation}
		\begin{gathered}
			e_{0} \leq e_{\eta} \leq e_{\eta} + \eta\cdot\|\mathbf{x}_{\eta}\|_{2}^{2}\, \leq e_{0} + \eta\cdot\|\mathbf{x}_{0}\|_{2}^{2},
		\end{gathered}
	\end{equation}
	which yields
	\begin{equation} \label{eq:bound for e_eta}
		\begin{gathered}
			0 \leq e_{\eta} - e_{0} \leq \eta\cdot\|\mathbf{x}_{0}\|_{2}^{2}.
		\end{gathered}
	\end{equation}
	Therefore, $e_{\eta_{k}}\rightarrow e_{0}$ as $k\rightarrow\infty$.
	On the other hand,
	\begin{equation}
		\begin{gathered}
			e_{0} + \eta\cdot\|\mathbf{x}_{\eta}\|_{2}^{2} \leq e_{\eta} + \eta\cdot\|\mathbf{x}_{\eta}\|_{2}^{2} \leq e_{0} + \eta\cdot\|\mathbf{x}_{0}\|_{2}^{2},
		\end{gathered}
	\end{equation}
	which leads to
	\begin{equation}
		\begin{gathered}
			\|\mathbf{x}_{\eta}\|_{2}^{2} \leq \|\mathbf{x}_{0}\|_{2}^{2} .
		\end{gathered}
	\end{equation}
	Therefore, the sequence $(\mathbf{x}_{\eta_{k}})_{k=1}^{\infty}$ is eventually bounded, and thus we can select a convergent subsequence. In particular, there must be
	\begin{equation}
		\begin{gathered}
			\liminf_{k\rightarrow\infty}\|\mathbf{x}_{\eta_{k}}\|_{2}^{2} = \limsup_{k\rightarrow\infty}\|\mathbf{x}_{\eta_{k}}\|_{2}^{2} = \|\mathbf{x}_{0}\|_{2}^{2} .
		\end{gathered}
	\end{equation} 
	Otherwise, there would exist a subsequence of $(\mathbf{x}_{\eta_{k}})_{k=1}^{\infty}$ converging to some point $\mathbf{x}' \in \mathbb{R}^{n}$ such that
	\begin{equation}
		\begin{gathered}
			\|\mathbf{x}'\|_{2}^{2} = \liminf_{k\rightarrow\infty}\|\mathbf{x}_{\eta_{k}}\|_{2}^{2} < \|\mathbf{x}_{2}\|_{2}^{2}  ,
		\end{gathered}
	\end{equation}
	which, together with Eq.~\eqref{eq:bound for e_eta}, contradicts the uniqueness of $\mathbf{x}_{0}$. Note that $(\mathbf{x}', e_{0})$ is a feasible solution, since the domain is closed. Consequently,
	\begin{equation}
		\begin{gathered}
			\lim_{k\rightarrow\infty}\|\mathbf{x}_{\eta_{k}}\|_{2}^{2} = \|\mathbf{x}_{0}\|_{2}^{2}.
		\end{gathered}
	\end{equation}
	
	Then, every convergent subsequence of $(\mathbf{x}_{\eta_{k}})_{k=1}^{\infty}$ must converge to the same point $\mathbf{x}_{0}$, since $\mathbf{x}_{0}$ is the unique vector such that (i) $(e_{0}, \mathbf{x}_{0})$ is feasible to the problem of least core, and (ii) for any feasible $(e_{0}, \mathbf{x})$, it holds that $\|\mathbf{x}_{0}\|_{2}^{2}\leq \|\mathbf{x}\|_{2}^{2}$. As a result, we have $\mathbf{x}_{\eta_{k}}\rightarrow \mathbf{x}_{0}$ as $k\rightarrow \infty$.
\end{proof}

\section{Experiments} \label{app:exp}
The datasets used are summarized in Table~\ref{tab:sum}. More experimental results on the inclusion curves are presented in Figures~\ref{fig:appr-detailed} and~\ref{fig:exact-detailed}.

\paragraph{Statistical comparison.} As shown in Figures~\ref{fig:heatmap1} and~\ref{fig:heatmap2}, we also compare these methods statistically. For the value associated with methods $A$ (row) and $B$ (column), it is computed as
\begin{equation}
	\begin{gathered}
		\frac{\sum_{i=1}^{m} [\mathds{1}\{\mathrm{AUC}^{A}_{i} > \mathrm{AUC}^{B}_{i}\} - \mathds{1}\{\mathrm{AUC}^{A}_{i} < \mathrm{AUC}^{B}_{i}\}]}{m}
	\end{gathered}
\end{equation}
where $m$ is the total number of instances used and $\mathrm{AUC}_{i}^{A}$ is the AUC achieved by method $A$ on the utility function created using the $i$-th instance. Therefore, the larger it is, the better method $A$ is.

\begin{table*}[t]
	\centering
	\caption{Summary of the used datasets.}
	\label{tab:sum}
	\resizebox{\textwidth}{!}{
		\begin{tabular}{lrrllr}
			\toprule
			Dataset & \#Instances & \#Features & Source & Task&\#Classes\\ \midrule
			
			GPSP \citep{madeo2013gesture}& $ 9,873 $ & $ 32 $ & \url{https://openml.org/d/4538}&classification&$ 5 $\\
			
			FOTP \citep{bridge2014machine} & $ 6,118 $ & $ 51 $ & \url{https://openml.org/d/1475}& classification&$ 6 $\\
			
			wave\_energy & $ 72,000 $ & $ 48 $ & \url{https://openml.org/d/44975}&regression&-\\
			
			jannis & $ 83,733 $ & $ 54 $ & \url{https://openml.org/d/41168}&classification&$ 4 $\\
			
			spambase & $ 4,601 $ & $ 57 $ & \url{https://openml.org/d/44}&classification&$ 2 $\\
			
			superconduct & $ 21,263 $ & $ 81 $& \url{https://openml.org/d/43174}&regression&-\\
			
			letter \citep{frey1991letter} & $20,000$ & $16$ & \url{https://openml.org/d/6} & classification&$26$\\
			
			pendigits & $10,992$  & $16$ & \url{https://openml.org/d/32} & classification & $10$\\
			
			EES \citep{eeg_eye_state_264} & $14,980$ & $14$ & \url{https://openml.org/d/1471} & classification & $2$\\
			
			WQW \citep{cortez2009modeling} & $4,898$ & $11$ & \url{https://openml.org/d/40498} & classification&$7$\\
			
			elevators & $16,599$ & $18$ & \url{https://openml.org/d/846} & classification & 2\\
			
			credit & $16,714$ & $10$ & \url{https://openml.org/d/44089} & classification & 2\\	
			
			\bottomrule		
	\end{tabular}}
\end{table*}

\begin{figure*}[t]
	\centering
	\begin{tabular}{ccc}
		\multicolumn{3}{c}{\includegraphics[width=0.9\linewidth]{legend_comparison.pdf}} \\
		\includegraphics[width=0.3\linewidth]{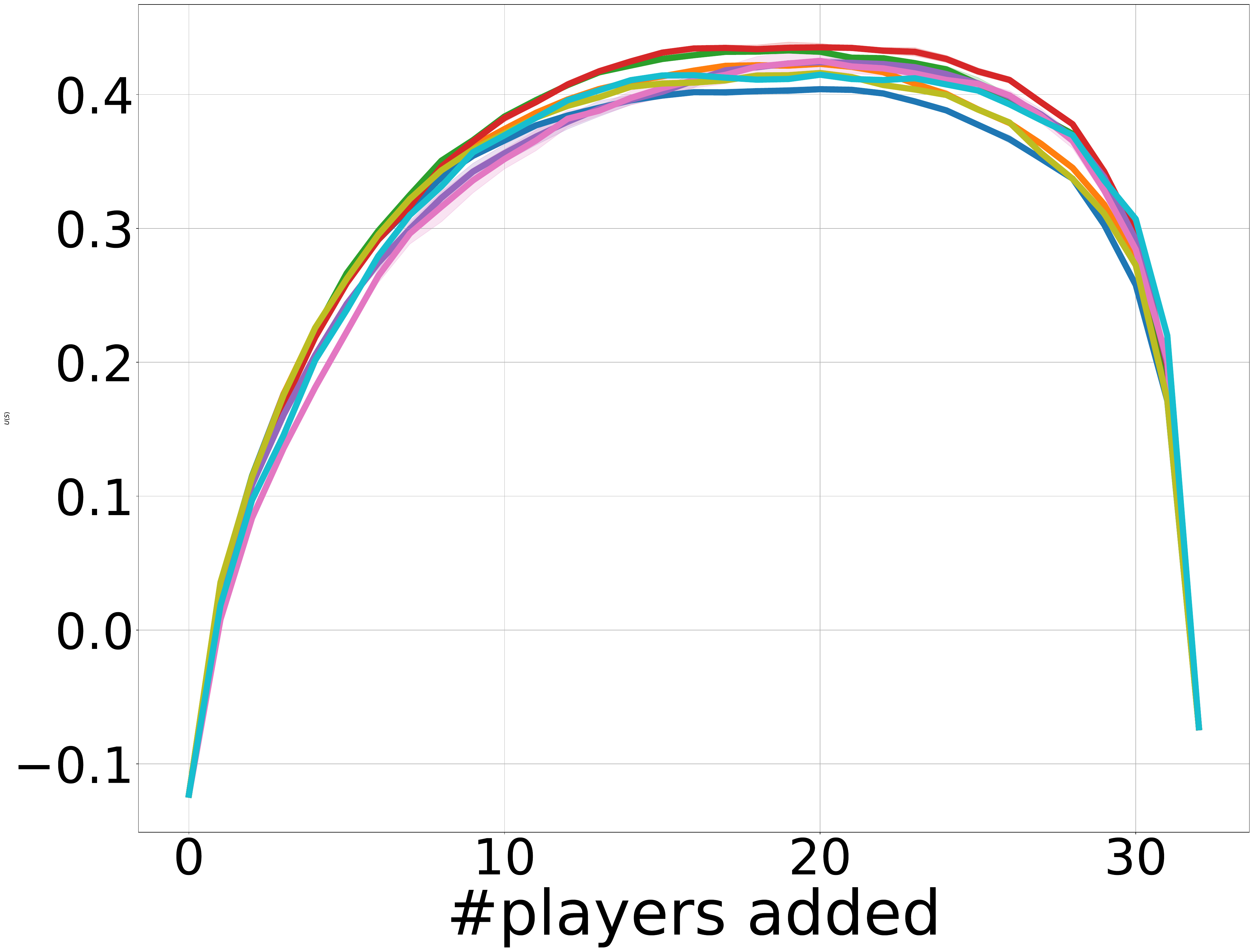} & \includegraphics[width=0.3\linewidth]{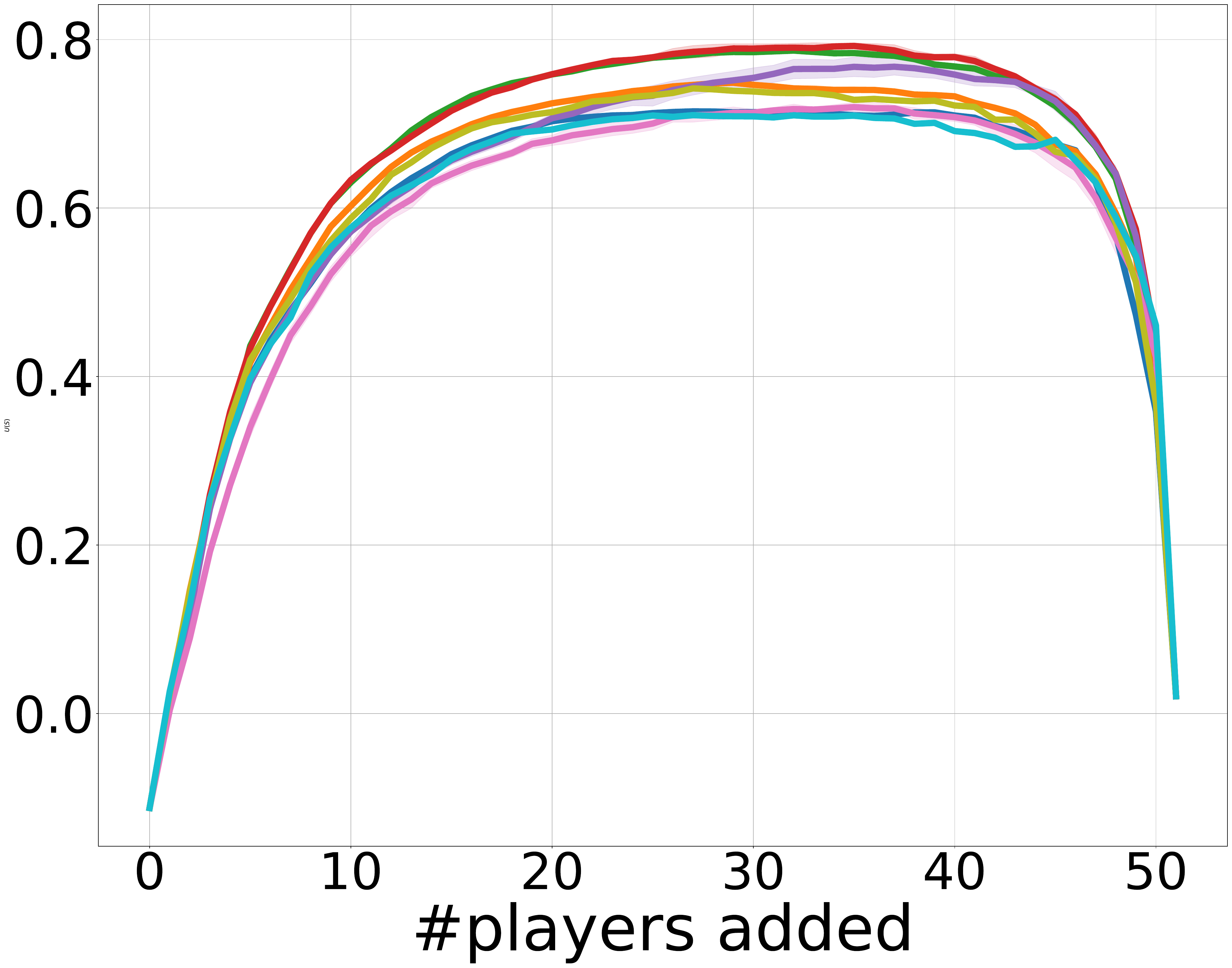} &
		\includegraphics[width=0.3\linewidth]{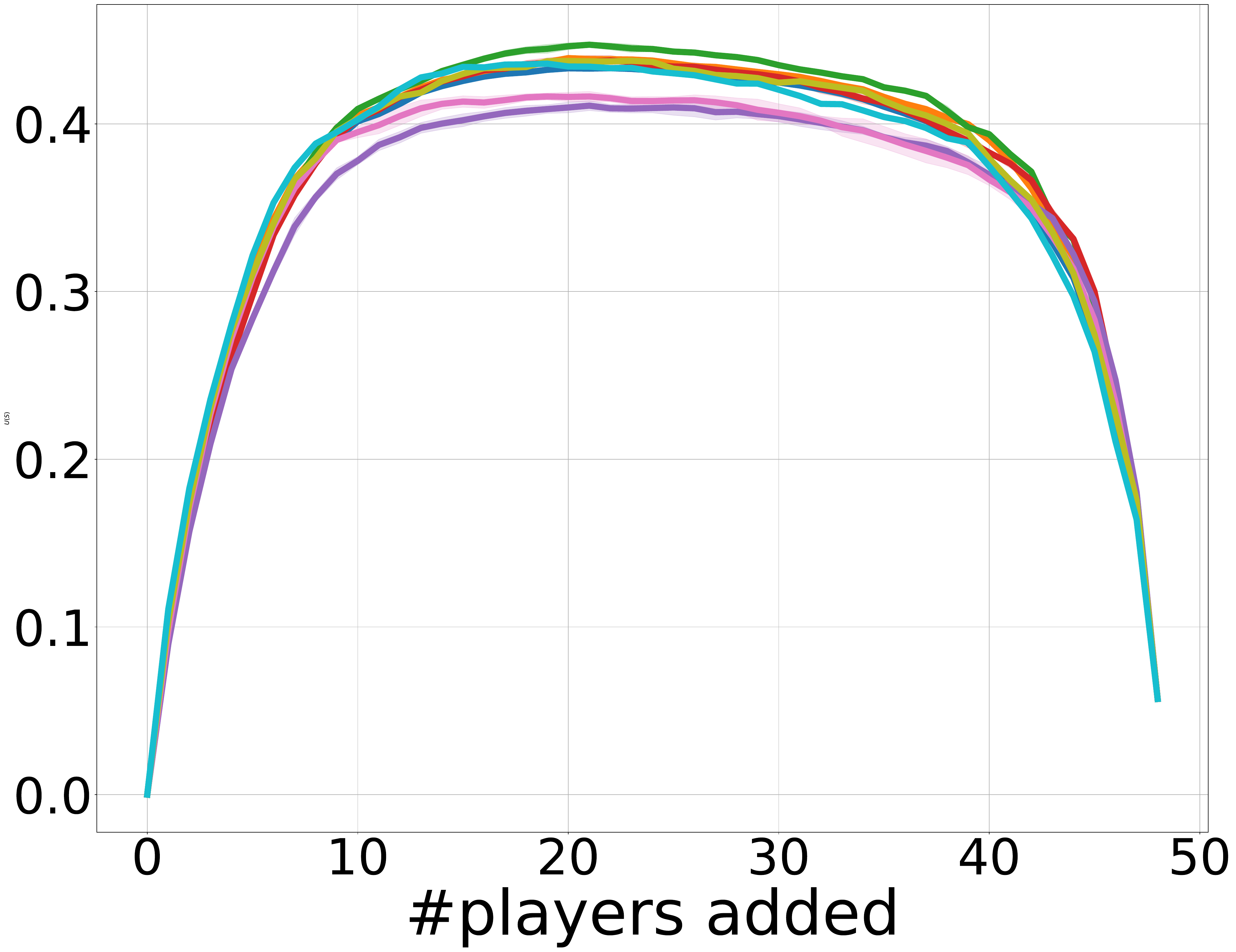} \\
		\phantom{aaaaa}GPSP & \phantom{aaaaa}FOTP & \phantom{aaaaa}wave\_energy\\
		\includegraphics[width=0.3\linewidth]{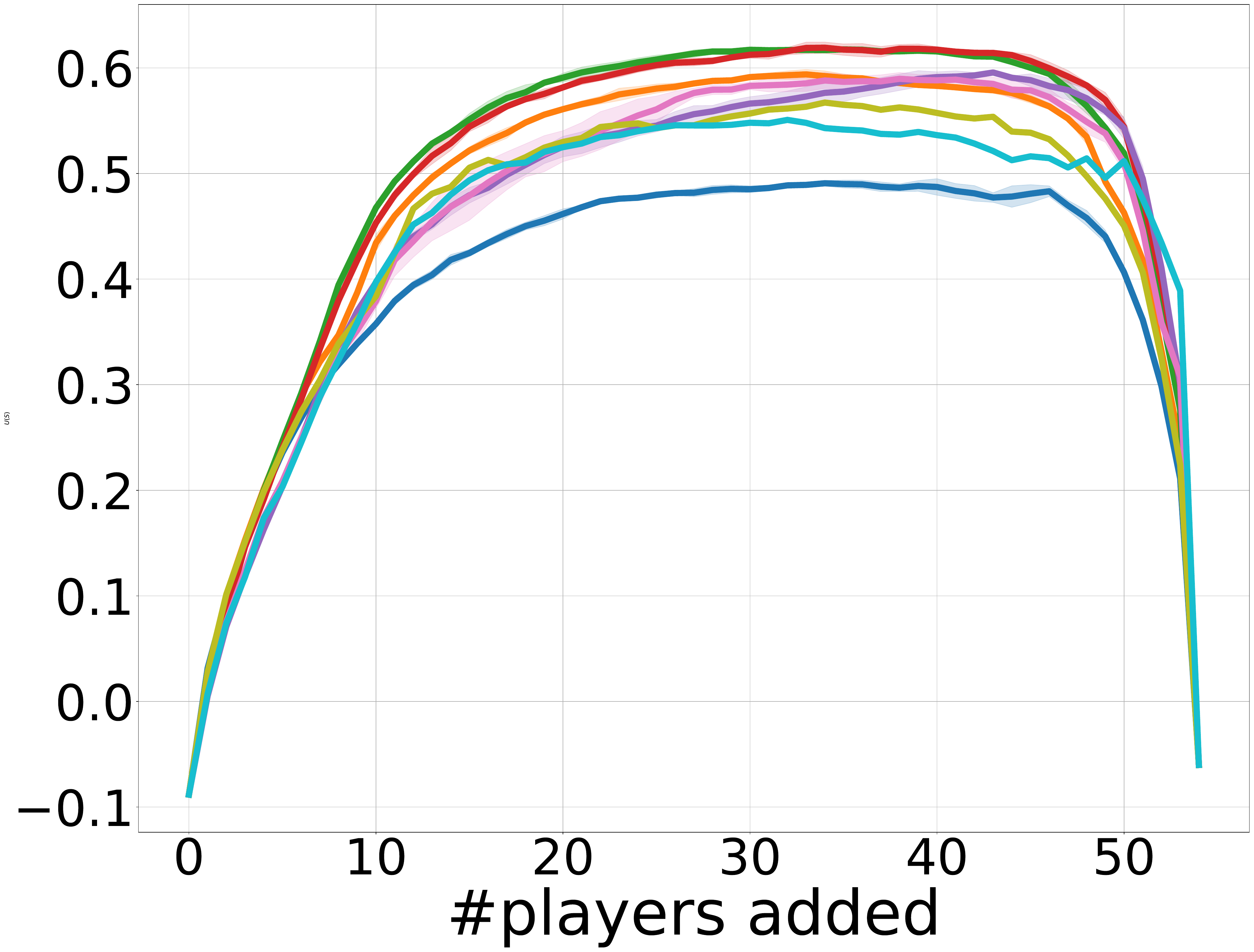} & \includegraphics[width=0.3\linewidth]{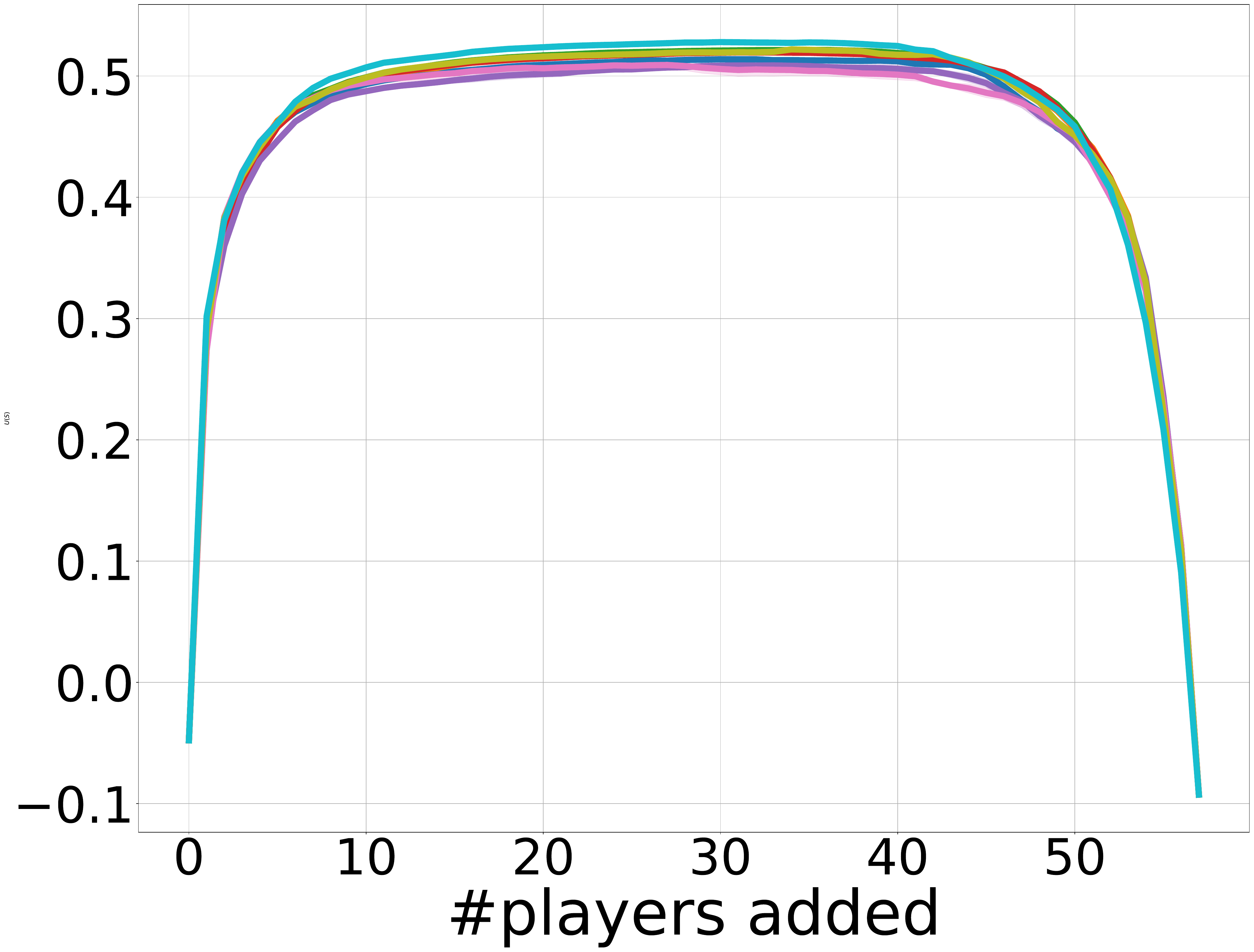} &
		\includegraphics[width=0.3\linewidth]{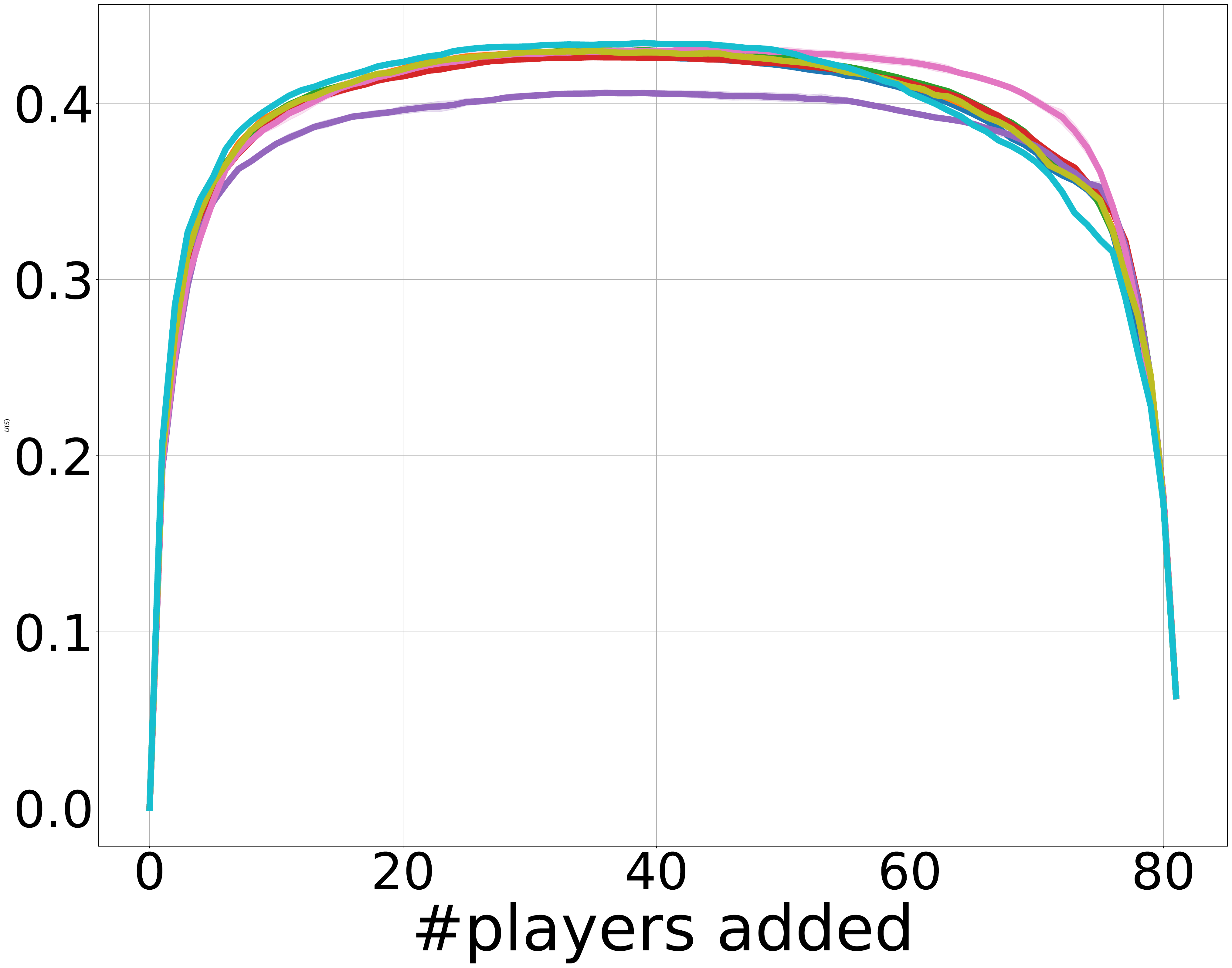} \\
		\phantom{aaaaa}jannis & \phantom{aaaaa}spambase & \phantom{aaaaa}superconduct
	\end{tabular}
	\caption{Comparison of attribution methods on six datasets for player ranking. A larger area under the curve indicates better performance. Since all utility functions contain more than $30$ players, nonlinear methods are approximated by sampling $\#\mathrm{players}\times 1,000$ subsets, with mean and standard deviation reported over five random seeds. Linear methods are computed exactly. Beta Shapley achieves the best inclusion AUC among the $10$ candidates.}
	\label{fig:appr-detailed}
\end{figure*}

\begin{figure*}[t]
	\centering
	\begin{tabular}{ccc}
		\multicolumn{3}{c}{\includegraphics[width=\linewidth]{legend_comparison.pdf}} \\
		\includegraphics[width=0.3\linewidth]{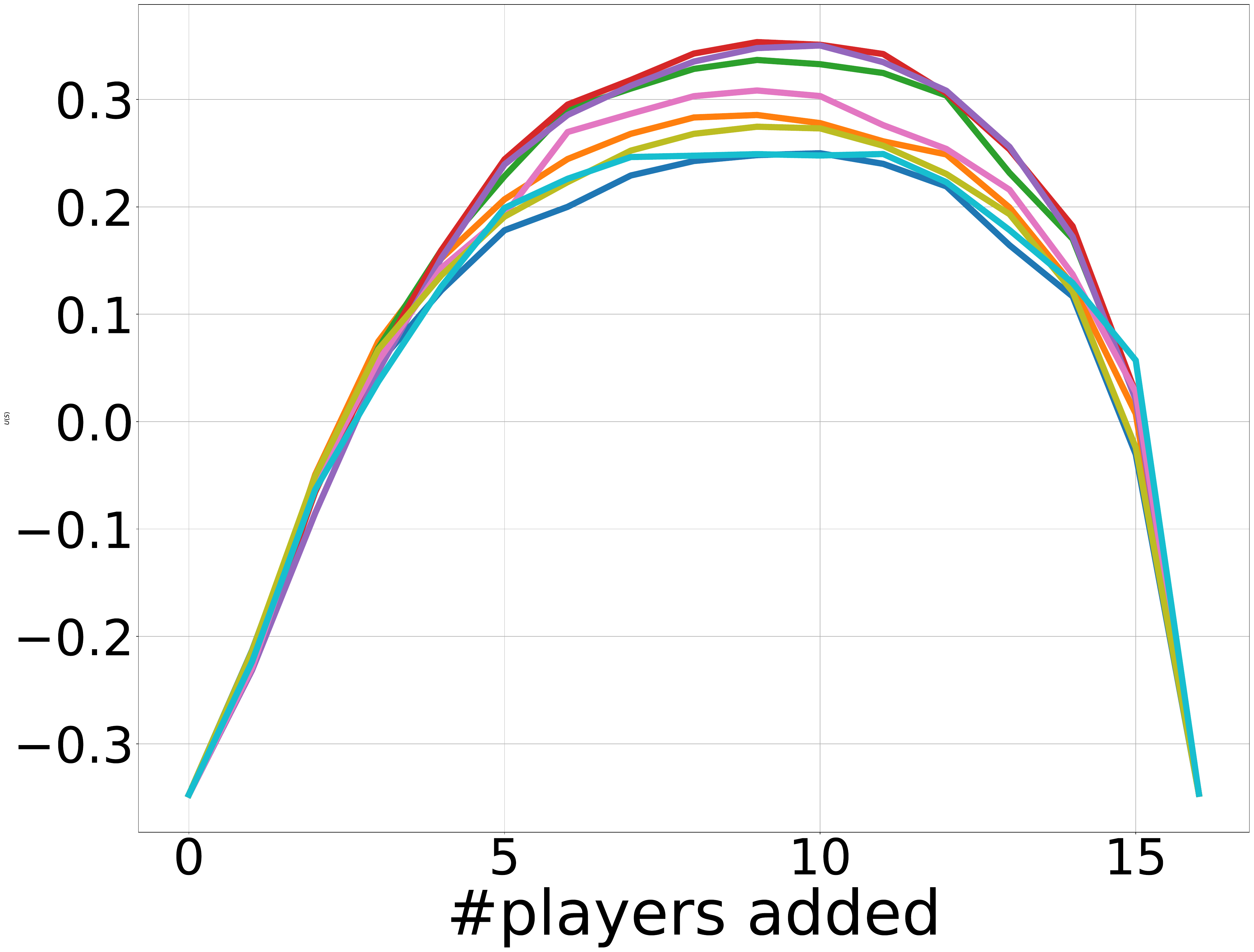} & \includegraphics[width=0.3\linewidth]{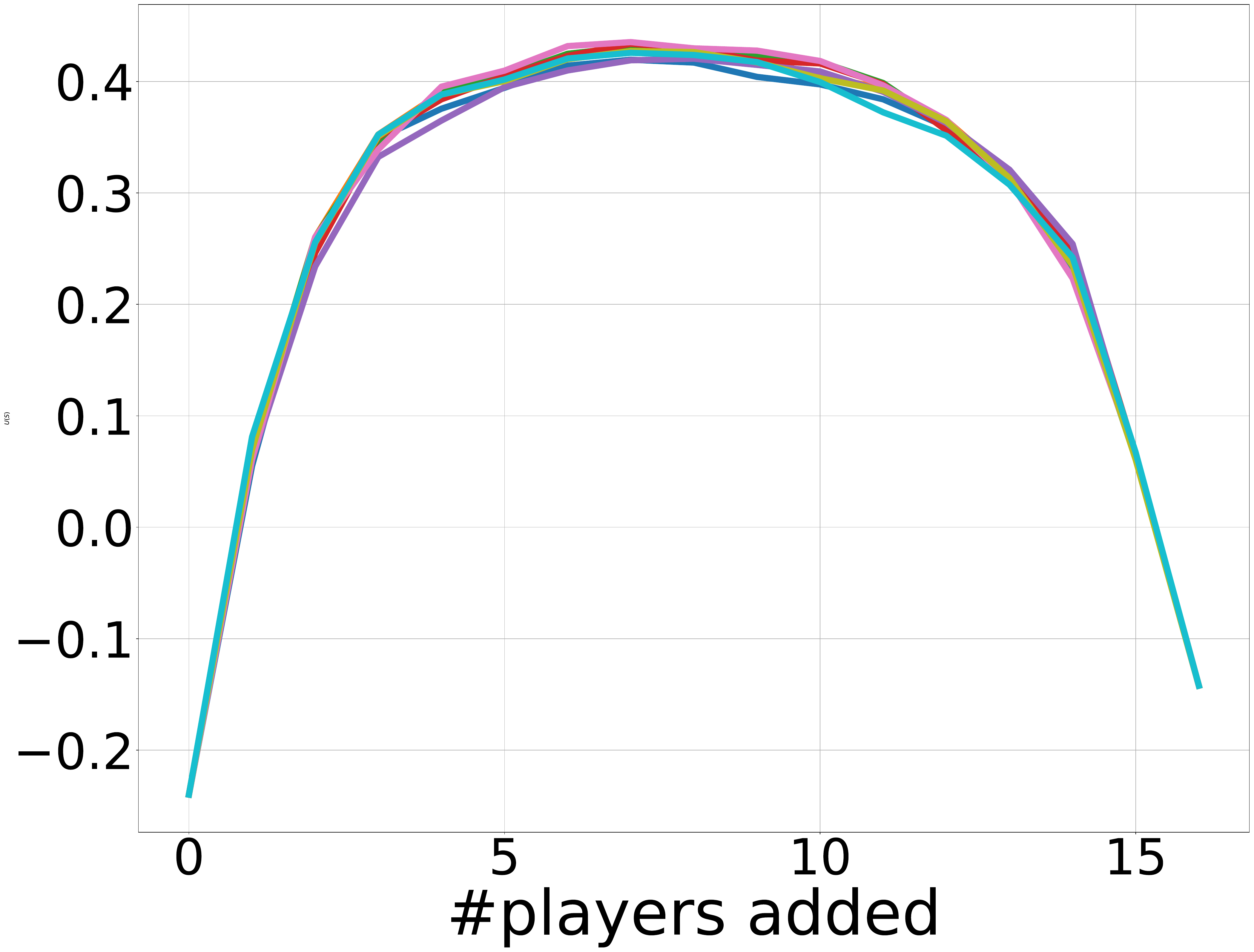} &
		\includegraphics[width=0.3\linewidth]{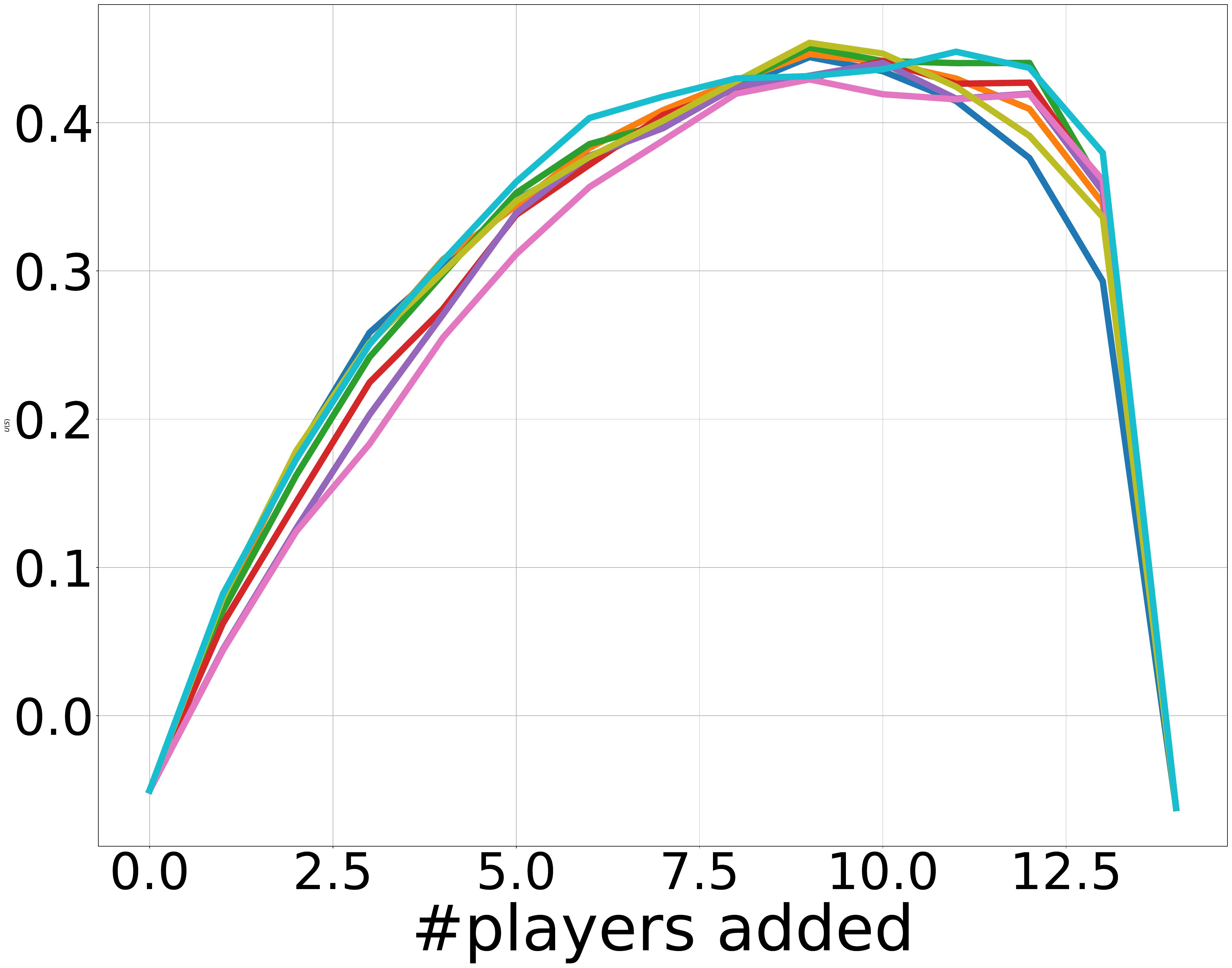} \\
		\phantom{aaaaa}letter & \phantom{aaaaa}pendigits & \phantom{aaaaa}EES\\
		\includegraphics[width=0.3\linewidth]{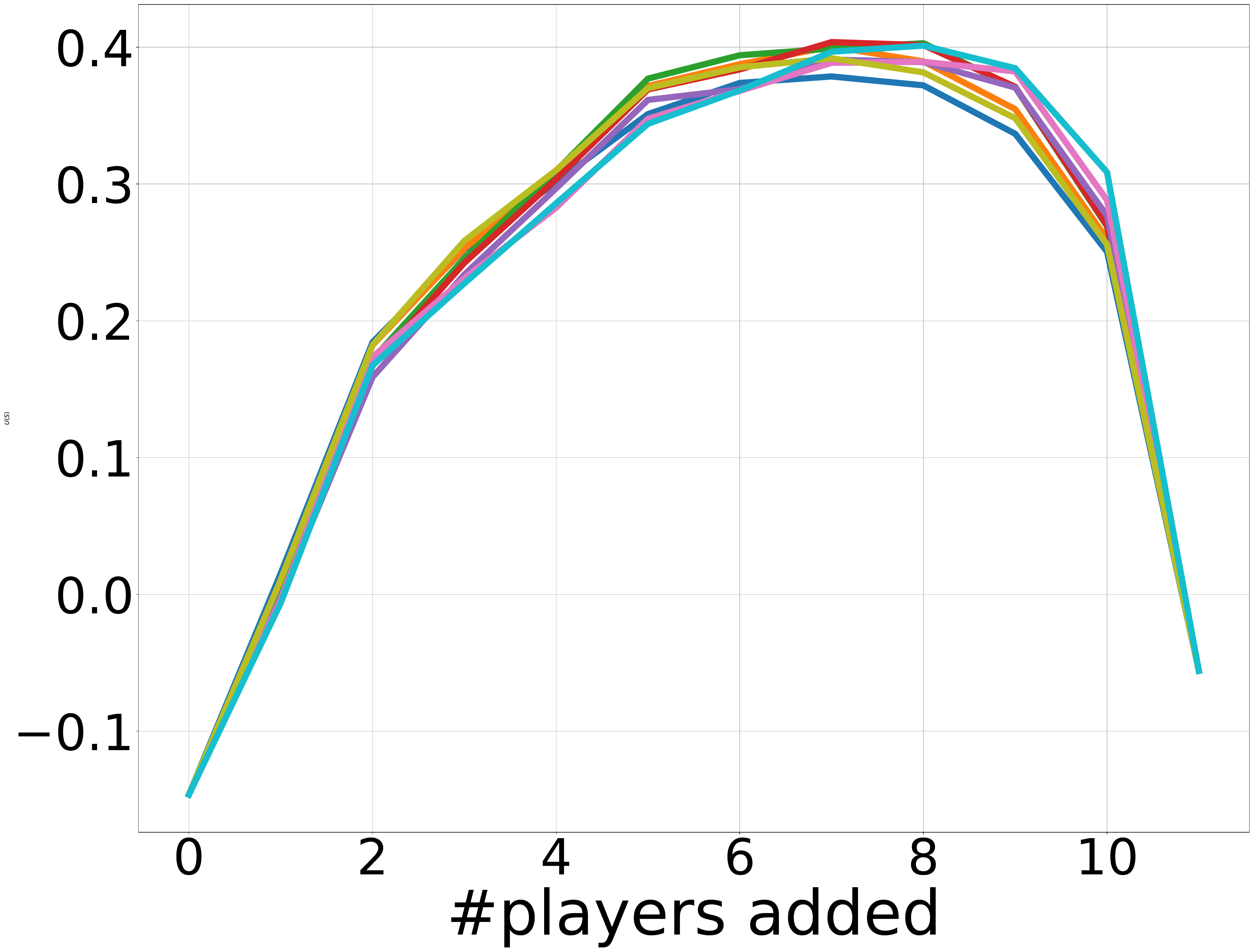} & \includegraphics[width=0.3\linewidth]{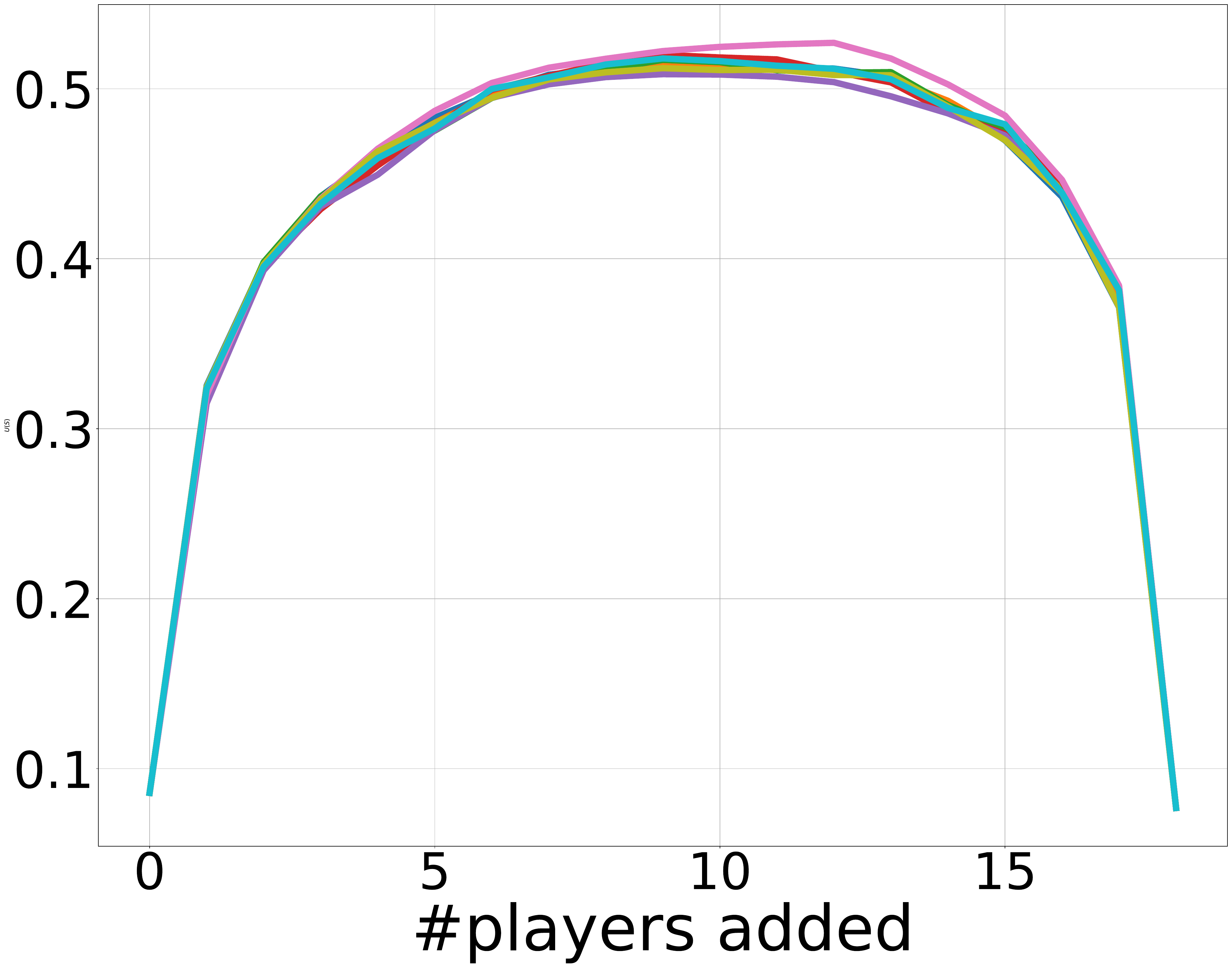} &
		\includegraphics[width=0.3\linewidth]{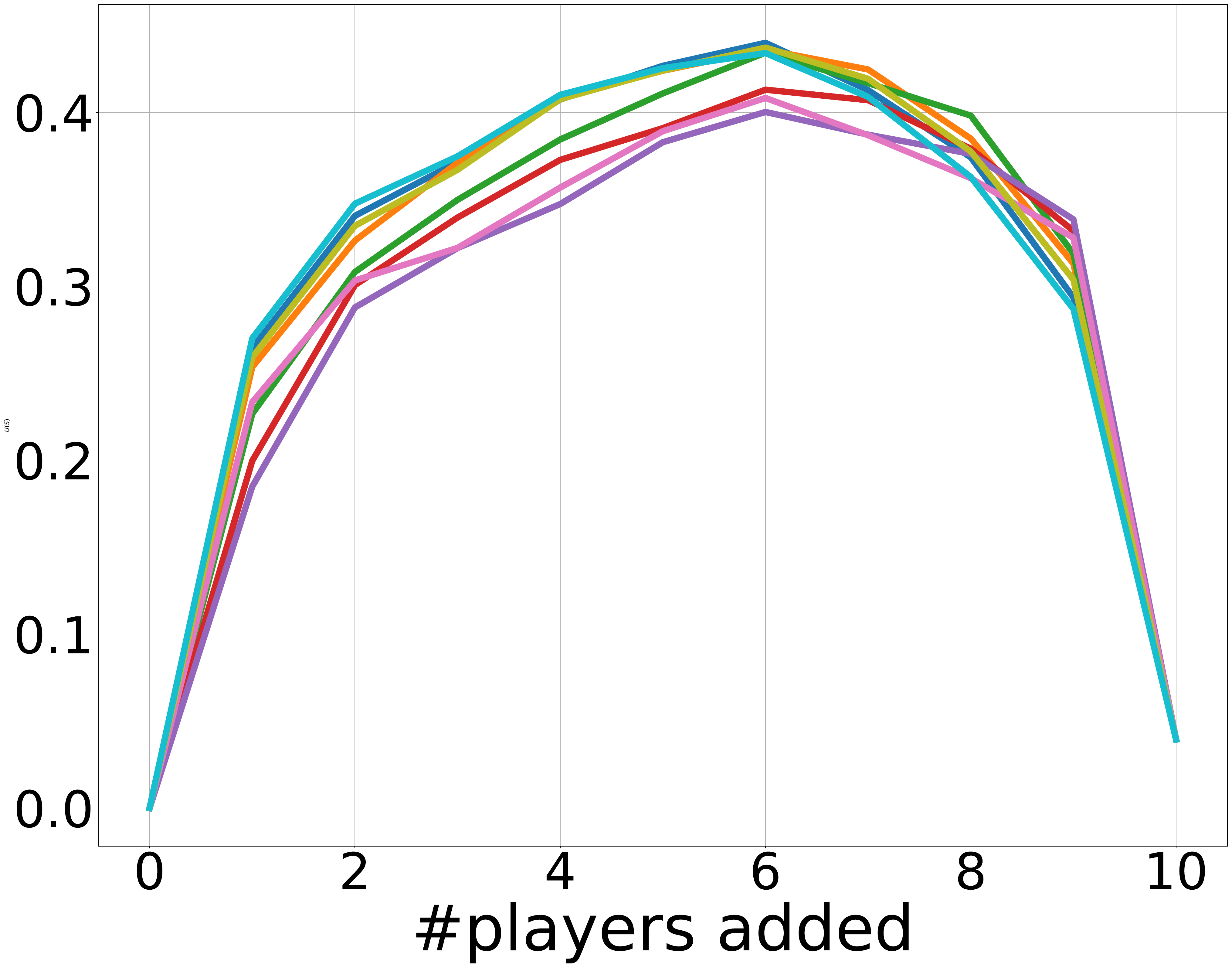} \\
		\phantom{aaaaa}WQW & \phantom{aaaaa}elevators & \phantom{aaaaa}credit
	\end{tabular}
	\caption{Comparison of attribution methods on six datasets for player ranking. A larger area under the curve indicates better performance. Since all utility functions have fewer than $20$ players, all subsets are enumerated exactly for all methods. Beta Shapley achieves the best inclusion AUC among the $10$ candidates.}
	\label{fig:exact-detailed}
\end{figure*}

\begin{figure*}[t]
	\centering
	\begin{tabular}{ccc}
		\includegraphics[width=0.3\linewidth]{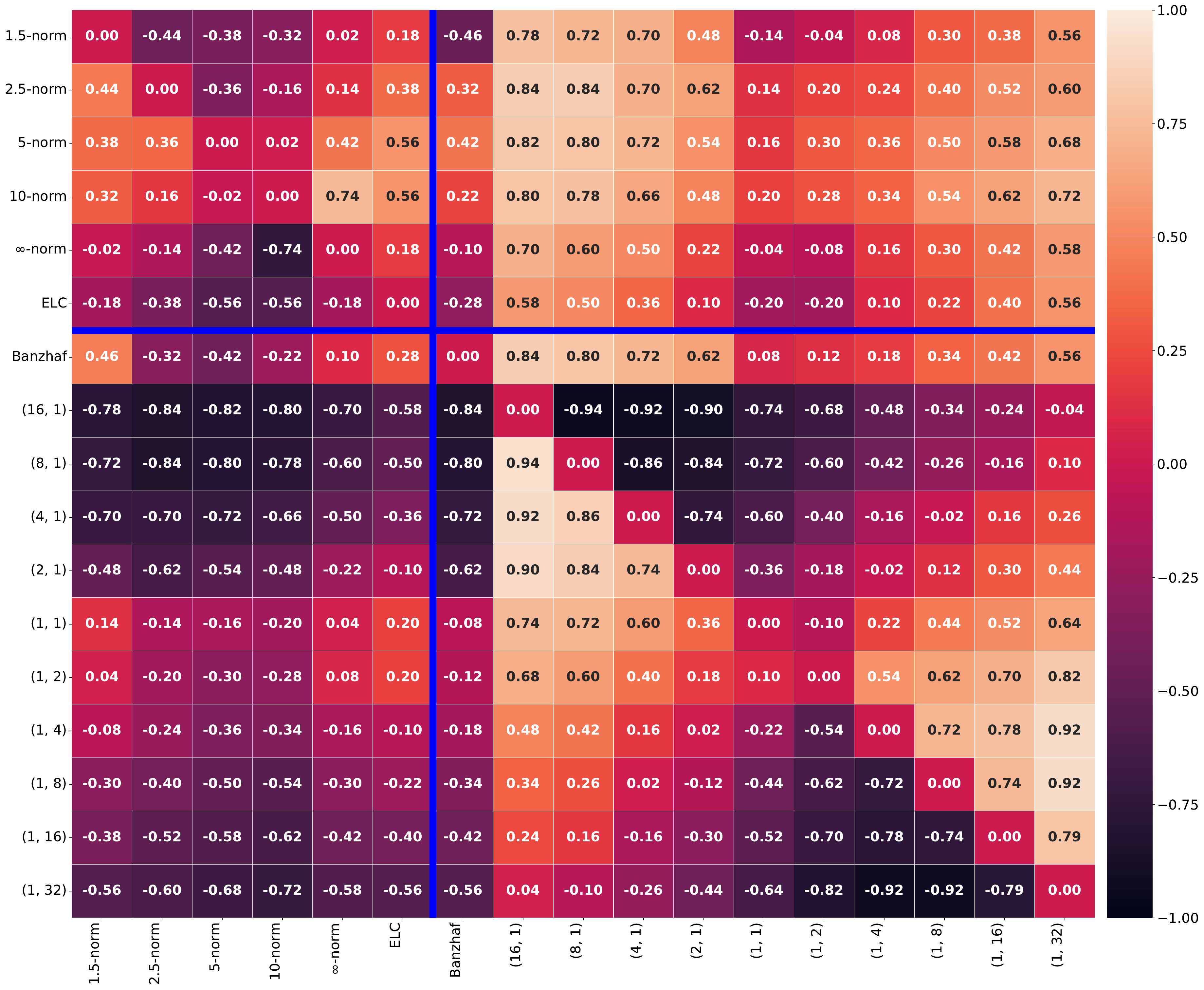} & \includegraphics[width=0.3\linewidth]{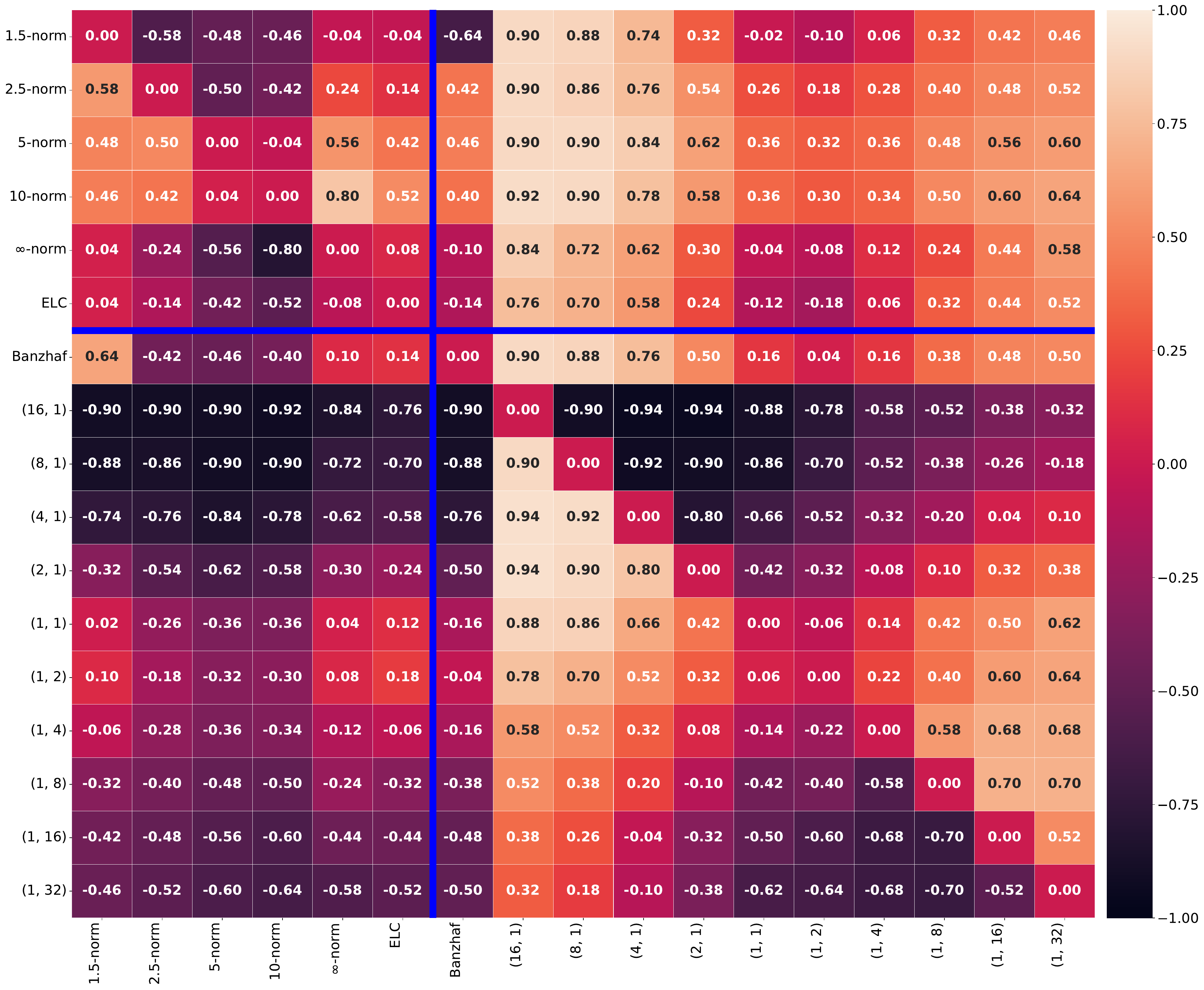} &
		\includegraphics[width=0.3\linewidth]{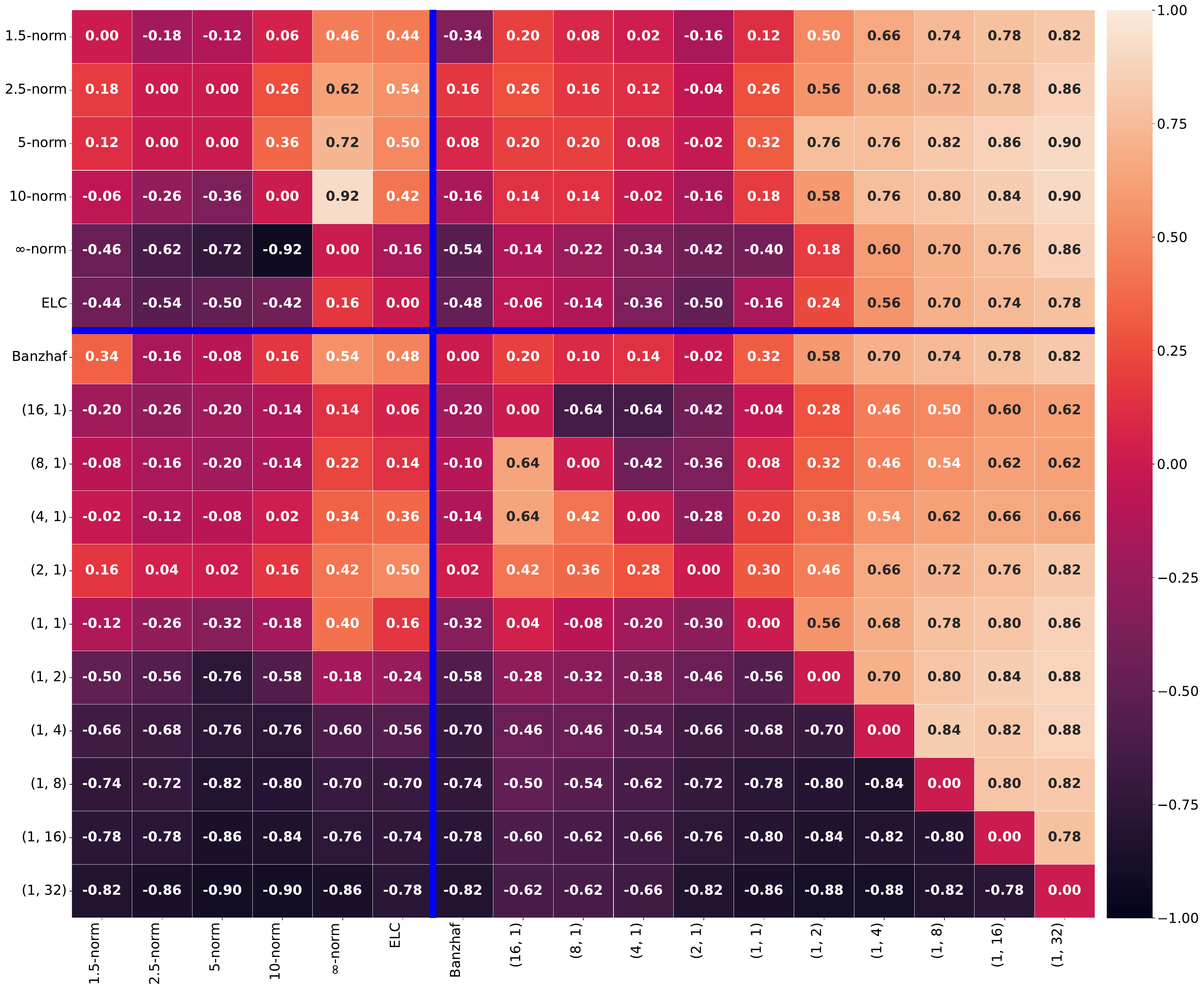} \\
		GPSP & FOTP & wave\_energy\\
		\includegraphics[width=0.3\linewidth]{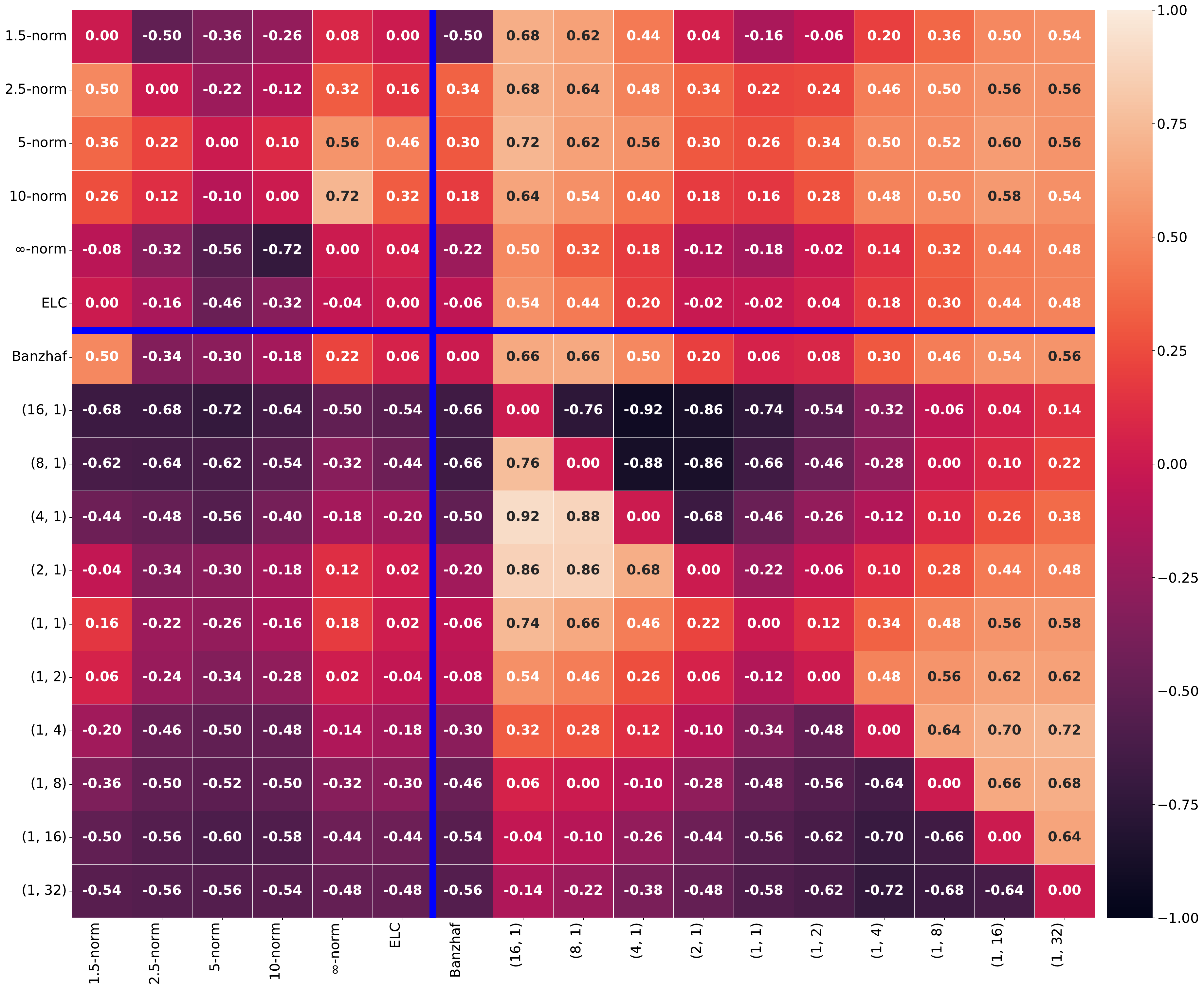} & \includegraphics[width=0.3\linewidth]{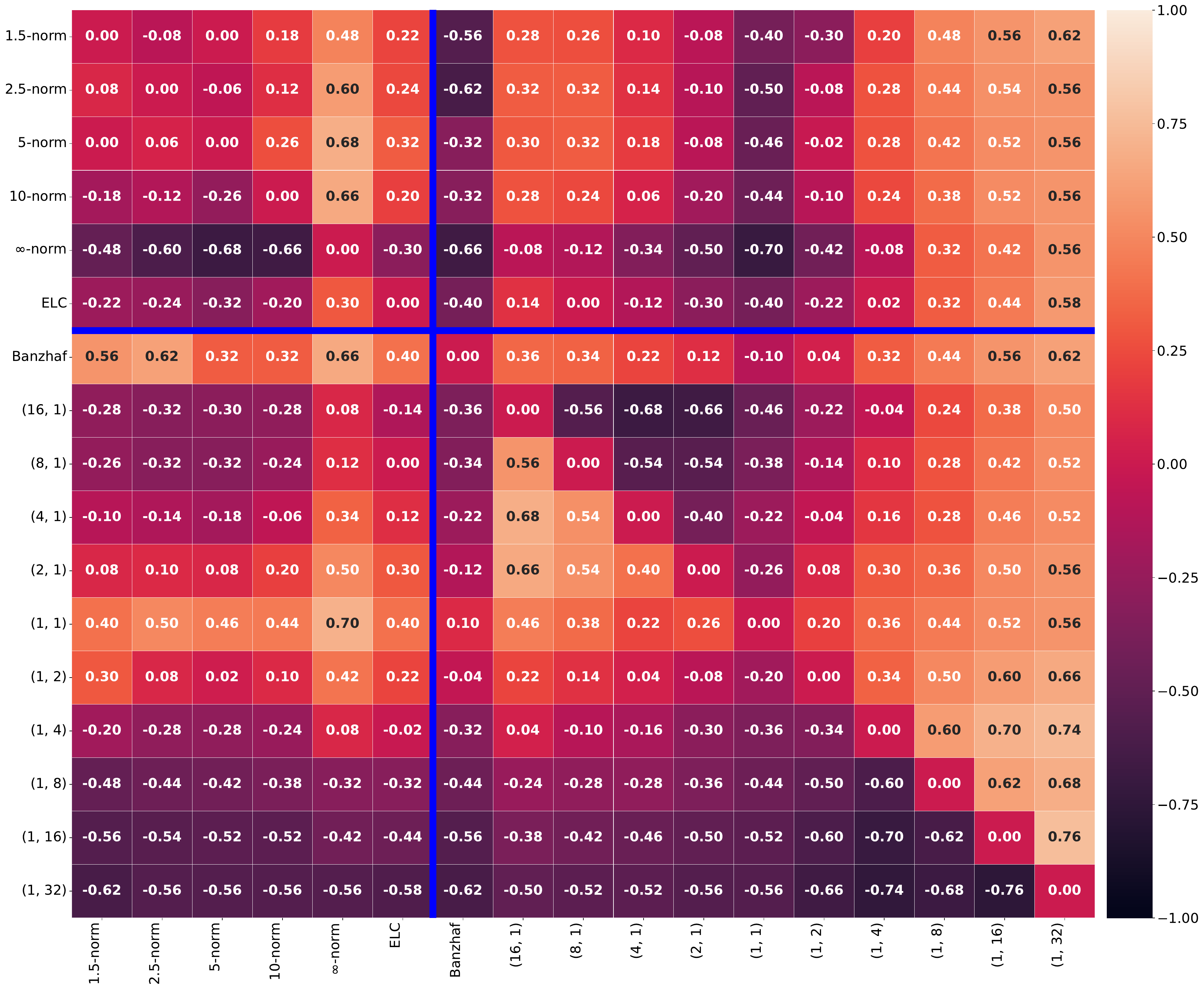} &
		\includegraphics[width=0.3\linewidth]{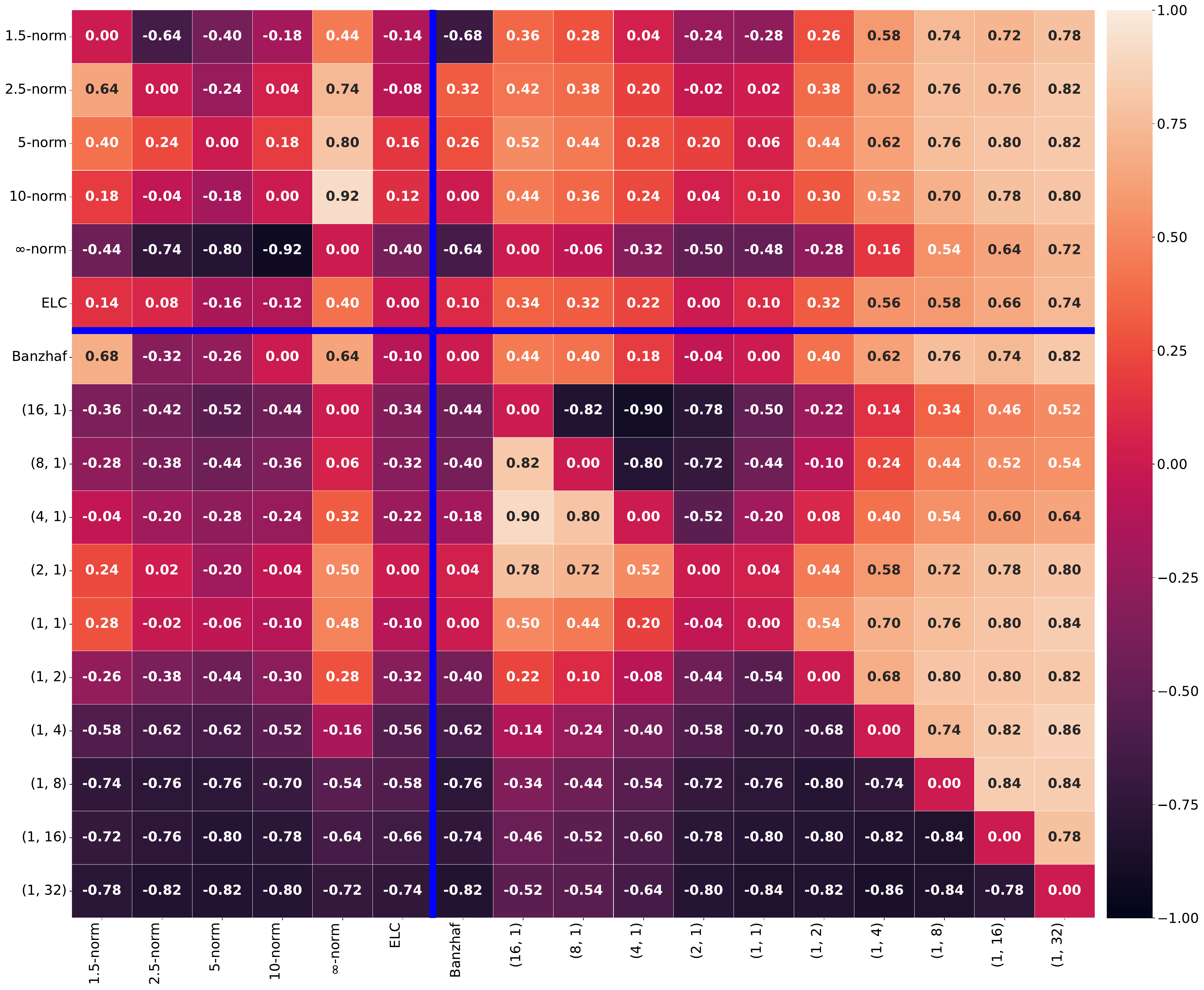} \\
		jannis & spambase & superconduct
	\end{tabular}
	\caption{Statistical comparison of attribution methods on six datasets for player ranking. Since all utility functions contain more than $30$ players, nonlinear methods are approximated by sampling $\#\mathrm{players}\times 1,000$ subsets, with $AUC$ averaged over five random seeds. Linear methods are computed exactly. The blue lines separate the linear and nonlinear methods.}
	\label{fig:heatmap1}
\end{figure*}

\begin{figure*}[t]
	\centering
	\begin{tabular}{ccc}
		\includegraphics[width=0.3\linewidth]{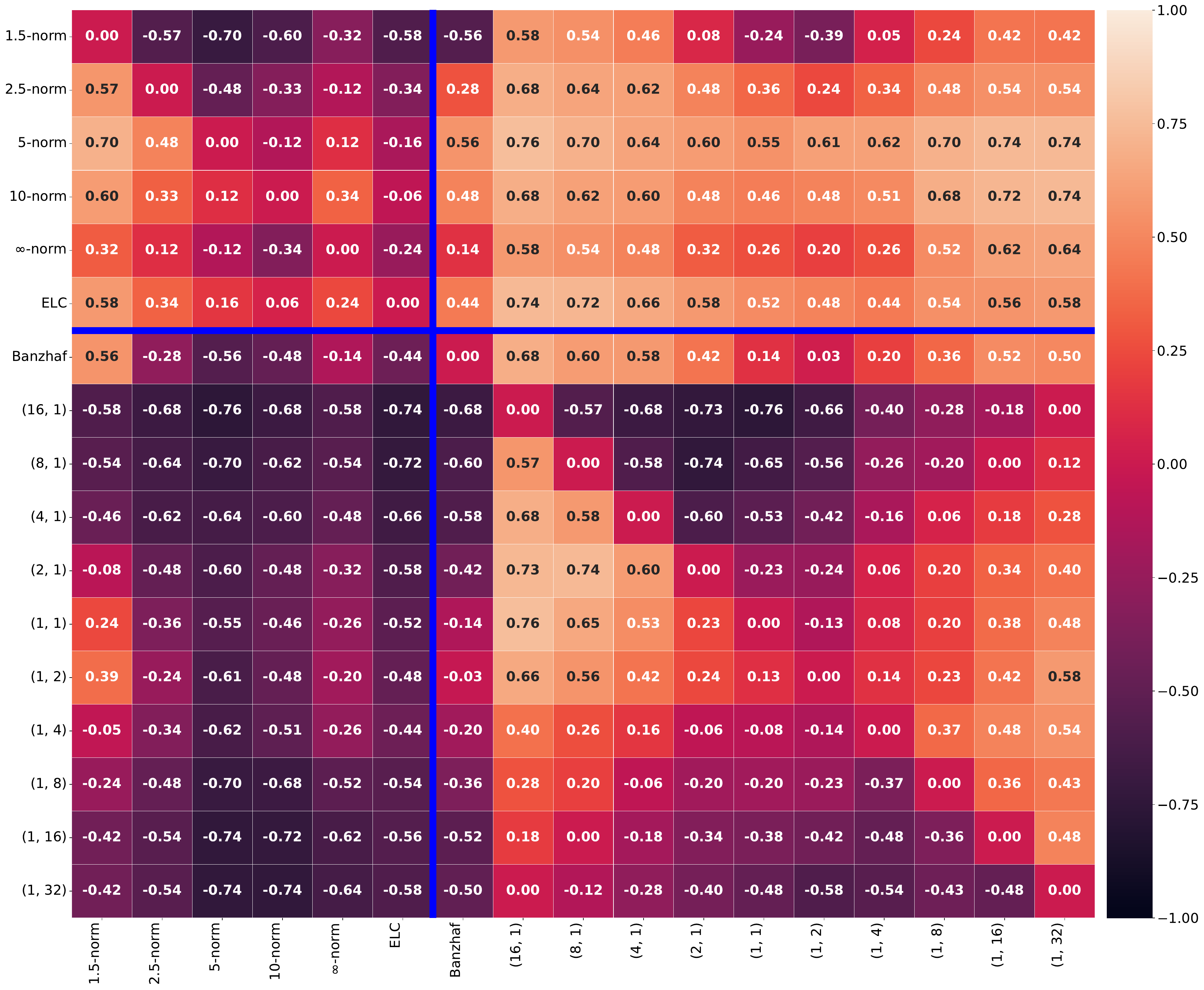} & \includegraphics[width=0.3\linewidth]{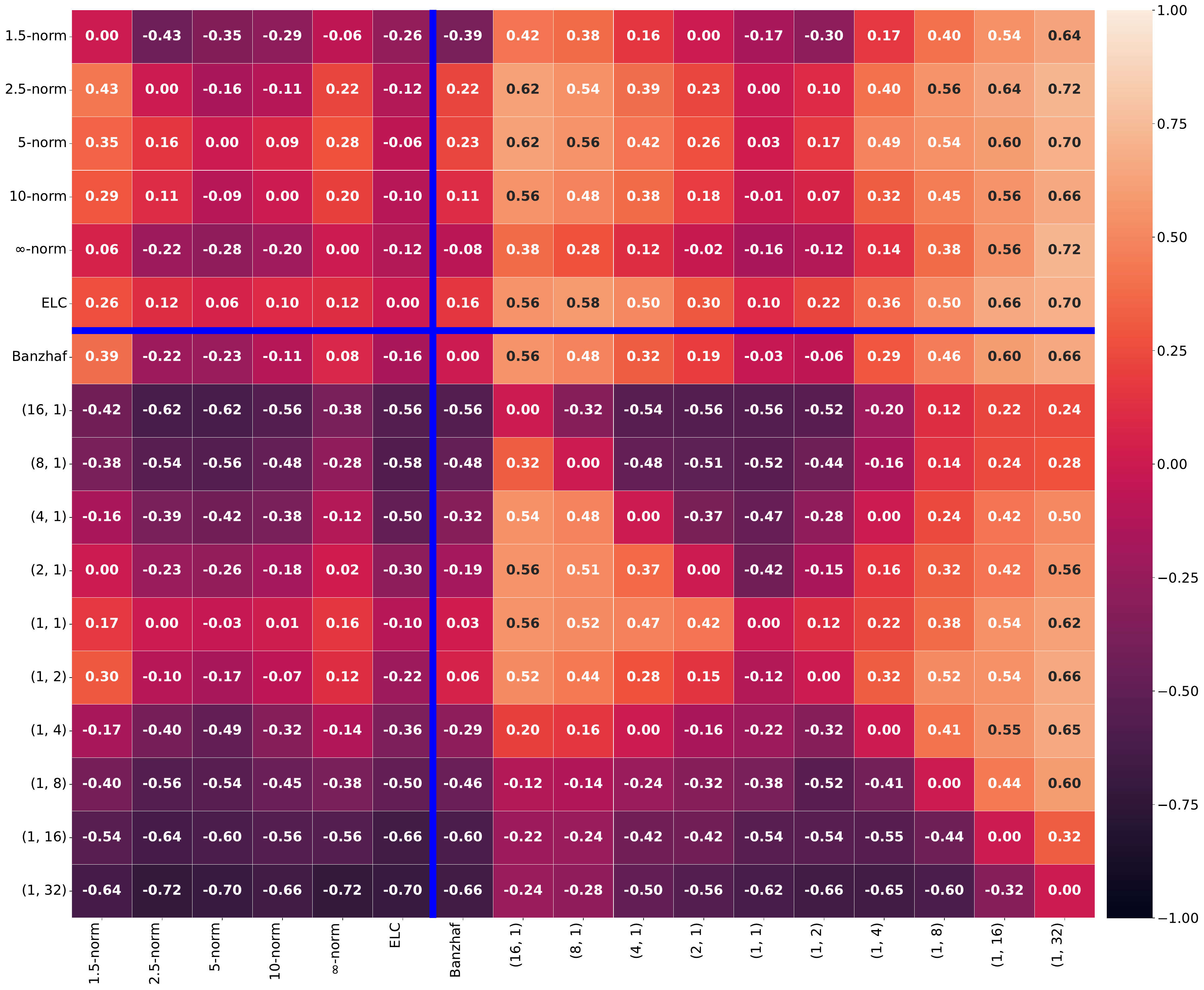} &
		\includegraphics[width=0.3\linewidth]{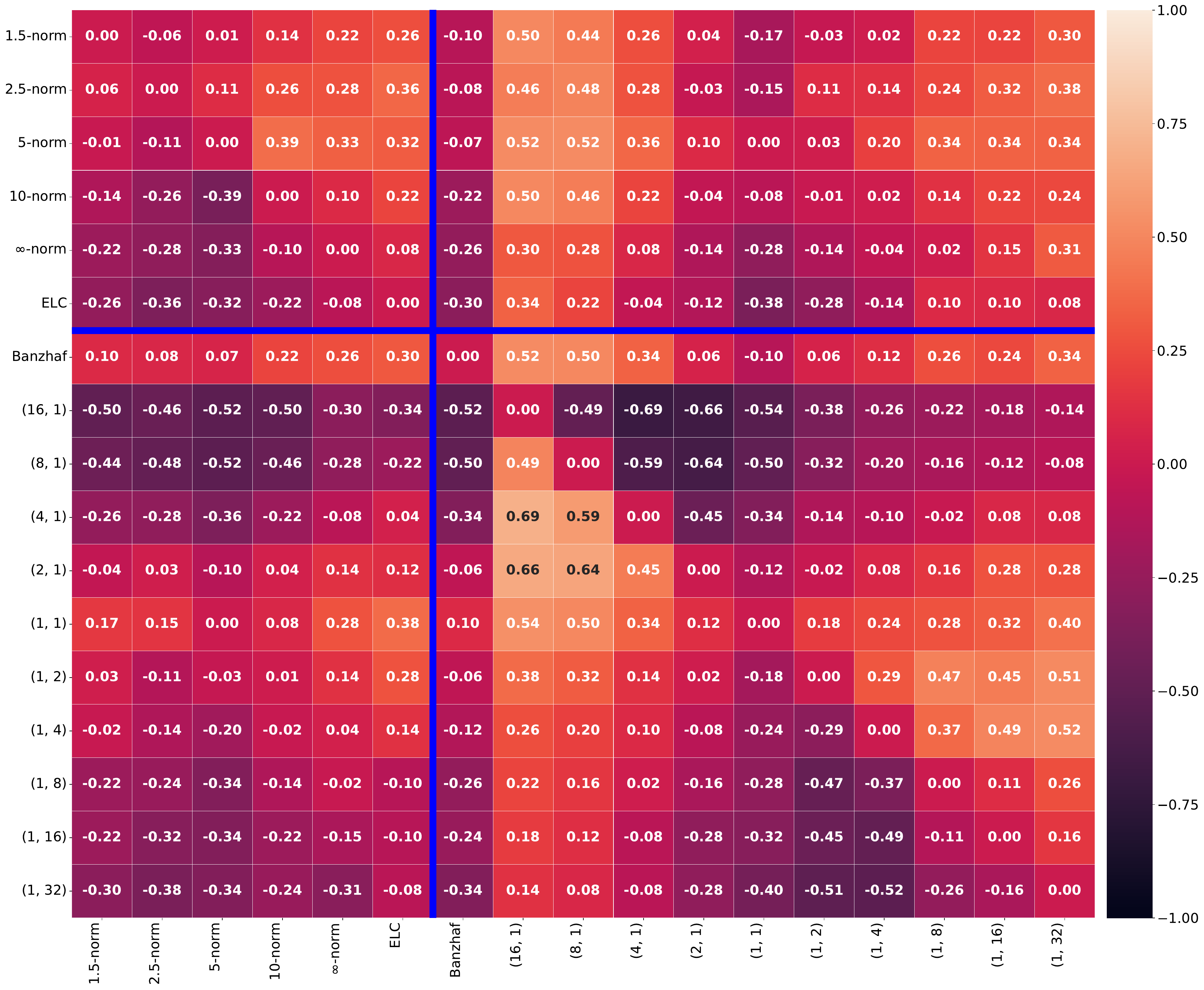} \\
		letter & pendigits & EES\\
		\includegraphics[width=0.3\linewidth]{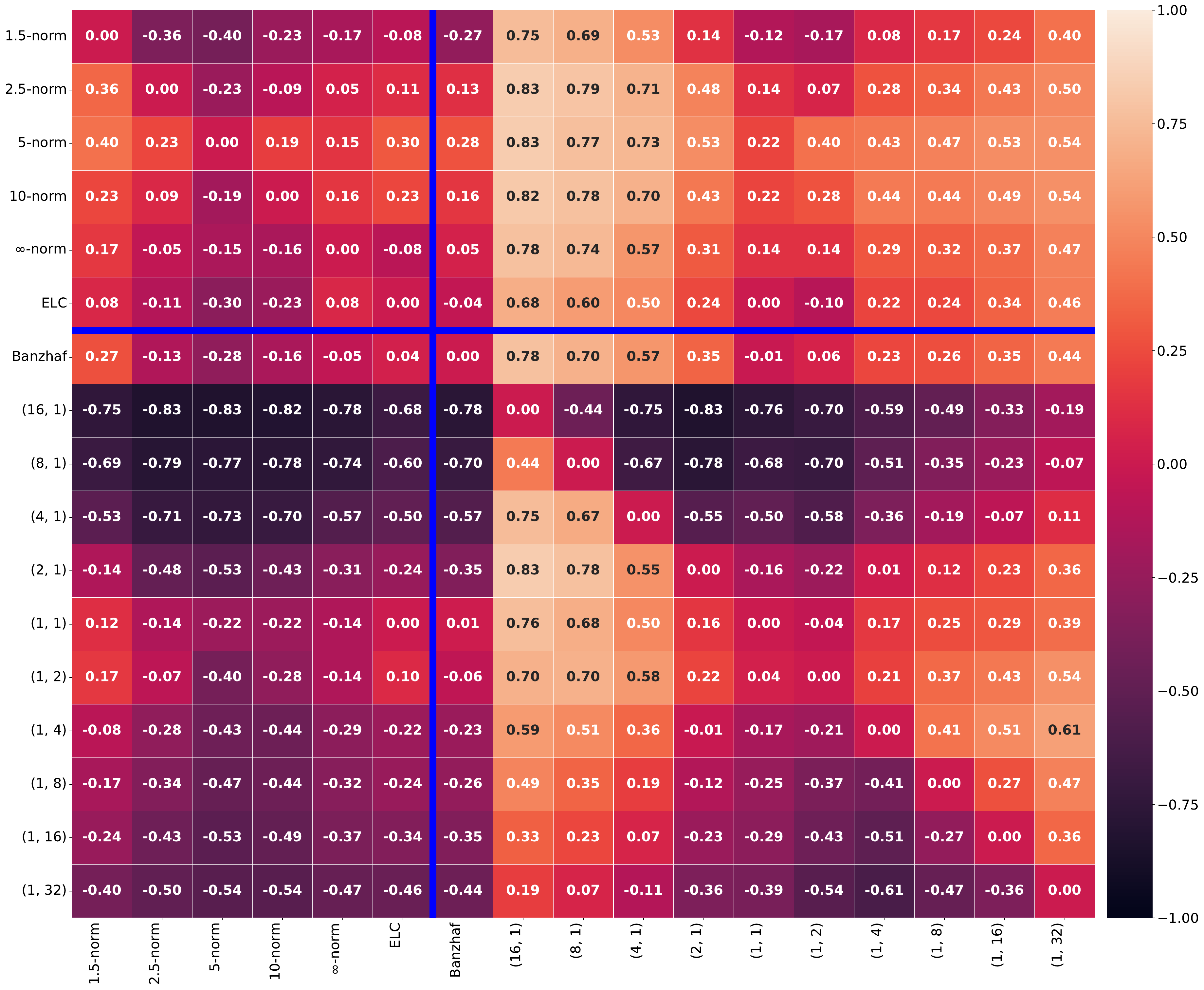} & \includegraphics[width=0.3\linewidth]{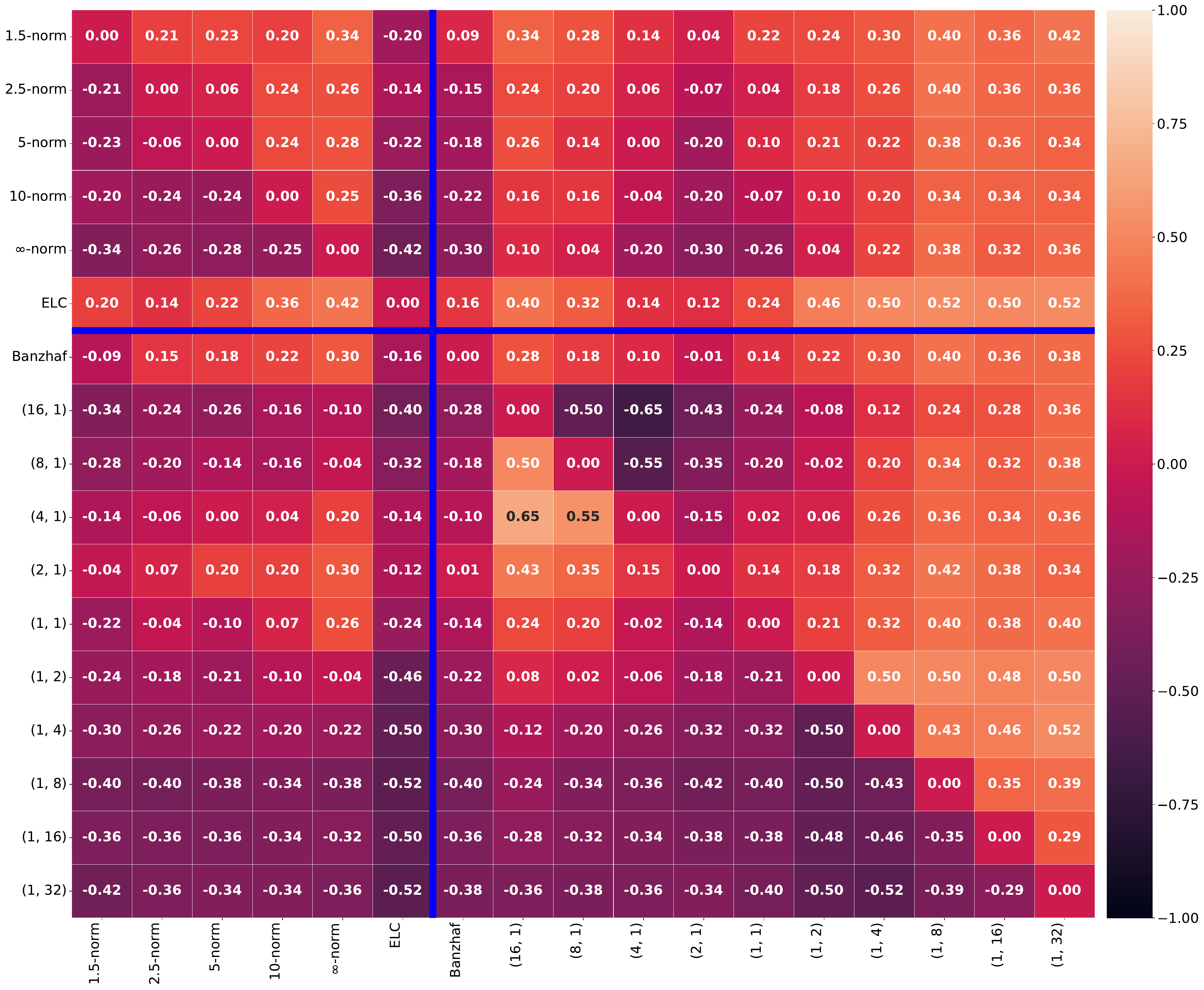} &
		\includegraphics[width=0.3\linewidth]{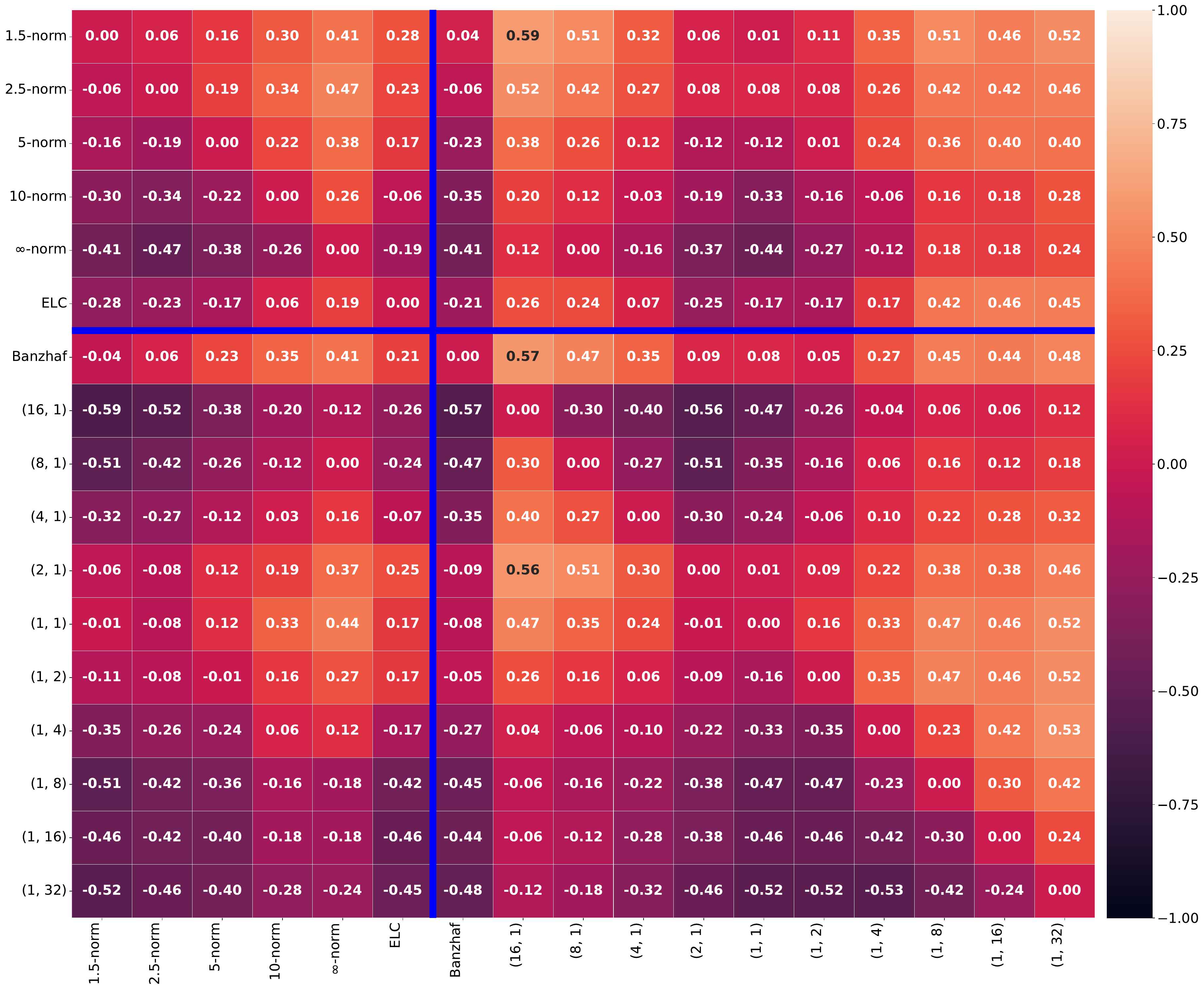} \\
		WQW & elevators & credit
	\end{tabular}
	\caption{Statistical comparison of attribution methods on six datasets for player ranking. Since all utility functions have fewer than $20$ players, all subsets are enumerated exactly for all methods. The blue lines separate the linear and nonlinear methods.}
	\label{fig:heatmap2}
\end{figure*}

%% file: files/references.bib
@inproceedings{
	li2026treegrad,
	title={{TreeGrad-Ranker}: Feature Ranking via {$O(L)$}-Time Gradients for Decision Trees},
	author={Li, Weida and Yu, Yaoliang and Low, Bryan Kian Hsiang},
	booktitle={The Fourteenth International Conference on Learning Representations},
	year={2026}
}

@inproceedings{petsiuk2018rise,
	title={Rise: Randomized input sampling for explanation of black-box models},
	author={Petsiuk, Vitali and Das, Abir and Saenko, Kate},
	booktitle={The British Machine Vision Conference ({BMVC})},
	year={2018}
}

@article{grabisch2000equivalent,
	title={Equivalent representations of set functions},
	author={Grabisch, Michel and Marichal, Jean-Luc and Roubens, Marc},
	journal={Mathematics of Operations Research},
	volume={25},
	number={2},
	pages={157--178},
	year={2000},
	publisher={INFORMS}
}

@inproceedings{wang2023data,
	title={Data {Banzhaf}: A robust data valuation framework for machine learning},
	author={Wang, Jiachen T and Jia, Ruoxi},
	booktitle={International Conference on Artificial Intelligence and Statistics},
	pages={6388--6421},
	year={2023},
	organization={PMLR}
}

@article{cohen2024contextcite,
	title={{ContextCite}: Attributing model generation to context},
	author={Cohen-Wang, Benjamin and Shah, Harshay and Georgiev, Kristian and Madry, Aleksander},
	journal={Advances in Neural Information Processing Systems},
	volume={37},
	pages={95764--95807},
	year={2024}
}

@article{yokote2016new,
	title={A new basis and the {Shapley} value},
	author={Yokote, Koji and Funaki, Yukihiko and Kamijo, Yoshio},
	journal={Mathematical social sciences},
	volume={80},
	pages={21--24},
	year={2016},
	publisher={Elsevier}
}

@article{diamond2016cvxpy,
	author  = {Steven Diamond and Stephen Boyd},
	title   = {{CVXPY}: {A} {P}ython-embedded modeling language for convex optimization},
	journal = {Journal of Machine Learning Research},
	year    = {2016},
	volume  = {17},
	number  = {83},
	pages   = {1--5},
}

@article{agrawal2018rewriting,
	author  = {Agrawal, Akshay and Verschueren, Robin and Diamond, Steven and Boyd, Stephen},
	title   = {A rewriting system for convex optimization problems},
	journal = {Journal of Control and Decision},
	year    = {2018},
	volume  = {5},
	number  = {1},
	pages   = {42--60},
}

@misc{eeg_eye_state_264,
	author       = {Roesler, Oliver},
	title        = {{EEG Eye State}},
	year         = {2013},
	howpublished = {UCI Machine Learning Repository},
	note         = {{DOI}: https://doi.org/10.24432/C57G7J}
}

@article{frey1991letter,
	title={Letter recognition using Holland-style adaptive classifiers},
	author={Frey, Peter W and Slate, David J},
	journal={Machine learning},
	volume={6},
	pages={161--182},
	year={1991},
	publisher={Springer}
}

@article{cortez2009modeling,
	title={Modeling wine preferences by data mining from physicochemical properties},
	author={Cortez, Paulo and Cerdeira, Ant{\'o}nio and Almeida, Fernando and Matos, Telmo and Reis, Jos{\'e}},
	journal={Decision support systems},
	volume={47},
	number={4},
	pages={547--553},
	year={2009},
	publisher={Elsevier}
}

@article{jia2019efficient,
	title={Efficient task-specific data valuation for nearest neighbor algorithms},
	author={Jia, Ruoxi and Dao, David and Wang, Boxin and Hubis, Frances Ann and Gurel, Nezihe Merve and Li, Bo and Zhang, Ce and Spanos, Costas and Song, Dawn},
	journal={Proceedings of the VLDB Endowment},
	volume={12},
	number={11},
	pages={1610--1623},
	year={2019},
	publisher={VLDB Endowment}
}

@article{blum1972direct,
	title={Direct proof of the existence theorem for quadratic programming},
	author={Blum, E and Oettli, Werner},
	journal={Operations Research},
	volume={20},
	number={1},
	pages={165--167},
	year={1972},
	publisher={INFORMS}
}

@article{arin2008axiomatic,
	title={An axiomatic approach to egalitarianism in {TU}-games},
	author={Arin, Javier and Kuipers, Jeroen and Vermeulen, Dries},
	journal={International Journal of Game Theory},
	volume={37},
	number={4},
	pages={565--580},
	year={2008},
	publisher={Springer}
}

@article{shapley1971cores,
	title={Cores of convex games},
	author={Shapley, Lloyd S},
	journal={International journal of game theory},
	volume={1},
	pages={11--26},
	year={1971},
	publisher={Springer}
}

@inproceedings{gemp2024approximating,
	title={Approximating the Core via Iterative Coalition Sampling},
	author={Gemp, Ian and Lanctot, Marc and Marris, Luke and Mao, Yiran and Du{\'e}{\~n}ez-Guzm{\'a}n, Edgar and Perrin, Sarah and Gyorgy, Andras and Elie, Romuald and Piliouras, Georgios and Kaisers, Michael and others},
	booktitle={Proceedings of the 23rd International Conference on Autonomous Agents and Multiagent Systems},
	pages={669--678},
	year={2024}
}

@inproceedings{yan2021if,
	title={If you like {Shapley} then you’ll love the core},
	author={Yan, Tom and Procaccia, Ariel D},
	booktitle={Proceedings of the AAAI Conference on Artificial Intelligence},
	volume={35},
	pages={5751--5759},
	year={2021}
}

@inproceedings{kumar2020problems,
	title={Problems with {Shapley}-value-based explanations as feature importance measures},
	author={Kumar, I Elizabeth and Venkatasubramanian, Suresh and Scheidegger, Carlos and Friedler, Sorelle},
	booktitle={International conference on machine learning},
	pages={5491--5500},
	year={2020},
	organization={PMLR}
}

@inproceedings{muschalik2024beyond,
	title={Beyond {treeSHAP}: Efficient computation of any-order {Shapley} interactions for tree ensembles},
	author={Muschalik, Maximilian and Fumagalli, Fabian and Hammer, Barbara and H{\"u}llermeier, Eyke},
	booktitle={Proceedings of the AAAI Conference on Artificial Intelligence},
	volume={38},
	pages={14388--14396},
	year={2024}
}

@inproceedings{madeo2013gesture,
	title={Gesture unit segmentation using support vector machines: Segmenting gestures from rest positions},
	author={Madeo, Renata CB and Lima, Clodoaldo AM and Peres, Sarajane M},
	booktitle={Proceedings of the 28th Annual ACM Symposium on Applied Computing},
	pages={46--52},
	year={2013}
}

@article{pedregosa2011scikit,
	title={Scikit-learn: Machine learning in {Python}},
	author={Pedregosa, Fabian and Varoquaux, Ga{\"e}l and Gramfort, Alexandre and Michel, Vincent and Thirion, Bertrand and Grisel, Olivier and Blondel, Mathieu and Prettenhofer, Peter and Weiss, Ron and Dubourg, Vincent and others},
	journal={Journal of machine learning research},
	volume={12},
	number={Oct},
	pages={2825--2830},
	year={2011}
}

@article{bridge2014machine,
	title={Machine learning for first-order theorem proving: Learning to select a good heuristic},
	author={Bridge, James P and Holden, Sean B and Paulson, Lawrence C},
	journal={Journal of automated reasoning},
	volume={53},
	pages={141--172},
	year={2014},
	publisher={Springer}
}

@article{bilodeau2024impossibility,
	title={Impossibility theorems for feature attribution},
	author={Bilodeau, Blair and Jaques, Natasha and Koh, Pang Wei and Kim, Been},
	journal={Proceedings of the National Academy of Sciences},
	volume={121},
	number={2},
	pages={e2304406120},
	year={2024},
	publisher={National Acad Sciences}
}

@inproceedings{
	covert2023learning,
	title={Learning to Estimate {Shapley} Values with Vision Transformers},
	author={Ian Connick Covert and Chanwoo Kim and Su-In Lee},
	booktitle={The Eleventh International Conference on Learning Representations },
	year={2023},
}

@inproceedings{jia2019towards,
	title={Towards Efficient Data Valuation Based on the {Shapley} Value},
	author={Jia, Ruoxi and Dao, David and Wang, Boxin and Hubis, Frances Ann and Hynes, Nick and G{\"u}rel, Nezihe Merve and Li, Bo and Zhang, Ce and Song, Dawn and Spanos, Costas J},
	booktitle={The 22nd International Conference on Artificial Intelligence and Statistics},
	pages={1167--1176},
	year={2019}
}

@article{lundberg2017unified,
	title={A Unified Approach to Interpreting Model Predictions},
	author={Lundberg, Scott M and Lee, Su-In},
	journal={Advances in Neural Information Processing Systems},
	volume={30},
	year={2017}
}

@article{marichal2011weighted,
	title={Weighted {Banzhaf} power and interaction indexes through weighted approximations of games},
	author={Marichal, Jean-Luc and Mathonet, Pierre},
	journal={European journal of operational research},
	volume={211},
	number={2},
	pages={352--358},
	year={2011},
}

@article{banzhaf1965weighted,
	title={Weighted voting doesn't work: A mathematical analysis},
	author={Banzhaf III, John F},
	journal={Rutgers Law Review},
	volume={19},
	number={2},
	pages={317--343},
	year={1965},
	publisher={HeinOnline}
}

@article{lundberg2020local,
	title={From Local Explanations to Global Understanding with Explainable {AI} for Trees},
	author={Lundberg, Scott M and Erion, Gabriel and Chen, Hugh and DeGrave, Alex and Prutkin, Jordan M and Nair, Bala and Katz, Ronit and Himmelfarb, Jonathan and Bansal, Nisha and Lee, Su-In},
	journal={Nature machine intelligence},
	volume={2},
	number={1},
	pages={56--67},
	year={2020}
}

@article{kwon2022weightedshap,
	title={{WeightedSHAP}: Analyzing and Improving {Shapley} Based Feature Attributions},
	author={Kwon, Yongchan and Zou, James Y},
	journal={Advances in Neural Information Processing Systems},
	volume={35},
	pages={34363--34376},
	year={2022}
}

@article{li2023robust,
	title={Robust Data Valuation with Weighted {Banzhaf} Values},
	author={Li, Weida and Yu, Yaoliang},
	journal={Advances in Neural Information Processing Systems},
	volume={36},
	year={2023}
}

@article{Yu2022linear,
	title={Linear {TreeShap}},
	author={Yu, Peng and Xu, Chao and Bifet, Albert and Read, Jesse},
	journal={Advances in Neural Information Processing Systems},
	volume={35},
	pages={25818--25828},
	year={2022}
}

@inproceedings{
	li2024one,
	title={One Sample Fits All: Approximating All Probabilistic Values Simultaneously and Efficiently},
	author={Weida Li and Yaoliang Yu},
	booktitle={The Thirty-eighth Annual Conference on Neural Information Processing Systems},
	year={2024},
}

@inproceedings{karczmarz2022improved,
	title={Improved Feature Importance Computation for Tree Models Based on the {Banzhaf} Value},
	author={Karczmarz, Adam and Michalak, Tomasz and Mukherjee, Anish and Sankowski, Piotr and Wygocki, Piotr},
	booktitle={Uncertainty in Artificial Intelligence},
	pages={969--979},
	year={2022},
	organization={PMLR}
}

@article{shapley1953value,
	title={A Value for N-Person Games},
	author={Shapley, Lloyd S},
	journal={Annals of Mathematics Studies},
	volume={28},
	year={1953},
	pages={307-317},
}

@inbook{weber1988probabilistic,
	title={Probabilistic values for games}, 
	booktitle={The Shapley Value: Essays in Honor of Lloyd S. Shapley}, 
	publisher={Cambridge University Press}, author={Weber, Robert James},
	year={1988}, 
	pages={101–120}
}

@article{dubey1981value,
	title={Value Theory without Efficiency},
	author={Dubey, Pradeep and Neyman, Abraham and Weber, Robert James},
	journal={Mathematics of Operations Research},
	volume={6},
	number={1},
	pages={122--128},
	year={1981},
}

@inproceedings{kwon2022beta,
	title={Beta {Shapley}: A Unified and Noise-reduced Data Valuation Framework for Machine Learning},
	author={Kwon, Yongchan and Zou, James Y},
	booktitle={International Conference on Artificial Intelligence and Statistics},
	pages={8780--8802},
	year={2022},
}

@inproceedings{CharnesGKR88,
	title={Extremal Principle Solutions of Games in Characteristic Function Form: Core, {C}hebychev and {S}hapley Value Generalizations},
	author={Abraham Charnes and Boaz Golany and Michael S. Keane and John J. Rousseau},
	booktitle={Econometrics of Planning and Efficiency},
	pages={123--133},
	year={1988},
}

@inproceedings{wang2024rethinking,
	title={Rethinking {Data Shapley} for Data Selection Tasks: Misleads and Merits},
	author={Jiachen T. Wang and Tianji Yang and James Zou and Yongchan Kwon and Ruoxi Jia},
	booktitle={Forty-first International Conference on Machine Learning},
	year={2024},
}
